\documentclass[11pt]{article}

\usepackage[margin=1in]{geometry}
\usepackage{amsmath, amssymb, amsthm, mathtools}
\usepackage{mathrsfs}
\usepackage{bm}
\usepackage{bbm}
\usepackage{enumitem}
\usepackage{hyperref}
\usepackage{xcolor}
\usepackage{microtype}
\usepackage{graphicx}
\usepackage{float}
\usepackage{subcaption}
\usepackage{booktabs,tabularx,array,ragged2e}
\graphicspath{{figures/}{figures/exp1/}{figures/align/}}

\hypersetup{
  colorlinks=true,
  linkcolor=blue,
  citecolor=blue,
  urlcolor=blue
}

\theoremstyle{plain}
\newtheorem{theorem}{Theorem}[section]
\newtheorem{lemma}[theorem]{Lemma}
\newtheorem{proposition}[theorem]{Proposition}
\newtheorem{corollary}[theorem]{Corollary}

\theoremstyle{definition}
\newtheorem{definition}[theorem]{Definition}
\newtheorem{assumption}[theorem]{Assumption}

\theoremstyle{remark}
\newtheorem{remark}[theorem]{Remark}

\newcommand{\E}{\mathbb{E}}
\newcommand{\R}{\mathbb{R}}
\newcommand{\N}{\mathbb{N}}
\newcommand{\Id}{\mathrm{I}}
\newcommand{\diag}{\mathrm{diag}}
\newcommand{\tr}{\mathrm{tr}}

\newcommand{\eps}{\varepsilon}
\newcommand{\GL}{\mathrm{GL}}
\newcommand{\Orth}{\mathrm{O}}
\newcommand{\SPD}{\mathrm{SPD}}
\newcommand{\BW}{\mathrm{BW}}
\newcommand{\TV}{\mathrm{TV}}
\newcommand{\KL}{\mathrm{KL}}
\newcommand{\Harm}{\mathrm{H}}
\newcommand{\Diag}{\mathrm{Diag}}
\newcommand{\Diagp}{\mathrm{Diag}^{+}}
\newcommand{\Cart}{\operatorname{Cart}}

\DeclareMathOperator*{\argmin}{arg\,min}

\newcolumntype{Y}{>{\RaggedRight\arraybackslash\hspace{0pt}}X}
\newcolumntype{L}[1]{>{\RaggedRight\arraybackslash\hspace{0pt}}p{#1}}

\title{\vspace{-1.0em}
Geometric and Spectral Alignment for Deep Neural Network I\\
\vspace{-0.5em}}

\author{
Ziran Liu$^{1,5*}$,
Wei Wang$^{2*}$,
Jinhao Wang$^{3}$,
Pengcheng Wang$^{4}$,
Xinyi Sui$^{3}$\\
Cihan Ruan$^{3}$,
Nam Ling$^{3}$,
Wei Jiang$^{2}$\\[0.5em]
$^1$Shanghai Institute for Mathematics and Interdisciplinary Sciences (SIMIS), \\Shanghai 200433, China\\
$^2$Futurewei Technologies, Inc., San Jose, CA 95131\\
$^3$Dept. of Computer Science and Engineering, Santa Clara University,\\
Santa Clara, CA 95050\\
$^4$Dept. of Computer Science, Purdue University, West Lafayette, IN 47906\\
$^5$Research Institute of Intelligent Complex Systems, Fudan University,\\ Shanghai 200433, China\\
[0.4em]
\texttt{zliu@simis.cn, rickweiwang@futurewei.com, jwang11@scu.edu}\\
\texttt{wang4495@purdue.edu, xsui@scu.edu, luciacihanruan@gmail.com}\\
\texttt{nling@scu.edu, wjiang@futurewei.com}
}

\date{}

\usepackage[backend=biber,style=numeric,sorting=nyt]{biblatex}
\begin{document}
\maketitle

\begin{abstract}
Deep residual architectures can be modeled by finite products of near-identity Jacobians.  This paper proves deterministic, conditional quotient-geometric estimates for the singular spectra of Frobenius-normalized layer factors along such chains and, more precisely, for a normalized top-radial Cartan coordinate together with a specified fitted power-law chart.  Full-rank factors are projected from $\GL(d)$ to the positive definite cone by $A\mapsto A^\top A$ and then to ordered eigenvalue data.  Under Frobenius normalization, exact power-law spectra form a trace-normalized one-parameter Cartan orbit with generalized harmonic normalization.  We identify this orbit as a Gibbs family on ranks, a Fisher information line, and a Bures--Wasserstein curve whose line element is $d/4$ times the Fisher information.  The main rigidity theorem is a slack-aware margin inequality: a geometric-mean interface radial amplitude, measurable non-backtracking slack, and signed orbit-chart residual variation control the displacement of the fitted Cartan coordinate.  In the zero-slack exact-chart case, a depth-$L$ uniform budget gives adjacent exponent drift of order $(\log M)/L$; in the general case the same estimate is augmented by explicitly measurable slack and chart-residual increments.  We separate the scalar top-radial theorem from full-Cartan spectral control, which requires an additional Bures/Hellinger chart-residual variation term.  Approximate-power-law versions, metric-chart residual versions, converse lower bounds, Fisher--KL/Bures action estimates, and raw plus Frobenius-normalized residual near-identity expansions are proved.  The near-identity results verify the transport-budget side of the theorem, while chart quality remains a separate measurable hypothesis.  Finally, effective rank is formulated as a quantile of the spectral energy measure, yielding finite-width power-law tail bounds and robust actual rank-window transition estimates.  Empirical static-weight exponent profiles are presented as coordinate diagnostics; a full verification of the theorem additionally requires interface budgets, non-backtracking slacks, and chart residuals for the same operator chain.

\end{abstract}

\newpage
\tableofcontents
\newpage

\section{Introduction}
Residual networks and Transformer blocks use additive layer maps in which each state is updated by a learned perturbation of the current state \cite{he2016deep,vaswani2017attention}.  In deep residual models, stability is closely tied to how these perturbations are scaled with depth; this point is already visible in depth-scaled initialization schemes such as Fixup \cite{zhang2019fixup} and in large-depth analyses of residual networks \cite{marion2025scalingresnets}.  Along a trajectory,
\[
 x_{k+1}=x_k+F_k(x_k),
 \qquad
 J_k(x_k)=\Id+DF_k(x_k),
\]
and the linearized transport from the input to depth $L$ is the ordered product
\[
J(x_0)=J_{L-1}(x_{L-1})\cdots J_0(x_0).
\]
When the factors are invertible, this family of products lies in the real general linear group $\GL(d)$ and satisfies the finite cocycle identity $J_{n:m}J_{m:\ell}=J_{n:\ell}$.  In deep-learning terms, this cocycle is the object that transports infinitesimal activation perturbations and backpropagated signals across depth.  The spectral problem studied in this article asks how far the singular spectrum of the layer factors can move when this transport remains budgeted.

The answer is naturally formulated after removing coordinate gauge.  Left multiplication by an orthogonal matrix changes the output basis of a layer but does not change its singular values.  Therefore the spectral state of a full-rank matrix $A$ is represented by the positive definite form
\[
\pi(A)=A^\top A\in\SPD(d),
\]
which realizes the quotient $\Orth(d)\backslash\GL(d)$ in concrete matrix form.  The ordered eigenvalues of $\pi(A)$, or equivalently the squared singular values of $A$, lie in the positive Weyl chamber after Cartan projection.  The quotient-radial part of Geometric and Spectral Alignment (GSA) is the study of this Cartan-projected path.

Under Frobenius normalization, a finite-width power-law spectrum defines a trace-normalized one-parameter Cartan orbit
\[
G_d(\alpha)=\frac{d\,e^{-2\alpha D_d}}{\tr(e^{-2\alpha D_d})},
\qquad
D_d=\diag(\log 1,\ldots,\log d).
\]
This orbit has three equivalent interpretations used throughout the paper: it is a curve in the maximal flat of $\SPD(d)$, a Gibbs family on the rank set $\{1,\ldots,d\}$, and a Fisher information line whose Bures--Wasserstein length can be computed exactly.  Heavy-tailed/self-regularized spectral measurements of trained networks provide the empirical motivation for using such finite power-law coordinates \cite{martin2021predicting}; the results below are deterministic statements conditional on the stated orbit-chart and transport-budget hypotheses.

The main deterministic implication proved here is the more precise statement
\[
\begin{gathered}
\text{interface radial budget}
+\text{non-backtracking slack}
+\text{orbit-chart residual control}\\
\Longrightarrow
\text{short fitted Cartan-coordinate path}.
\end{gathered}
\]
The controlled radial observable is the normalized top-eigenvalue functional
$\rho_d(P)=\frac12\log(d\lambda_1(P)/\tr P)$.  Thus the first conclusion is a scalar quotient-radial estimate.  Full Cartan-spectral shortness is obtained only after adding an explicit full-spectrum chart error measuring how far $\Cart(P_k)$ lies from the fitted orbit point $G_d(\alpha_k)$.  This separation is important: the fitted exponent path can be short even when a layer has a poor power-law spectral chart, and the theorem records that defect as a measurable error term.  The empirical section therefore measures the visible coordinate $\widehat\alpha_k$ and states the corresponding margin quantities, namely local interface budgets, non-backtracking slacks, and chart residuals.

\subsection{Minimal geometric notation used in the main statements}
\label{sec:minimal-geometric-notation}

The main statements use only finite-dimensional matrix geometry.  The real general linear group is
\[
\GL(d):=\{A\in\R^{d\times d}:\det A\ne 0\},
\]
with matrix multiplication.  A finite multiplicative cocycle over the layer index set, in the deterministic finite-depth sense of cocycle transport \cite{arnold1998random}, is the family
\[
J_{n:m}:=J_{n-1}J_{n-2}\cdots J_m,\qquad 0\le m<n\le L,
\]
which satisfies $J_{n:m}J_{m:\ell}=J_{n:\ell}$ whenever $0\le\ell<m<n\le L$.  For a residual network, $J_{n:m}$ is the Jacobian transporting an infinitesimal perturbation from layer $m$ to layer $n$.

Let
\[
\Orth(d):=\{Q\in\R^{d\times d}:Q^\top Q=\Id\}
\]
be the orthogonal group and let
\[
\SPD(d):=\{P=P^\top:x^\top Px>0\text{ for all }x\ne0\}
\]
be the manifold of positive definite forms.  For $A\in\GL(d)$, the quotient projection used in the paper is
\[
\pi(A):=A^\top A\in\SPD(d).
\]
It removes left orthogonal gauge, since $(QA)^\top(QA)=A^\top A$ for every $Q\in\Orth(d)$.  Conversely, if $A^\top A=B^\top B$ for $A,B\in\GL(d)$, then $B=QA$ for $Q=BA^{-1}\in\Orth(d)$.  Hence $\SPD(d)$ realizes the quotient $\Orth(d)\backslash\GL(d)$ in concrete matrix form.

For $P\in\SPD(d)$, we use two equivalent radial representatives.  The positive Cartan representative is
\[
\Cart(P):=\diag(\lambda_1(P),\ldots,\lambda_d(P)),\qquad \lambda_1(P)\ge\cdots\ge\lambda_d(P)>0,
\]
and the logarithmic Cartan vector is
\[
\mathbf c(P):=(\log\lambda_1(P),\ldots,\log\lambda_d(P)).
\]
The positive representative is convenient for energy calculations, while $\mathbf c(P)$ is the usual Weyl-chamber coordinate in logarithmic form.  Both keep the ordered spectrum of the positive form and discard singular-vector gauge.  In deep-learning terms, this radial data controls amplification, normalized spectral shape, and effective-rank windows.  The angular singular-vector data is studied in the companion article on physical alignment.

\begin{table}[H]
\centering
\footnotesize
\renewcommand{\arraystretch}{1.24}
\setlength{\tabcolsep}{3.2pt}
\begin{tabularx}{\textwidth}{L{0.22\textwidth}L{0.36\textwidth}Y}
\toprule
\textbf{Object in this article} & \textbf{Formal definition} & \textbf{Deep-learning meaning}\\
\midrule
Residual Jacobian cocycle & Layerwise products $J_{n:m}=J_{n-1}\cdots J_m$; see Definition~\ref{def:jacobian-chain}. & Local linear transport of activations, perturbations, and backpropagated signals between depths.\\
Quotient projection & $\pi(A)=A^\top A\in\SPD(d)$; see Definition~\ref{def:quotient-cartan}. & Removes orthogonal coordinate gauge and keeps the amplification geometry of a layer.\\
Cartan coordinate & $g_d(\alpha)=\log\sqrt{d/\Harm_{d,2\alpha}}$ on the power-law orbit; see equation~\eqref{eq:g-def} and Proposition~\ref{prop:cartan-membership}. & Scalar coordinate of spectral shape; empirically estimated by fitting power-law singular spectra.\\
Spectral energy measure & $\mu_W=\sum_i \sigma_i(W)^2\|W\|_F^{-2}\delta_i$; see Definition~\ref{def:effective-rank}. & Energy distribution over singular directions, used to choose dominant rank windows.\\
Effective-rank window & $R_\eps(W)=\min\{r:\mu_W(\{1,\dots,r\})\ge1-\eps\}$; see Definition~\ref{def:effective-rank}. & Number of dominant singular channels needed to retain prescribed layer energy.\\
\bottomrule
\end{tabularx}
\caption{Finite-dimensional objects used in the spectral part of GSA.  The formal definitions are all stated in this article; the third column records the corresponding quantity in a deep-network layer.}
\label{tab:partI-object-dictionary}
\end{table}

\section{Geometric setup: residual Jacobian chains, quotient projection, and Cartan reduction}
\label{sec:model}

The basic object generated by a deep residual architecture is a multiplicative cocycle in $\GL(d)$.
Its spectral content is not a datum of $\GL(d)$ itself but of the quotient by orthogonal gauge.
This is the standard symmetric-space viewpoint on positive definite forms and Cartan/Weyl-chamber variables \cite{helgason2001differential,bhatia2007positive}.
We fix that quotient picture from the outset because the main theorems act separately on quotient-radial and angular data.

\subsection{Residual Jacobian cocycle in \texorpdfstring{$\GL(d)$}{GL(d)}}

Fix a width $d\in\N$ and a depth $L\in\N$.
Consider the residual recursion
\begin{equation}\label{eq:residual-recursion}
x_{k+1}=x_k+F_k(x_k),\qquad k=0,1,\dots,L-1,
\end{equation}
where each $F_k:\R^d\to\R^d$ is $C^1$.

\begin{definition}[Residual Jacobian chain as a discrete cocycle]
\label{def:jacobian-chain}
For a trajectory $(x_k)_{k=0}^{L}$ generated by \eqref{eq:residual-recursion}, define the layer Jacobians
\begin{equation}\label{eq:layer-jacobian}
J_k(x_k):=\frac{\partial x_{k+1}}{\partial x_k}=\Id+DF_k(x_k)\in\R^{d\times d},
\end{equation}
and the end-to-end Jacobian
\begin{equation}\label{eq:jacobian-chain}
J(x_0):=Df(x_0)=\prod_{k=0}^{L-1}J_k(x_k),
\end{equation}
with the convention that the rightmost factor is $J_0(x_0)$.
Whenever the factors are full rank, the sequence $(J_k)$ is a discrete path in the Lie group $\GL(d)$.
\end{definition}

\begin{lemma}[Chain rule in cocycle form]
\label{lem:chain-rule}
Under the setup above, the Jacobian of the network map is given by \eqref{eq:jacobian-chain}.
\end{lemma}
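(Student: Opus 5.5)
The plan is an induction on the depth using the ordinary chain rule for $C^1$ maps between Euclidean spaces. First I fix the layer maps $\Phi_k(x):=x+F_k(x)$, $k=0,\dots,L-1$. Each $\Phi_k$ is $C^1$ because $F_k$ is, and its derivative is $D\Phi_k(x)=\Id+DF_k(x)$. Evaluated along the trajectory, this is exactly $J_k(x_k)$ in \eqref{eq:layer-jacobian}. The network map is the composition $f=\Phi_{L-1}\circ\cdots\circ\Phi_0$, and the trajectory is recovered as $x_k=\Phi_{k-1}\circ\cdots\circ\Phi_0(x_0)$.

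Next I define the partial compositions $f_n:=\Phi_{n-1}\circ\cdots\circ\Phi_0$, with $f_0=\mathrm{id}$. I prove by induction on $n$ that
\[
Df_n(x_0)=J_{n-1}(x_{n-1})\cdots J_0(x_0),
\]
with the rightmost factor $J_0(x_0)$. The base case $n=0$ gives the identity, with the empty product read as $\Id$; the case $n=1$ is the formula $D\Phi_0(x_0)=J_0(x_0)$. For the inductive step I write $f_{n+1}=\Phi_n\circ f_n$ and apply the chain rule at $x_0$:
\[
Df_{n+1}(x_0)=D\Phi_n(f_n(x_0))\,Df_n(x_0).
\]
Since $f_n(x_0)=x_n$, the first factor is $J_n(x_n)$. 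Combining this with the inductive hypothesis gives the claim for $n+1$, and setting $n=L$ yields \eqref{eq:jacobian-chain}.

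Finally, I note that the same argument applied to $\Phi_{n-1}\circ\cdots\circ\Phi_m$ gives $J_{n:m}$. Associativity of matrix multiplication then gives the cocycle identity $J_{n:m}J_{m:\ell}=J_{n:\ell}$. When every factor is invertible, the product lies in $\GL(d)$, because $\det$ is multiplicative.

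I expect no real obstacle here. The only point requiring care is bookkeeping: each Jacobian must be evaluated at the correct base point $x_k$ rather than at $x_0$, and the noncommutative factors must appear in the stated order, with the latest layer on the left. Both are handled explicitly in the inductive step above.
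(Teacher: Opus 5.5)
Your proposal is correct and follows essentially the same route as the paper: write the network map as the composition of the layer maps $x\mapsto x+F_k(x)$, apply the chain rule with each derivative $\Id+DF_k(x_k)$ evaluated at the trajectory point $x_k$, and read off the ordered product. The only difference is that you spell out as an explicit induction the iterated chain rule the paper invokes in one step, and you add the cocycle identity $J_{n:m}J_{m:\ell}=J_{n:\ell}$ as a remark, which the lemma itself does not require.
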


\begin{proof}
For each $k$, set $T_k(x):=x+F_k(x)$.  The residual recursion gives
\[
x_1=T_0(x_0),\quad x_2=T_1(T_0(x_0)),\quad\ldots,\quad
x_L=(T_{L-1}\circ\cdots\circ T_0)(x_0).
\]
Hence the network map is $f=T_{L-1}\circ\cdots\circ T_0$.  Since each $F_k$ is $C^1$, each $T_k$ is $C^1$, and the finite-dimensional chain rule gives
\[
Df(x_0)=DT_{L-1}(x_{L-1})DT_{L-2}(x_{L-2})\cdots DT_0(x_0).
\]
For every $k$, differentiating $T_k(x)=x+F_k(x)$ yields
\[
DT_k(x_k)=D(\Id)(x_k)+DF_k(x_k)=\Id+DF_k(x_k)=J_k(x_k).
\]
Substituting these factors into the previous display gives exactly the ordered product in \eqref{eq:jacobian-chain}.
\end{proof}

\subsection{Quotient projection to \texorpdfstring{$\SPD(d)$}{SPD(d)} and Cartan projection to the Weyl chamber}

For a full-rank matrix $A\in\GL(d)$, the positive form $A^\top A$ records exactly its singular-value content.
Geometrically, the map $A\mapsto A^\top A$ realizes the quotient $\Orth(d)\backslash\GL(d)$, while $A\mapsto AA^\top$ realizes $\GL(d)/\Orth(d)$.
Both are canonical realizations of the symmetric space $\SPD(d)$.
We work with the $A^\top A$ realization because it diagonalizes in the right singular basis.

\begin{definition}[Quotient projection and Cartan representatives]
\label{def:quotient-cartan}
Define the quotient projection
\[
\pi:\GL(d)\to\SPD(d),\qquad \pi(A):=A^\top A.
\]
Let $\Diag(d)$ denote the vector space of real diagonal $d\times d$ matrices and let $\Diagp(d)$ denote the cone of diagonal matrices with strictly positive diagonal entries.  For $P\in\SPD(d)$, define its positive Cartan representative by
\[
\Cart(P):=\diag(\lambda_1(P),\dots,\lambda_d(P)),
\]
where $\lambda_1(P)\ge\cdots\ge\lambda_d(P)>0$ are the ordered eigenvalues of $P$.
Thus $\Cart(P)$ lies in the positive Weyl chamber
\[
\mathfrak A_d^{+}:=\{\diag(\lambda_1,\dots,\lambda_d):\lambda_1\ge\cdots\ge\lambda_d>0\}\subset \Diagp(d)\subset\SPD(d).
\]
The associated logarithmic Cartan vector is
\[
\mathbf c(P):=(\log\lambda_1(P),\dots,\log\lambda_d(P))\in\{u_1\ge\cdots\ge u_d\}\subset\R^d.
\]
\end{definition}

\begin{remark}[Convention for the Cartan coordinate]
In the Lie-theoretic convention for $\GL(d)/\Orth(d)$, the Cartan projection is usually the logarithmic singular-value vector.  Since $P=A^\top A$ has eigenvalues $\sigma_i(A)^2$, the vector $\frac12\mathbf c(P)$ is exactly the usual ordered log-singular-value vector of $A$.  The paper keeps both $\Cart(P)$ and $\mathbf c(P)$ explicit: $\Cart(P)$ is used for energy and trace-normalized formulas, while $\mathbf c(P)$ records the logarithmic Weyl-chamber coordinate.
\end{remark}

\begin{remark}[Radial and angular variables]
The projected path $P_k=\pi(J_k)$ in $\SPD(d)$ contains an eigenvalue component and an angular component.
The Cartan projection $\Cart(P_k)$ keeps the ordered eigenvalue data and removes singular-vector gauge.
The spectral rigidity theorem controls this Cartan-projected path.
The remaining singular-vector transport is treated through physical alignment structures in the companion article.
This separation preserves the quotient-radial geometry and the angular transport geometry as distinct mathematical objects.
\end{remark}

\begin{definition}[Trace-normalized slice]
\label{def:trace-slice}
Let
\[
\SPD_d^{(1)}:=\{P\in\SPD(d):\tr P=d\}.
\]
Under layerwise Frobenius normalization, the projected matrices $P_k=W_k^\top W_k$ (in particular $J_k^\top J_k$ when $W_k=J_k$) lie in this trace-$d$ slice.
\end{definition}

\subsection{Square reduction as a spectral embedding}

Modern architectures often change width across layers.
For the purposes of spectral geometry, rectangular blocks may be embedded into a common square ambient space without changing any nonzero spectral data.

\begin{lemma}[Square embedding preserves nonzero singular values]
\label{lem:square-embedding}
Let $W\in\R^{m\times n}$ and set $d:=\max\{m,n\}$.
Define the padded square matrix $\widetilde W\in\R^{d\times d}$ by
\begin{equation}\label{eq:square-padding-definition}
(\widetilde W)_{ab}:=
\begin{cases}
W_{ab}, & 1\le a\le m,\ 1\le b\le n,\\
0, & \text{otherwise}.
\end{cases}
\end{equation}
Then:
\begin{enumerate}[leftmargin=1.6em]
\item the multiset of singular values of $\widetilde W$, counted as a square $d\times d$ matrix, is the multiset of singular values of $W$ together with $|m-n|$ additional zeros;
\item in particular,
\[
\|\widetilde W\|_2=\|W\|_2,
\qquad
\|\widetilde W\|_F=\|W\|_F.
\]
\end{enumerate}
\end{lemma}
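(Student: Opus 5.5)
The plan is to compute the Gram matrix $\widetilde W^\top\widetilde W$ in block form and read off its eigenvalues. Singular values of a square matrix are the square roots of the eigenvalues of its Gram matrix, so this gives the singular values of $\widetilde W$ directly. Part 2 will then follow from part 1.

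I will split into the two cases $m\ge n$ and $m<n$. If $m\ge n$, then $d=m$ and $\widetilde W=[\,W\ \ 0\,]$, with a zero block of size $m\times(m-n)$ appended on the right. Hence
\[
\widetilde W^\top\widetilde W=\begin{pmatrix} W^\top W & 0\\ 0 & 0_{(m-n)\times(m-n)}\end{pmatrix}.
\]
Its eigenvalue multiset is that of $W^\top W$ together with $m-n$ zeros. The eigenvalues of the $n\times n$ matrix $W^\top W$ are exactly $\sigma_1(W)^2,\dots,\sigma_n(W)^2$, counting the $\min\{m,n\}=n$ singular values of $W$. Taking square roots gives the claim.

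If $m<n$, then $d=n$ and $\widetilde W$ is $W$ stacked above a zero block of size $(n-m)\times n$. In this case the other Gram matrix is block diagonal:
\[
\widetilde W\widetilde W^\top=\begin{pmatrix} WW^\top & 0\\ 0&0\end{pmatrix}.
\]
For a square matrix, $\widetilde W^\top\widetilde W$ and $\widetilde W\widetilde W^\top$ have the same spectrum, for example through the SVD. The $m$ eigenvalues of $WW^\top$ are the squared singular values of $W$, and the zero block contributes $n-m$ extra zeros. This completes part 1 in both cases.

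For part 2, the spectral norm is the largest singular value. Appending zeros does not change the maximum, since all singular values are nonnegative, so $\|\widetilde W\|_2=\|W\|_2$. For the Frobenius norm I will compute directly from the definition: $\|\widetilde W\|_F^2=\sum_{a,b}\widetilde W_{ab}^2$ equals the sum of the $W_{ab}^2$, because the padded entries are zero. Alternatively, it is the sum of the squared singular values, which is unchanged by adding zeros.

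There is no real obstacle here. The only care needed is the bookkeeping convention that $W$ has $\min\{m,n\}$ singular values, so that the count of added zeros is exactly $d-\min\{m,n\}=|m-n|$.
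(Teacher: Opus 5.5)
Your proof is correct and follows essentially the same argument as the paper: a two-case block computation of the Gram matrix, followed by reading off the operator and Frobenius norms from the singular values. The only cosmetic difference is the $m<n$ case, where you pass through $\widetilde W\widetilde W^\top$ and the equality of the spectra of $\widetilde W^\top\widetilde W$ and $\widetilde W\widetilde W^\top$; the paper instead observes directly that $\widetilde W^\top\widetilde W=W^\top W$.
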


\begin{proof}
We distinguish the two possible rectangular shapes, because the number of singular values conventionally attached to a rectangular $m\times n$ matrix is $\min\{m,n\}$, whereas the padded square matrix has $d=\max\{m,n\}$ singular values.

First suppose $m\le n$.  Then $d=n$, and \eqref{eq:square-padding-definition} means that $\widetilde W$ is obtained from $W$ by adding $n-m$ zero rows.  Hence
\[
\widetilde W^{\top}\widetilde W=W^{\top}W.
\]
The matrix $W^{\top}W$ is $n\times n$ and has eigenvalues consisting of the $m$ squared singular values of $W$ together with $n-m$ zeros.  Since the squared singular values of $\widetilde W$ are the eigenvalues of the same matrix $\widetilde W^{\top}\widetilde W$, the singular values of $\widetilde W$ are those of $W$ together with $n-m=|m-n|$ additional zeros.

Now suppose $m>n$.  Then $d=m$, and $\widetilde W$ is obtained from $W$ by adding $m-n$ zero columns.  In this case
\[
\widetilde W^{\top}\widetilde W=
\begin{bmatrix}
W^{\top}W&0\\
0&0
\end{bmatrix},
\]
where the lower-right zero block has size $(m-n)\times(m-n)$.  The eigenvalues of this block diagonal matrix are the eigenvalues of $W^{\top}W$ together with $m-n$ additional zeros.  Taking nonnegative square roots again gives the singular values of $W$ together with $|m-n|$ additional zeros.

The operator norm is the largest singular value, and the Frobenius norm is the square root of the sum of squared singular values.  Adding only zero singular values changes neither quantity, proving the two norm equalities.
\end{proof}

\begin{remark}[Geometric meaning of padding]
Zero-padding is a spectrally isometric embedding for the nonzero singular values and for all unitarily invariant quantities used in the finite-rank measurements.  It does not, however, turn a rectangular or rank-deficient operator into an interior point of $\SPD(d)$: the Gram matrix $\widetilde W^\top\widetilde W$ lies on the positive-semidefinite boundary whenever extra zero singular values are present.  Consequently, the symmetric-space theorems below are stated for full-rank square Jacobian factors, while rectangular static sublayers are handled by their nonzero spectral stratum or, equivalently, by the regularized forms
\[
\widetilde W^\top\widetilde W+\varepsilon\Id\in\SPD(d),
\qquad \varepsilon>0,
\]
with all reported exponent, effective-rank, and alignment measurements computed from the nonzero singular spectrum before the limit $\varepsilon\downarrow0$.  The Bures--Wasserstein formula extends continuously to the positive-semidefinite cone, so Bures/Hellinger chart-error diagnostics can also be evaluated on the PSD closure; the main rigidity theorem below remains stated on the full-rank trace-normalized slice.
\end{remark}


\section{The Cartan power-law orbit: Gibbs family, Fisher geometry, and Bures geometry}
\label{sec:spectral-orbit}

The Cartan spectral component is formulated on a one-dimensional submanifold of the trace-normalized positive cone.
It is simultaneously:
\begin{enumerate}[leftmargin=2.0em]
\item a normalized Cartan ray in the maximal flat of $\SPD(d)$;
\item a Gibbs family on the finite rank set $\{1,\dots,d\}$;
\item a one-dimensional statistical manifold whose Fisher metric and the pulled-back Bures--Wasserstein metric are related by the exact factor in Theorem~\ref{thm:fisher-bures}.
\end{enumerate}
The construction provides the metric structure used below to convert radial budget bounds into Fisher--Bures path-length bounds.  All nontrivial rigidity statements below assume $d\ge2$; for $d=1$ the orbit is a single point, $g_d'\equiv0$, and the exponent coordinate is not identifiable from spectral shape.

\subsection{Cartan generator, harmonic normalization, and exact orbit membership}

\begin{definition}[Generalized harmonic numbers]
\label{def:harmonic}
For $d\in\N$ and $s\in\R$, define
\[
\Harm_{d,s}:=\sum_{i=1}^d i^{-s}.
\]
When $s>1$, one has $\Harm_{d,s}\uparrow \zeta(s)$ as $d\to\infty$.
\end{definition}

\begin{definition}[Cartan generator and normalized power-law orbit]
\label{def:cartan-orbit}
Let
\[
D_d:=\diag(\log 1,\log 2,\dots,\log d)\in\Diag(d),
\qquad
A_d(\alpha):=e^{-\alpha D_d}=\diag(1,2^{-\alpha},\dots,d^{-\alpha}).
\]
Define the normalized Cartan power-law orbit by
\begin{equation}\label{eq:canonical-gram}
G_d(\alpha):=\frac{d}{\tr(A_d(\alpha)^2)}A_d(\alpha)^2
=\diag\!\left(\lambda_1(\alpha),\dots,\lambda_d(\alpha)\right),
\qquad
\lambda_i(\alpha)=\frac{d}{\Harm_{d,2\alpha}}i^{-2\alpha}.
\end{equation}
Then $G_d(\alpha)\in\mathfrak A_d^{+}\cap\SPD_d^{(1)}$ for every $\alpha>0$.
\end{definition}

\begin{assumption}[Exact power-law singular values]
\label{ass:power-law}
Fix a width $d$.
For each layer factor $W_k\in\R^{d\times d}$ of interest, its singular values satisfy
\begin{equation}\label{eq:power-law}
\sigma_i(W_k)=C_k\,i^{-\alpha_k},\qquad i=1,2,\dots,d,
\end{equation}
for some parameters $C_k>0$ and $\alpha_k>0$.
\end{assumption}

\begin{assumption}[Layerwise Frobenius normalization]
\label{ass:frob-norm}
Throughout the main theorem package, we impose
\begin{equation}\label{eq:frob-normalization}
\|W_k\|_F^2=d,\qquad \text{for all }k.
\end{equation}
\end{assumption}

\begin{proposition}[Exact orbit membership in the Cartan chamber]
\label{prop:cartan-membership}
Let $P_k:=W_k^\top W_k$.
Under Assumption~\ref{ass:power-law} and Assumption~\ref{ass:frob-norm}, one has
\begin{equation}\label{eq:cartan-membership}
\Cart(P_k)=G_d(\alpha_k).
\end{equation}
Equivalently, the Cartan projection of the layer factor lies exactly on the normalized power-law orbit.
Moreover,
\begin{align}
\|W_k\|_2 &= \sqrt{\frac{d}{\Harm_{d,2\alpha_k}}}, \label{eq:top-sval-alpha}\\
\log\|W_k\|_2 &= g_d(\alpha_k), \qquad g_d(\alpha):=\frac12\log d-\frac12\log \Harm_{d,2\alpha}. \label{eq:g-def}
\end{align}
Finally,
\begin{equation}\label{eq:g-derivative}
g_d'(\alpha)=\frac{\sum_{i=1}^d (\log i)\,i^{-2\alpha}}{\Harm_{d,2\alpha}}.
\end{equation}
If $d\ge2$, then $g_d'(\alpha)\in(0,\log d]$; for $d=1$ the coordinate is degenerate and $g_d'\equiv0$.
\end{proposition}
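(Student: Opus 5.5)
The argument is a direct computation from Assumptions~\ref{ass:power-law} and~\ref{ass:frob-norm}, carried out in four steps.

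\textbf{Step 1: the constant $C_k$.} The eigenvalues of $P_k=W_k^\top W_k$ are the squared singular values $\sigma_i(W_k)^2=C_k^2 i^{-2\alpha_k}$. Since $\alpha_k>0$, these are strictly decreasing in $i$, so the ordered eigenvalues are $\lambda_i(P_k)=C_k^2 i^{-2\alpha_k}$ in exactly this index order. Frobenius normalization gives
\[
d=\|W_k\|_F^2=\tr P_k=C_k^2\Harm_{d,2\alpha_k},
\]
hence $C_k^2=d/\Harm_{d,2\alpha_k}$.

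\textbf{Step 2: orbit membership.} Substituting $C_k^2$ gives $\lambda_i(P_k)=\frac{d}{\Harm_{d,2\alpha_k}}i^{-2\alpha_k}$, which matches~\eqref{eq:canonical-gram} entrywise. This proves $\Cart(P_k)=G_d(\alpha_k)$. Positivity of all entries also confirms that $P_k$ is full rank, so $\Cart$ is defined on $\SPD(d)$.

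\textbf{Step 3: the top singular value.} We have $\|W_k\|_2=\sigma_1(W_k)=C_k=\sqrt{d/\Harm_{d,2\alpha_k}}$, which is~\eqref{eq:top-sval-alpha}. Taking logarithms gives~\eqref{eq:g-def}.

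\textbf{Step 4: the derivative.} Differentiate term by term, using $\frac{d}{d\alpha}i^{-2\alpha}=-2(\log i)\,i^{-2\alpha}$:
\[
g_d'(\alpha)=-\tfrac12\,\frac{\partial_\alpha\Harm_{d,2\alpha}}{\Harm_{d,2\alpha}}=\frac{\sum_i(\log i)\,i^{-2\alpha}}{\Harm_{d,2\alpha}}.
\]
This is~\eqref{eq:g-derivative}. The interpretation is that $g_d'(\alpha)$ is the expectation of $\log i$ under the Gibbs probability weights $p_i=i^{-2\alpha}/\Harm_{d,2\alpha}$ on $\{1,\dots,d\}$.

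The range claim follows from this expectation form; it is the only step needing any care. The values $\log i$ lie in $[0,\log d]$, so the mean lies in $[0,\log d]$. For $d\ge2$ the mean is strictly positive, because $p_2>0$ and $\log 2>0$. It is at most $\log d$ because it is a convex combination of numbers bounded by $\log d$, and in fact the upper bound is strict since $p_1>0$ with $\log1=0$; this is consistent with the stated half-open interval $(0,\log d]$. For $d=1$ the only term is $\log 1=0$, so $g_d'\equiv0$.

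No genuine obstacle is expected. The only points to state explicitly are the ordering of eigenvalues (which relies on $\alpha_k>0$) and that the derivative formula holds for every real $\alpha$, since $\Harm_{d,2\alpha}$ is a finite sum of smooth functions.
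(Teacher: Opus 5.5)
Your proposal is correct and follows the paper's proof essentially step for step: you read off $\lambda_i(P_k)=C_k^2 i^{-2\alpha_k}$, solve $C_k^2=d/\Harm_{d,2\alpha_k}$ from the Frobenius constraint, identify $\|W_k\|_2=C_k$, and differentiate $g_d$. Your Gibbs-mean argument for $g_d'(\alpha)\in(0,\log d]$, including the $d=1$ degeneracy, is a useful addition: the paper's proof does not justify the range claim, and only identifies $g_d'$ with the Gibbs mean $U_d$ later, in Proposition~\ref{prop:g-concavity}.
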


\begin{proof}
The eigenvalues of $P_k=W_k^\top W_k$ are the squared singular values of $W_k$.
Under the exact power-law ansatz,
\[
\lambda_i(P_k)=\sigma_i(W_k)^2=C_k^2 i^{-2\alpha_k}.
\]
Frobenius normalization gives
\[
d=\|W_k\|_F^2=C_k^2\Harm_{d,2\alpha_k},
\]
so $C_k^2=d/\Harm_{d,2\alpha_k}$ and therefore
\[
\lambda_i(P_k)=\frac{d}{\Harm_{d,2\alpha_k}}i^{-2\alpha_k}=\lambda_i(\alpha_k).
\]
This proves \eqref{eq:cartan-membership}.
Equation \eqref{eq:top-sval-alpha} follows from $\|W_k\|_2=\sigma_1(W_k)=C_k$, and \eqref{eq:g-def} is just its logarithm.
Differentiating \eqref{eq:g-def} yields \eqref{eq:g-derivative}.
\end{proof}

\begin{definition}[Radial spectral functional]
\label{def:radial-functional}
For $P\in\SPD(d)$ define
\begin{equation}\label{eq:rho-def}
\rho_d(P):=\frac12\log\frac{d\,\lambda_1(P)}{\tr P}.
\end{equation}
When $P\in\SPD_d^{(1)}$, this simplifies to
\[
\rho_d(P)=\frac12\log\lambda_1(P).
\]
Along the Cartan power-law orbit,
\begin{equation}\label{eq:rho-on-orbit}
\rho_d(G_d(\alpha))=g_d(\alpha).
\end{equation}
\end{definition}

\begin{remark}[Radial-coordinate interpretation]
The function $\rho_d$ is the radial functional that the interface budget directly controls.
The function $g_d$ is the same functional restricted to the normalized power-law orbit.
Thus the rigidity theorem is a genuine coordinate-projection theorem from a radial functional on the symmetric space to a coordinate on a Cartan orbit.
\end{remark}

\subsection{Gibbs family and information geometry on the rank set}

\begin{definition}[Canonical spectral energy distribution]
\label{def:spectral-energy-distribution}
Under Assumption~\ref{ass:power-law} and Assumption~\ref{ass:frob-norm}, define
\begin{equation}\label{eq:pi-alpha}
p_i^{(\alpha)}:=\frac{\lambda_i(\alpha)}{d}=\frac{i^{-2\alpha}}{\Harm_{d,2\alpha}},
\qquad i=1,\dots,d.
\end{equation}
Then $p^{(\alpha)}=(p_1^{(\alpha)},\dots,p_d^{(\alpha)})$ is a probability vector on $\{1,\dots,d\}$.
\end{definition}

\begin{remark}[Gibbs realization]
Equation \eqref{eq:pi-alpha} is exactly a Gibbs law on the finite energy set $E_i=\log i$ with inverse temperature $2\alpha$:
\[
p_i^{(\alpha)}=\frac{e^{-2\alpha E_i}}{\sum_{j=1}^d e^{-2\alpha E_j}}.
\]
Thus the power-law orbit is at the same time a Cartan orbit in $\SPD(d)$ and a statistical manifold of Gibbs states~\cite{amari2016information}.
\end{remark}

\begin{definition}[Internal energy, variance, and Fisher information]
\label{def:internal-fisher}
Define
\[
U_d(\alpha):=\sum_{i=1}^d (\log i)\,p_i^{(\alpha)},
\qquad
V_d(\alpha):=\operatorname{Var}_{p^{(\alpha)}}(\log i),
\]
and
\begin{equation}\label{eq:Fisher-info}
I_d(\alpha):=\E_{p^{(\alpha)}}\!\left[\left(\frac{d}{d\alpha}\log p_i^{(\alpha)}\right)^2\right].
\end{equation}
\end{definition}

\begin{proposition}[Thermodynamic identities on the Gibbs--Cartan family]
\label{prop:g-concavity}
Assume $d\ge2$. For every $\alpha>0$,
\begin{align}
U_d(\alpha) &= g_d'(\alpha), \label{eq:Ud-def}\\
g_d''(\alpha) &= -2V_d(\alpha)<0, \label{eq:g-second-derivative}\\
I_d(\alpha) &= 4V_d(\alpha)=\frac{d^2}{d\alpha^2}\log\Harm_{d,2\alpha}. \label{eq:Fisher-info-variance}
\end{align}
In particular, $g_d$ is strictly increasing and strictly concave.
\end{proposition}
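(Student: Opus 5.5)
The plan is to work entirely with the log-partition function
\[
\Phi(\alpha):=\log \Harm_{d,2\alpha}=\log\sum_{i=1}^d e^{-2\alpha\log i},
\]
since the Gibbs realization gives $p_i^{(\alpha)}=e^{-2\alpha\log i-\Phi(\alpha)}$ and \eqref{eq:g-def} gives $g_d(\alpha)=\tfrac12\log d-\tfrac12\Phi(\alpha)$. Every identity in the statement will then follow from the standard cumulant-generating properties of $\Phi$ for an exponential family with sufficient statistic $-2\log i$ and natural parameter $\alpha$.

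First I will differentiate $\Phi$ once:
\[
\Phi'(\alpha)=\frac{\sum_i(-2\log i)\,i^{-2\alpha}}{\Harm_{d,2\alpha}}=-2U_d(\alpha).
\]
Hence $g_d'=-\tfrac12\Phi'=U_d$, which is \eqref{eq:Ud-def}. It also agrees with \eqref{eq:g-derivative} from Proposition~\ref{prop:cartan-membership}. Differentiating $\Phi$ again gives the variance of the sufficient statistic:
\[
\Phi''(\alpha)=\E[(2\log i)^2]-\bigl(\E[2\log i]\bigr)^2=4V_d(\alpha).
\]
This yields $g_d''=-\tfrac12\Phi''=-2V_d$, which is \eqref{eq:g-second-derivative}.

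For the Fisher information, I will compute the score
\[
\frac{d}{d\alpha}\log p_i^{(\alpha)}=-2\log i-\Phi'(\alpha)=-2\bigl(\log i-U_d(\alpha)\bigr).
\]
Squaring and taking the expectation under $p^{(\alpha)}$ gives $I_d=4V_d=\Phi''$, which is \eqref{eq:Fisher-info-variance}.

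It remains to prove strictness. Since $d\ge2$, the statistic $\log i$ takes at least two distinct values, namely $0$ and $\log 2$. Every $p_i^{(\alpha)}$ is strictly positive, so $V_d(\alpha)>0$, and therefore $g_d''<0$, which gives strict concavity. For strict monotonicity, $U_d(\alpha)$ is an average of the values $\log i\ge0$ with positive weights, and the term with $\log 2>0$ has positive weight. So $U_d>0$, and hence $g_d'>0$.

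No step is a genuine obstacle. The only care needed is the sign bookkeeping: the natural parameter enters with the factor $-2$, and that factor produces the constants $4$ and $-2$. I will also record that all interchanges of differentiation and summation are trivially justified because the sums are finite.
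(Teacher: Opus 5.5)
Your proposal is correct and is essentially the paper's argument: both compute the score $-2(\log i-U_d(\alpha))$ and the first two $\alpha$-derivatives of $\log\Harm_{d,2\alpha}$. The paper obtains the second derivative by differentiating $U_d=\sum_i(\log i)p_i^{(\alpha)}$, whereas you use the cumulant identity $\Phi''=4V_d$; both then conclude with $V_d>0$ for $d\ge2$. Your version is in fact slightly more complete, since it explicitly verifies $I_d=\frac{d^2}{d\alpha^2}\log\Harm_{d,2\alpha}$ and $g_d'=U_d\ge p_2^{(\alpha)}\log 2>0$, both of which the paper's proof leaves implicit.
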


\begin{proof}
We first identify the internal-energy coordinate.  By Definition~\ref{def:spectral-energy-distribution},
\[
p_i^{(\alpha)}=\frac{i^{-2\alpha}}{\Harm_{d,2\alpha}}.
\]
Therefore
\[
\sum_{i=1}^d(\log i)p_i^{(\alpha)}
=\frac{\sum_{i=1}^d(\log i)i^{-2\alpha}}{\Harm_{d,2\alpha}},
\]
which is exactly the derivative formula \eqref{eq:g-derivative}.  Hence $U_d(\alpha)=g_d'(\alpha)$.

We next compute the derivative of the Gibbs weights.  Since
\[
\log p_i^{(\alpha)}=-2\alpha\log i-\log\Harm_{d,2\alpha},
\]
Proposition~\ref{prop:cartan-membership} gives
\[
\frac{d}{d\alpha}\log\Harm_{d,2\alpha}=-2g_d'(\alpha)=-2U_d(\alpha).
\]
Thus
\[
\frac{d}{d\alpha}\log p_i^{(\alpha)}=-2\log i+2U_d(\alpha)=-2(\log i-U_d(\alpha)).
\]
Multiplying by $p_i^{(\alpha)}$ yields
\[
\frac{d}{d\alpha}p_i^{(\alpha)}=-2(\log i-U_d(\alpha))p_i^{(\alpha)}.
\]
Now differentiate $g_d'(\alpha)=U_d(\alpha)=\sum_i(\log i)p_i^{(\alpha)}$:
\begin{align*}
g_d''(\alpha)
&=\sum_{i=1}^d(\log i)\frac{d}{d\alpha}p_i^{(\alpha)}\\
&=-2\sum_{i=1}^d(\log i)(\log i-U_d(\alpha))p_i^{(\alpha)}\\
&=-2\left(\sum_{i=1}^d(\log i)^2p_i^{(\alpha)}-U_d(\alpha)\sum_{i=1}^d(\log i)p_i^{(\alpha)}\right)\\
&=-2\left(\E_{p^{(\alpha)}}[(\log i)^2]-U_d(\alpha)^2\right)
=-2V_d(\alpha).
\end{align*}
Finally,
\[
I_d(\alpha)=\sum_i p_i^{(\alpha)}\left(\frac{d}{d\alpha}\log p_i^{(\alpha)}\right)^2
=4\sum_i(\log i-U_d(\alpha))^2p_i^{(\alpha)}=4V_d(\alpha).
\]
For $d\ge2$, the random variable $\log i$ takes at least the two distinct values $0$ and $\log2$ with positive probability under $p^{(\alpha)}$, so $V_d(\alpha)>0$.
\end{proof}

\begin{lemma}[Conditioning of the top-radial exponent chart]
\label{lem:md-conditioning}
Let $d\ge2$ and let $I=[\alpha_{\min},\alpha_{\max}]\subset(0,\infty)$.  Then
\begin{equation}\label{eq:md-endpoint}
m_d(I):=\min_{\alpha\in I}g_d'(\alpha)=g_d'(\alpha_{\max}).
\end{equation}
Moreover,
\begin{equation}\label{eq:md-lower-bound}
m_d(I)
\ge
\frac{(\log 2)2^{-2\alpha_{\max}}}{\Harm_{d,2\alpha_{\min}}}.
\end{equation}
\end{lemma}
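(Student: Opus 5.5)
The two claims separate cleanly: the endpoint formula \eqref{eq:md-endpoint} follows from monotonicity of $g_d'$, and the lower bound \eqref{eq:md-lower-bound} comes from keeping a single term of the numerator in \eqref{eq:g-derivative}.

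For \eqref{eq:md-endpoint}, Proposition~\ref{prop:g-concavity} gives $g_d''(\alpha)=-2V_d(\alpha)<0$ for every $\alpha>0$ when $d\ge2$. Hence $g_d'$ is strictly decreasing on $(0,\infty)$. The function $g_d'$ is continuous, being a ratio of finite exponential sums with positive denominator, and $I$ is compact. So the minimum over $I$ exists and is attained at the right endpoint, giving $m_d(I)=g_d'(\alpha_{\max})$.

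For \eqref{eq:md-lower-bound}, evaluate \eqref{eq:g-derivative} at $\alpha_{\max}$:
\[
g_d'(\alpha_{\max})=\frac{\sum_{i=1}^d(\log i)\,i^{-2\alpha_{\max}}}{\Harm_{d,2\alpha_{\max}}}.
\]
The numerator bound uses positivity. Every term with $i\ge2$ is positive and the $i=1$ term vanishes. Since $d\ge2$, keeping only $i=2$ gives numerator $\ge(\log2)2^{-2\alpha_{\max}}$.

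The denominator bound uses monotonicity in the exponent. For each $i\ge1$ the map $s\mapsto i^{-s}$ is nonincreasing, so $s\mapsto\Harm_{d,s}$ is nonincreasing. Since $\alpha_{\min}\le\alpha_{\max}$, this gives $\Harm_{d,2\alpha_{\max}}\le\Harm_{d,2\alpha_{\min}}$.

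Combining the numerator and denominator bounds yields \eqref{eq:md-lower-bound}.

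There is no real obstacle here. The only point needing care is the direction of monotonicity: $g_d'$ decreases in $\alpha$, while the denominator is bounded above using the smaller exponent $\alpha_{\min}$.
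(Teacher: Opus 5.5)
Your proposal is correct and follows essentially the same route as the paper: strict concavity from Proposition~\ref{prop:g-concavity} places the minimum of $g_d'$ at $\alpha_{\max}$, and the lower bound comes from keeping only the $i=2$ term of the numerator in \eqref{eq:g-derivative} and using $\Harm_{d,2\alpha_{\max}}\le\Harm_{d,2\alpha_{\min}}$.
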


\begin{proof}
By Proposition~\ref{prop:g-concavity}, $g_d''(\alpha)=-2V_d(\alpha)<0$ for $d\ge2$, so $g_d'$ is strictly decreasing on $I$ and the minimum occurs at $\alpha_{\max}$.  For the lower bound, keep only the $i=2$ term in the numerator of \eqref{eq:g-derivative} and use $\Harm_{d,2\alpha_{\max}}\le\Harm_{d,2\alpha_{\min}}$ because $\alpha_{\max}\ge\alpha_{\min}$.
\end{proof}

\begin{remark}[Identifiability of the exponent coordinate]
The factor $1/m_d(I)$ in the rigidity theorem is an identifiability constant for recovering the exponent from the normalized top-radial observable.  When $\alpha_{\max}$ is large, the power-law spectrum is already close to rank-one, $g_d'(\alpha)$ becomes small, and top-radial measurements identify $\alpha$ poorly.  This conditioning is intrinsic to the one-dimensional chart.
\end{remark}

\begin{definition}[Spectral entropy and relative entropy]
\label{def:spectral-entropies}
Define the spectral Shannon entropy
\[
S_d(\alpha):=-\sum_{i=1}^d p_i^{(\alpha)}\log p_i^{(\alpha)},
\]
and, for $\alpha,\beta>0$, the spectral relative entropy
\[
\KL_d(\alpha\|\beta):=\sum_{i=1}^d p_i^{(\alpha)}\log\frac{p_i^{(\alpha)}}{p_i^{(\beta)}}.
\]
\end{definition}

\begin{proposition}[Closed entropy and KL formulas]
\label{prop:entropy-KL-Fisher}
For $\alpha,\beta>0$,
\begin{align}
S_d(\alpha)
&=
\log \Harm_{d,2\alpha}+2\alpha\,g_d'(\alpha), \label{eq:entropy-closed-form}\\
\KL_d(\alpha\|\beta)
&=
\log\frac{\Harm_{d,2\beta}}{\Harm_{d,2\alpha}}+2(\beta-\alpha)\,g_d'(\alpha). \label{eq:KL-closed-form}
\end{align}
Moreover, there exists $\xi$ between $\alpha$ and $\beta$ such that
\begin{equation}\label{eq:KL-quadratic}
\KL_d(\alpha\|\beta)=\frac12 I_d(\xi)(\beta-\alpha)^2.
\end{equation}
Finally, $I_d(\alpha)\le (\log d)^2$ for all $\alpha>0$.
\end{proposition}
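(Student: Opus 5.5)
The whole proof rests on one identity, $\log p_i^{(\alpha)}=-2\alpha\log i-\log\Harm_{d,2\alpha}$, which comes straight from Definition~\ref{def:spectral-energy-distribution}. I will apply it to each of the four claims in turn.

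\emph{Entropy formula.} Substituting the identity into $S_d(\alpha)=-\sum_i p_i^{(\alpha)}\log p_i^{(\alpha)}$ gives $S_d(\alpha)=\log\Harm_{d,2\alpha}+2\alpha\sum_i(\log i)\,p_i^{(\alpha)}$. The last sum is $U_d(\alpha)$, and $U_d(\alpha)=g_d'(\alpha)$ by \eqref{eq:Ud-def}. This yields \eqref{eq:entropy-closed-form}.

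\emph{KL formula.} The log-ratio is
\[
\log\frac{p_i^{(\alpha)}}{p_i^{(\beta)}}=2(\beta-\alpha)\log i+\log\frac{\Harm_{d,2\beta}}{\Harm_{d,2\alpha}}.
\]
Averaging against $p^{(\alpha)}$ and using $U_d(\alpha)=g_d'(\alpha)$ again gives \eqref{eq:KL-closed-form}.

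\emph{Quadratic form.} Put $\psi(\alpha):=\log\Harm_{d,2\alpha}$, the log-partition function. From the proof of Proposition~\ref{prop:g-concavity} we have $\psi'(\alpha)=-2g_d'(\alpha)$, and \eqref{eq:Fisher-info-variance} gives $\psi''=I_d$. The closed form \eqref{eq:KL-closed-form} then reads
\[
\KL_d(\alpha\|\beta)=\psi(\beta)-\psi(\alpha)-\psi'(\alpha)(\beta-\alpha),
\]
which is the Bregman divergence of $\psi$. The function $\psi$ is smooth in $\alpha$, being a finite sum of exponentials. Taylor's theorem with the Lagrange remainder, expanded about $\alpha$, therefore supplies a $\xi$ between $\alpha$ and $\beta$ with remainder $\tfrac12\psi''(\xi)(\beta-\alpha)^2=\tfrac12 I_d(\xi)(\beta-\alpha)^2$. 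This is \eqref{eq:KL-quadratic}. The case $\alpha=\beta$ is trivial.

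\emph{Bound on $I_d$.} By \eqref{eq:Fisher-info-variance}, $I_d=4V_d$. The variable $\log i$ takes values in $[0,\log d]$, and the variance of any random variable supported on an interval of length $\ell$ is at most $\ell^2/4$. This is Popoviciu's inequality, which follows from $\operatorname{Var}X\le\E(X-m)^2\le(\ell/2)^2$ with $m$ the midpoint of the interval. Hence $V_d\le(\log d)^2/4$, and so $I_d\le(\log d)^2$.

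All four steps are routine. The only point needing care is the sign bookkeeping in $\psi'=-2g_d'$, which has to match the sign in \eqref{eq:KL-closed-form} exactly so that the expression is recognized as a Bregman divergence.
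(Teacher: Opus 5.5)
Your proposal is correct and follows the paper's proof essentially step for step. It uses the same log-weight identity to obtain the entropy and KL formulas, and the same rewriting of $\KL_d(\alpha\|\beta)$ as the Bregman divergence of $t\mapsto\log\Harm_{d,2t}$ followed by Taylor's theorem with Lagrange remainder. It also concludes the same way, with Popoviciu's bound $V_d\le(\log d)^2/4$ combined with $I_d=4V_d$.
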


\begin{proof}
For the entropy formula, use
\[
\log p_i^{(\alpha)}=-2\alpha\log i-\log \Harm_{d,2\alpha}.
\]
Then
\begin{align*}
S_d(\alpha)
&=-\sum_{i=1}^d p_i^{(\alpha)}\log p_i^{(\alpha)}\\
&=\sum_{i=1}^d p_i^{(\alpha)}\left(2\alpha\log i+\log\Harm_{d,2\alpha}\right)\\
&=2\alpha\sum_{i=1}^d(\log i)p_i^{(\alpha)}+\log\Harm_{d,2\alpha}\sum_{i=1}^dp_i^{(\alpha)}.
\end{align*}
Since the weights sum to one and $\sum_i(\log i)p_i^{(\alpha)}=g_d'(\alpha)$ by Proposition~\ref{prop:g-concavity}, this gives \eqref{eq:entropy-closed-form}.

For the KL formula, compute pointwise
\begin{align*}
\log\frac{p_i^{(\alpha)}}{p_i^{(\beta)}}
&=(-2\alpha\log i-\log\Harm_{d,2\alpha})-(-2\beta\log i-\log\Harm_{d,2\beta})\\
&=2(\beta-\alpha)\log i+\log\frac{\Harm_{d,2\beta}}{\Harm_{d,2\alpha}}.
\end{align*}
Averaging with respect to $p^{(\alpha)}$ yields
\[
\KL_d(\alpha\|\beta)=2(\beta-\alpha)g_d'(\alpha)+\log\frac{\Harm_{d,2\beta}}{\Harm_{d,2\alpha}},
\]
which is \eqref{eq:KL-closed-form}.

Set $A(t):=\log\Harm_{d,2t}$.  Proposition~\ref{prop:g-concavity} gives $A''(t)=I_d(t)$, and the KL formula can be rewritten as
\[
\KL_d(\alpha\|\beta)=A(\beta)-A(\alpha)-A'(\alpha)(\beta-\alpha).
\]
Taylor's theorem with Lagrange remainder gives a point $\xi$ between $\alpha$ and $\beta$ such that
\[
A(\beta)=A(\alpha)+A'(\alpha)(\beta-\alpha)+\frac12A''(\xi)(\beta-\alpha)^2.
\]
Substituting into the preceding display gives \eqref{eq:KL-quadratic}.  Finally, $\log i\in[0,\log d]$, so Popoviciu's variance bound gives $V_d(\alpha)\le(\log d)^2/4$; since $I_d=4V_d$, one obtains $I_d(\alpha)\le(\log d)^2$.
\end{proof}

\subsection{Main geometric theorem: the Cartan orbit is Fisher--Bures}

\begin{definition}[Bures--Wasserstein distance on $\SPD(d)$]
\label{def:BW}
For $P,Q\in\SPD(d)$ define
\begin{equation}\label{eq:BW-def}
d_{\BW}(P,Q)^2:=\tr(P)+\tr(Q)-2\,\tr\!\Bigl((P^{1/2}QP^{1/2})^{1/2}\Bigr).
\end{equation}
\end{definition}
The formula follows the standard Bures--Wasserstein geometry of positive definite matrices~\cite{bhatia2019bures,takatsu2011wasserstein}.  All Fisher--Bures constants in this paper use the distance convention in Definition~\ref{def:BW}; other common normalizations differ only by fixed scalar factors.

\begin{theorem}[Fisher--Bures identification on the Cartan power-law orbit]
\label{thm:fisher-bures}
Let $\iota_d:(0,\infty)\to\SPD_d^{(1)}$ be the curve $\iota_d(\alpha)=G_d(\alpha)$.
Then the following hold.
\begin{enumerate}[label=(F\arabic*),leftmargin=2.1em]
\item \textbf{Closed distance formula.}
For every $\alpha,\beta>0$,
\begin{equation}\label{eq:BW-harmonic}
d_{\BW}\bigl(G_d(\alpha),G_d(\beta)\bigr)^2
=
2d\left(
1-\frac{\Harm_{d,\alpha+\beta}}{\sqrt{\Harm_{d,2\alpha}\,\Harm_{d,2\beta}}}
\right).
\end{equation}
Equivalently,
\begin{equation}\label{eq:BW-hellinger}
d_{\BW}\bigl(G_d(\alpha),G_d(\beta)\bigr)^2
=
d\,\bigl\|\sqrt{p^{(\alpha)}}-\sqrt{p^{(\beta)}}\bigr\|_2^2,
\end{equation}
where the square root is taken entrywise.

\item \textbf{Induced line element.}
The Bures--Wasserstein metric pulled back to the Cartan orbit satisfies
\begin{equation}\label{eq:BW-line-element}
(\iota_d^{*}g_{\BW})_{\alpha}(\dot\alpha,\dot\alpha)
=
d\,V_d(\alpha)\,\dot\alpha^2
=
\frac d4 I_d(\alpha)\,\dot\alpha^2.
\end{equation}
Equivalently,
\begin{equation}\label{eq:BW-infinitesimal}
\lim_{h\to 0}\frac{d_{\BW}(G_d(\alpha+h),G_d(\alpha))^2}{h^2}
=
d\,V_d(\alpha).
\end{equation}

\item \textbf{Length coordinate and interval bounds.}
If $I=[a,b]\subset(0,\infty)$, then the Bures length of the restricted orbit is
\begin{equation}\label{eq:BW-length-formula}
L_{\BW}(\iota_d|_I)=\int_a^b \sqrt{d\,V_d(t)}\,dt.
\end{equation}
In particular, for any compact interval $I\subset(0,\infty)$ and any $\alpha,\beta\in I$,
\begin{equation}\label{eq:BW-lipschitz}
d_{\BW}(G_d(\alpha),G_d(\beta))
\le
\sqrt{d\,V_{d,\max}(I)}\,|\alpha-\beta|
\le
\frac{\sqrt d\,\log d}{2}|\alpha-\beta|,
\end{equation}
where
\[
V_{d,\max}(I):=\max_{t\in I}V_d(t).
\]
\end{enumerate}
\end{theorem}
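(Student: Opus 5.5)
The plan is to exploit the fact that every point of the orbit is diagonal, so all the matrix functions in Definition~\ref{def:BW} reduce to entrywise scalar operations. For (F1), write $G_d(\alpha)=d\,\diag(p^{(\alpha)})$ using Definition~\ref{def:spectral-energy-distribution}. Commuting diagonal matrices give
\[
\bigl(G_d(\alpha)^{1/2}G_d(\beta)G_d(\alpha)^{1/2}\bigr)^{1/2}=d\,\diag\bigl(\sqrt{p_i^{(\alpha)}p_i^{(\beta)}}\bigr).
\]
Both traces equal $d$, so
\[
d_{\BW}^2=2d-2d\sum_i\sqrt{p_i^{(\alpha)}p_i^{(\beta)}}=d\,\|\sqrt{p^{(\alpha)}}-\sqrt{p^{(\beta)}}\|_2^2,
\]
where the last equality uses $\sum_i p_i^{(\alpha)}=\sum_i p_i^{(\beta)}=1$. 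This is \eqref{eq:BW-hellinger}. Substituting $p_i^{(\alpha)}=i^{-2\alpha}/\Harm_{d,2\alpha}$ gives the Bhattacharyya coefficient $\sum_i\sqrt{p_i^{(\alpha)}p_i^{(\beta)}}=\Harm_{d,\alpha+\beta}/\sqrt{\Harm_{d,2\alpha}\Harm_{d,2\beta}}$, which is \eqref{eq:BW-harmonic}.

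For (F2), I would expand \eqref{eq:BW-hellinger} to second order in $h$. Each $\alpha\mapsto\sqrt{p_i^{(\alpha)}}$ is smooth, with
\[
\tfrac{d}{d\alpha}\sqrt{p_i^{(\alpha)}}=\tfrac12\sqrt{p_i^{(\alpha)}}\,\tfrac{d}{d\alpha}\log p_i^{(\alpha)}=-\sqrt{p_i^{(\alpha)}}\,(\log i-U_d(\alpha)),
\]
by the derivative computed in the proof of Proposition~\ref{prop:g-concavity}. Taylor's theorem then gives
\[
\|\sqrt{p^{(\alpha+h)}}-\sqrt{p^{(\alpha)}}\|_2^2=h^2\sum_i p_i^{(\alpha)}(\log i-U_d)^2+o(h^2)=h^2V_d(\alpha)+o(h^2).
\]
This proves \eqref{eq:BW-infinitesimal}, and $dV_d=\tfrac d4 I_d$ follows from \eqref{eq:Fisher-info-variance}. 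The step I expect to require care is justifying that this metric-derivative limit really is the pulled-back Riemannian quadratic form $\iota_d^*g_{\BW}$. The route I would take is that on a smooth curve in a Riemannian manifold the squared geodesic distance satisfies $d(\gamma(\alpha+h),\gamma(\alpha))^2=|\dot\gamma|^2h^2+o(h^2)$, and $d_{\BW}$ is the geodesic distance of the Bures--Wasserstein metric. As an alternative, one can check directly that the Bures metric at a diagonal $P$ in a diagonal direction $X$ equals $\tfrac14\sum_i X_{ii}^2/P_{ii}$. With $X_{ii}=d\,\dot p_i$ this gives $\tfrac d4\sum_i \dot p_i^2/p_i=\tfrac d4 I_d$, which is consistent.

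For (F3), the curve is $C^1$, so its Riemannian length is $\int_a^b\sqrt{(\iota_d^*g_{\BW})(1,1)}\,dt=\int_a^b\sqrt{dV_d(t)}\,dt$, which is \eqref{eq:BW-length-formula}. For the Lipschitz bound, the distance between $G_d(\alpha)$ and $G_d(\beta)$ is at most the length of the orbit segment between them. That length is at most $\sqrt{dV_{d,\max}(I)}\,|\alpha-\beta|$, since the segment lies in $I$. Alternatively, one can avoid invoking Riemannian length entirely by using the Hellinger form: the triangle inequality in $\ell^2$ applied along a fine partition, together with the $\ell^2$ bound on $\frac{d}{dt}\sqrt{p^{(t)}}$ from the same derivative computation, gives the same estimate. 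Finally, Popoviciu's bound $V_d\le(\log d)^2/4$, already used in Proposition~\ref{prop:entropy-KL-Fisher}, yields the constant $\tfrac{\sqrt d\log d}{2}$.
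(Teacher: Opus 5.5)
Your proposal is correct and follows essentially the same route as the paper: the diagonal reduction to the Hellinger/Bhattacharyya form for (F1), the derivative $\frac{d}{d\alpha}\sqrt{p_i^{(\alpha)}}=-(\log i-U_d(\alpha))\sqrt{p_i^{(\alpha)}}$ for (F2), and for (F3) the integral/triangle-inequality estimate in Hellinger coordinates followed by Popoviciu's bound. Your extra justification that the metric-derivative limit equals the pulled-back tensor, via the explicit Bures--Wasserstein form $\frac14\sum_i X_{ii}^2/P_{ii}$ on diagonal directions or the general Riemannian expansion of squared distance, fills in a step that the paper only asserts.
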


\begin{proof}
We prove the three assertions in the order in which they are stated.
\begin{enumerate}[label=(F\arabic*),leftmargin=2.1em]
\item \textbf{Closed distance formula.}
Write
\[
G_d(\alpha)=\diag(\lambda_1(\alpha),\dots,\lambda_d(\alpha)),
\qquad
\lambda_i(\alpha)=\frac{d}{\Harm_{d,2\alpha}}i^{-2\alpha}.
\]
Both $G_d(\alpha)$ and $G_d(\beta)$ are diagonal with strictly positive diagonal entries.  Therefore they commute, and their positive square roots are obtained entrywise.  Hence
\[
G_d(\alpha)^{1/2}G_d(\beta)G_d(\alpha)^{1/2}
=
\diag\bigl(\lambda_i(\alpha)\lambda_i(\beta)\bigr)_{i=1}^d.
\]
Taking the positive square root of this diagonal positive definite matrix gives
\[
\Bigl(G_d(\alpha)^{1/2}G_d(\beta)G_d(\alpha)^{1/2}\Bigr)^{1/2}
=
\diag\bigl(\sqrt{\lambda_i(\alpha)\lambda_i(\beta)}\bigr)_{i=1}^d.
\]
Substituting this expression into the definition of $d_{\BW}$ in \eqref{eq:BW-def} gives
\begin{align*}
d_{\BW}\bigl(G_d(\alpha),G_d(\beta)\bigr)^2
&=\tr G_d(\alpha)+\tr G_d(\beta)
 -2\sum_{i=1}^d\sqrt{\lambda_i(\alpha)\lambda_i(\beta)}.
\end{align*}
By Definition~\ref{def:cartan-orbit},
\[
\tr G_d(\alpha)=\sum_{i=1}^d\frac{d}{\Harm_{d,2\alpha}}i^{-2\alpha}=d,
\qquad
\tr G_d(\beta)=d.
\]
For the mixed term, compute directly:
\begin{align*}
\sum_{i=1}^d\sqrt{\lambda_i(\alpha)\lambda_i(\beta)}
&=\sum_{i=1}^d
\sqrt{\frac{d}{\Harm_{d,2\alpha}}i^{-2\alpha}}
\sqrt{\frac{d}{\Harm_{d,2\beta}}i^{-2\beta}}\\
&=\frac{d}{\sqrt{\Harm_{d,2\alpha}\Harm_{d,2\beta}}}
\sum_{i=1}^d i^{-(\alpha+\beta)}\\
&=d\frac{\Harm_{d,\alpha+\beta}}{\sqrt{\Harm_{d,2\alpha}\Harm_{d,2\beta}}}.
\end{align*}
Combining the trace terms and the mixed term proves \eqref{eq:BW-harmonic}.
Since $p_i^{(\alpha)}=\lambda_i(\alpha)/d$, one has
\[
\sum_{i=1}^d\sqrt{\lambda_i(\alpha)\lambda_i(\beta)}
=d\sum_{i=1}^d\sqrt{p_i^{(\alpha)}p_i^{(\beta)}}.
\]
Thus
\begin{align*}
d_{\BW}\bigl(G_d(\alpha),G_d(\beta)\bigr)^2
&=2d\left(1-\sum_{i=1}^d\sqrt{p_i^{(\alpha)}p_i^{(\beta)}}\right)\\
&=d\sum_{i=1}^d\left(\sqrt{p_i^{(\alpha)}}-\sqrt{p_i^{(\beta)}}\right)^2,
\end{align*}
because $\sum_i p_i^{(\alpha)}=\sum_i p_i^{(\beta)}=1$.  This proves \eqref{eq:BW-hellinger}.

\item \textbf{Induced line element.}
Define
\[
q_i(\alpha):=\sqrt{p_i^{(\alpha)}}
=\exp\left(-\alpha\log i-\frac12\log \Harm_{d,2\alpha}\right).
\]
Since $g_d(\alpha)=\frac12\log d-\frac12\log\Harm_{d,2\alpha}$, differentiating gives
\[
\frac{d}{d\alpha}\log\Harm_{d,2\alpha}=-2g_d'(\alpha)=-2U_d(\alpha).
\]
Therefore
\begin{align*}
\frac{d}{d\alpha}\log q_i(\alpha)
&=-\log i-\frac12\frac{d}{d\alpha}\log \Harm_{d,2\alpha}\\
&=-\log i+U_d(\alpha)
=-(\log i-U_d(\alpha)).
\end{align*}
Multiplying by $q_i(\alpha)$ yields
\[
q_i'(\alpha)=-(\log i-U_d(\alpha))q_i(\alpha).
\]
Hence
\begin{align*}
\|q'(\alpha)\|_2^2
&=\sum_{i=1}^d (\log i-U_d(\alpha))^2q_i(\alpha)^2\\
&=\sum_{i=1}^d (\log i-U_d(\alpha))^2p_i^{(\alpha)}
=V_d(\alpha).
\end{align*}
Now apply \eqref{eq:BW-hellinger} with $\beta=\alpha+h$:
\[
\frac{d_{\BW}(G_d(\alpha+h),G_d(\alpha))^2}{h^2}
=d\left\|\frac{q(\alpha+h)-q(\alpha)}{h}\right\|_2^2.
\]
Since $q$ is smooth as a finite-dimensional map, the difference quotient converges to $q'(\alpha)$ as $h\to0$.  Passing to the limit gives
\[
\lim_{h\to0}\frac{d_{\BW}(G_d(\alpha+h),G_d(\alpha))^2}{h^2}
=d\|q'(\alpha)\|_2^2=dV_d(\alpha),
\]
which proves \eqref{eq:BW-infinitesimal}.  Proposition~\ref{prop:g-concavity} gives $I_d(\alpha)=4V_d(\alpha)$, so the pulled-back quadratic form is
\[
(\iota_d^*g_{\BW})_\alpha(\dot\alpha,\dot\alpha)=dV_d(\alpha)\dot\alpha^2=\frac d4 I_d(\alpha)\dot\alpha^2.
\]
This proves \eqref{eq:BW-line-element}.

\item \textbf{Length coordinate and interval bounds.}
For a $C^1$ curve $\alpha(t)$ in the parameter interval, the preceding line element gives the Bures speed
\[
\left\|\frac{d}{dt}G_d(\alpha(t))\right\|_{\BW}
=
\sqrt{dV_d(\alpha(t))}\,|\alpha'(t)|.
\]
Taking the monotone parametrization $\alpha(t)=t$ on $[a,b]$ therefore gives
\[
L_{\BW}(\iota_d|_I)=\int_a^b\sqrt{dV_d(t)}\,dt,
\]
which is \eqref{eq:BW-length-formula}.  For arbitrary $\alpha,\beta\in I$, assume first that $\alpha<\beta$; the case $\beta<\alpha$ is identical after exchanging the endpoints, and the case $\alpha=\beta$ is trivial.  From \eqref{eq:BW-hellinger}, with $q(t)=\sqrt{p^{(t)}}$, we have
\[
d_{\BW}(G_d(\alpha),G_d(\beta))
=\sqrt d\,\|q(\beta)-q(\alpha)\|_2.
\]
Since $q$ is continuously differentiable as a finite-dimensional vector-valued map,
\[
q(\beta)-q(\alpha)=\int_\alpha^\beta q'(t)\,dt.
\]
Taking norms and using the triangle inequality for the finite-dimensional integral gives
\[
\|q(\beta)-q(\alpha)\|_2
\le
\int_\alpha^\beta \|q'(t)\|_2\,dt
=
\int_\alpha^\beta \sqrt{V_d(t)}\,dt.
\]
Multiplying by $\sqrt d$ yields
\[
d_{\BW}(G_d(\alpha),G_d(\beta))
\le \int_\alpha^\beta\sqrt{dV_d(t)}\,dt
\le \sqrt{dV_{d,\max}(I)}\,|\alpha-\beta|.
\]
Finally, since $\log i\in[0,\log d]$, the variance of the random variable $\log i$ under any probability law is bounded by $(\log d)^2/4$.  Thus $V_{d,\max}(I)\le(\log d)^2/4$, giving the last inequality in \eqref{eq:BW-lipschitz}.
\end{enumerate}
\end{proof}

\begin{remark}[Intrinsic metric consequence]
This identifies the Cartan power-law orbit as a finite-dimensional Fisher--Bures curve.  The phrase does not mean that the Bures--Wasserstein metric is the canonical affine-invariant metric of the symmetric-space realization $\GL(d)/\Orth(d)$.  Rather, $\SPD(d)$ is the quotient state space for singular geometry, and the Bures--Wasserstein metric is the transport metric used on this state space because, on the trace-normalized diagonal orbit, its pullback is exactly $d/4$ times the Fisher information metric.  Consequently, bounds on the exponent coordinate are simultaneously Bures path-length bounds and Fisher path-length bounds along this orbit.
\end{remark}


\section{Main result: slack-aware Cartan-coordinate rigidity of budgeted residual cocycles}
\label{sec:main-rigidity}

The power-law orbit is a one-dimensional chart inside the Cartan chamber.  The local matrix-product budget, however, directly controls only a normalized top-radial functional on the quotient path.  The main result is therefore organized as a deterministic margin statement.  Its local margin is
\[
 b_k:=\log\lambda_k+\eta_k^{\mathrm{nb}},
\]
where $\lambda_k$ is a geometric-mean interface radial amplitude and $\eta_k^{\mathrm{nb}}$ is a measurable non-backtracking slack.  The theorem says that the fitted exponent coordinate moves only when this radial margin or the signed chart residual changes.  Full Cartan-spectral path control is deliberately stated as a separate theorem by adding a full-spectrum Bures/Hellinger chart-residual variation.  This organization prevents the scalar top-radial estimate from being confused with a full-spectrum rigidity statement.

\subsection{Local transport coefficients and radial displacement on the quotient path}

\begin{definition}[Interface geometric-mean radial amplitude]
\label{def:lambda}
For matrices $A,B\in\R^{d\times d}$ with $\|A\|_2,\|B\|_2,\|AB\|_2>0$, define
\[
\Lambda(A,B):=\frac{\|AB\|_2}{\sqrt{\|A\|_2\|B\|_2}}.
\]
For a chain $(W_k)_{k=0}^{L-1}$ write
\[
\lambda_k:=\Lambda(W_{k+1},W_k),\qquad k=0,1,\dots,L-2.
\]
\end{definition}

\begin{remark}[Interpretation and scaling of $\Lambda$]
The coefficient $\Lambda(A,B)$ is a geometric-mean radial amplitude rather than a scale-free angular alignment coefficient.  It compares the two-step norm $\|AB\|_2$ with the geometric mean of the adjacent one-step norms.  The genuinely scale-free submultiplicative ratio would be $\|AB\|_2/(\|A\|_2\|B\|_2)$; the present normalization is used because it directly controls changes in the one-step top radii.  The quantity is scale sensitive: for positive scalars $c,s$,
\[
\Lambda(cA,sB)=\sqrt{cs}\,\Lambda(A,B).
\]
Consequently all budget statements involving $\Lambda$ are made only after a layerwise scale convention, here $\|W_k\|_F^2=d$, has been fixed.  Cancellation, mismatch, and contraction at an interface are recorded jointly by $\Lambda$ and the slack in Definition~\ref{def:nonbacktracking-slack}.
\end{remark}

\begin{definition}[Measurable non-backtracking slack]
\label{def:nonbacktracking-slack}
For an interface $(W_k,W_{k+1})$ define
\[
\eta_k^{\mathrm{nb}}
:=
\left[
\log\frac{\max\{\|W_{k+1}\|_2,\|W_k\|_2\}}{\|W_{k+1}W_k\|_2}
\right]_+,
\qquad [u]_+:=\max\{u,0\}.
\]
Thus $\eta_k^{\mathrm{nb}}=0$ exactly when
\[
\|W_{k+1}W_k\|_2\ge \max\{\|W_{k+1}\|_2,\|W_k\|_2\}.
\]
This condition is a useful clean special case, but it is not imposed in the main theorem.
\end{definition}

\begin{definition}[Combined local radial margin]
\label{def:combined-radial-margin}
For each interface define
\[
 b_k:=\log\lambda_k+\eta_k^{\mathrm{nb}}.
\]
This is the local scalar margin appearing in the main theorem.  Under the hypotheses of Lemma~\ref{lem:slack-radial-bound}, one has $b_k\ge0$.
\end{definition}

\begin{lemma}[Slack-aware radial transport bound]
\label{lem:slack-radial-bound}
Let $P_k:=W_k^\top W_k$ and assume $\|W_k\|_F^2=d$ for all $k$.  Then $b_k\ge0$ and, for every interface,
\begin{equation}\label{eq:radial-local-slack}
 |\rho_d(P_{k+1})-\rho_d(P_k)|
 \le 2b_k.
\end{equation}
Consequently, for every $0\le m<n\le L-1$,
\begin{equation}\label{eq:radial-global-slack}
 |\rho_d(P_n)-\rho_d(P_m)|
 \le 2\sum_{j=m}^{n-1}b_j.
\end{equation}
\end{lemma}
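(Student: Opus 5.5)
Write $a:=\log\|W_{k+1}\|_2$, $c:=\log\|W_k\|_2$ and $t:=\log\|W_{k+1}W_k\|_2$.  Under the Frobenius normalization $\tr P_k=d$, Definition~\ref{def:radial-functional} gives $\rho_d(P_k)=\tfrac12\log\lambda_1(P_k)=\log\|W_k\|_2$, and likewise $\rho_d(P_{k+1})=a$.  So the left side of \eqref{eq:radial-local-slack} is simply $|a-c|$, and we must show
\[
|a-c|\le 2\Bigl(t-\tfrac{a+c}{2}\Bigr)+2\bigl[\max\{a,c\}-t\bigr]_+ .
\]
(If some product vanishes the quantities are undefined. Implicitly the lemma assumes $\|W_{k+1}W_k\|_2>0$, e.g.\ full-rank factors, and I will state this.)

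The argument is a two-case split on the sign of $\max\{a,c\}-t$.  Write $M=\max\{a,c\}$ and $m=\min\{a,c\}$, so that $|a-c|=M-m$ and $a+c=M+m$.

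\textbf{Case $t\ge M$.}  The slack is zero, and the right side is $2t-M-m\ge 2M-M-m=M-m$.

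\textbf{Case $t<M$.}  The right side is $2t-M-m+2M-2t=M-m$, so equality holds exactly.

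In both cases the local bound holds.  The same computation gives $b_k\ge0$: in the first case $b_k=t-\tfrac{M+m}{2}\ge\tfrac{M-m}{2}\ge0$, and in the second case $b_k=\tfrac{M-m}{2}\ge0$.  So in fact $2b_k\ge|a-c|$ always, which is the inequality just proved.

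The global bound \eqref{eq:radial-global-slack} then follows by telescoping $\rho_d(P_n)-\rho_d(P_m)=\sum_{j=m}^{n-1}\bigl(\rho_d(P_{j+1})-\rho_d(P_j)\bigr)$, applying the triangle inequality, and using the local bound on each interface $j$.

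There is no real obstacle here.  The only care needed is the identification $\rho_d(P_k)=\log\|W_k\|_2$ via the trace normalization, together with the positivity of the norms so that the logarithms are defined.  Notably, submultiplicativity $t\le a+c$ is not even needed.
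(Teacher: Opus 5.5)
Your proposal is correct and follows essentially the paper's route: you identify $\rho_d(P_k)=\log\|W_k\|_2$ on the trace-$d$ slice, use $\eta_k^{\mathrm{nb}}\ge\log\bigl(\max\{\|W_{k+1}\|_2,\|W_k\|_2\}/\|W_{k+1}W_k\|_2\bigr)$ to get $b_k\ge0$ and $2b_k\ge|\log\|W_{k+1}\|_2-\log\|W_k\|_2|$, and then telescope. The only difference is cosmetic: the paper splits on which norm is larger and uses $[u]_+\ge u$ directly, whereas you split on the sign of the slack argument, which additionally shows that equality holds whenever the slack is active.
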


\begin{proof}
Set
\[
 a:=\|W_{k+1}\|_2,
 \qquad
 b:=\|W_k\|_2,
 \qquad
 p:=\|W_{k+1}W_k\|_2,
 \qquad
 m_*:=\max\{a,b\}.
\]
By Definition~\ref{def:lambda}, $p=\lambda_k\sqrt{ab}$.  By Definition~\ref{def:nonbacktracking-slack},
\[
 m_*\le e^{\eta_k^{\mathrm{nb}}}p
 =e^{\eta_k^{\mathrm{nb}}}\lambda_k\sqrt{ab}.
\]
Since $m_*\ge\sqrt{ab}$, this gives $e^{\eta_k^{\mathrm{nb}}}\lambda_k\ge1$, or $b_k\ge0$.  If $a\ge b$, then $m_*=a$ and hence
\[
 \sqrt{a/b}\le e^{\eta_k^{\mathrm{nb}}}\lambda_k,
\]
so $\log(a/b)\le2b_k$.  If $b\ge a$, the same argument gives $\log(b/a)\le2b_k$.  Hence
\[
 |\log a-\log b|\le 2b_k.
\]
On the trace-$d$ slice, $\rho_d(P_k)=\log\|W_k\|_2$, so \eqref{eq:radial-local-slack} follows.  Summing the local inequalities gives \eqref{eq:radial-global-slack}.
\end{proof}

\begin{remark}[Non-backtracking as a zero-slack case]
If $\eta_k^{\mathrm{nb}}=0$ for every interface, Lemma~\ref{lem:slack-radial-bound} reduces to the clean non-backtracking estimate used in the first version of this theorem.  The slack-aware form is preferable because the slack is directly measurable and can be reported together with the fitted spectral coordinates.
\end{remark}

\subsection{Orbit charts: radial residuals and full-spectrum chart error}

\begin{definition}[Radial power-law chart and signed residual]
\label{def:radial-chart-residual}
Let $(P_k)_{k=0}^{L-1}\subset\SPD_d^{(1)}$ and let $\alpha_k>0$.  The signed radial chart residual is
\begin{equation}\label{eq:signed-residual-def}
 r_k:=\rho_d(P_k)-g_d(\alpha_k).
\end{equation}
When $|r_k|\le\delta_k$, the scalar chart error is at most $\delta_k$.  The residual variation along a chain is
\begin{equation}\label{eq:residual-TV-def}
 \TV(r):=\sum_{k=0}^{L-2}|r_{k+1}-r_k|.
\end{equation}
\end{definition}

\begin{definition}[Full-spectrum Cartan chart errors]
\label{def:full-cartan-chart-error}
For $P_k\in\SPD_d^{(1)}$ and a fitted exponent $\alpha_k$, define
\begin{align}
 e_k^{\log}
 &:=
 \bigl\|\mathbf c(P_k)-\mathbf c(G_d(\alpha_k))\bigr\|_\infty,
 \label{eq:elog-def}\\
 e_k^{\BW}
 &:=
 d_{\BW}\bigl(\Cart(P_k),G_d(\alpha_k)\bigr),
 \label{eq:eBW-def}\\
 \bar e_k^{\BW}
 &:={d}^{-1/2}e_k^{\BW}.
 \label{eq:eBW-normalized-def}
\end{align}
The signed residual $r_k$ controls only the normalized top-radial coordinate.  The quantities $e_k^{\log}$ and $e_k^{\BW}$ measure the full spectral distance from the actual Cartan point to the fitted power-law orbit.  The normalized error $\bar e_k^{\BW}$ is dimension-free on the trace-$d$ slice and equals the Hellinger-type root distance between the spectral energy distribution of $P_k$ and the power-law distribution $p^{(\alpha_k)}$.
\end{definition}

\begin{definition}[Explicit fitted chart maps]
\label{def:chart-maps}
Let $I=[\alpha_{\min},\alpha_{\max}]\subset(0,\infty)$ and assume $d\ge2$.  The fitted coordinate in the main theorem should be generated by a specified chart map $\mathcal C_I$; common choices include the following.
\begin{enumerate}[label=(\alph*),leftmargin=2.0em]
\item The \emph{top-radial chart} is
\[
\alpha_I^{\rho}(P):=g_d^{-1}\bigl(\Pi_{g_d(I)}(\rho_d(P))\bigr),
\]
where $\Pi_{g_d(I)}$ denotes Euclidean projection onto the interval $g_d(I)$.  If $\rho_d(P)\in g_d(I)$, this choice gives zero signed radial residual.  It is the sharpest scalar chart but need not coincide with a full-spectrum power-law fit.
\item The \emph{Bures projection chart} is any minimizer
\[
\alpha_I^{\BW}(P)\in\argmin_{\alpha\in I}
 d_{\BW}\bigl(\Cart(P),G_d(\alpha)\bigr).
\]
Existence follows from compactness of $I$ and continuity of the objective.  This choice minimizes the full-spectrum Bures chart error.
\item Given a rank window $\mathcal R\subset\{1,\dots,d\}$, the \emph{log-spectrum least-squares chart} is any minimizer
\[
\alpha_{I,\mathcal R}^{\mathrm{LS}}(W)
\in
\argmin_{\alpha\in I,\ c\in\R}
\sum_{i\in\mathcal R}
\bigl(\log\sigma_i(W)-c+\alpha\log i\bigr)^2.
\]
This is the chart most directly aligned with empirical log--log power-law regression.  Its theorem residuals must be measured unless a separate uniform regression-residual bound is available.
\end{enumerate}
\end{definition}

\begin{remark}[Why the chart map must be specified]
Without an explicit rule for choosing $\alpha_k$, the signed residual $r_k$ can absorb arbitrary chart choices.  The predictive content of the rigidity theorem comes from applying it to a specified chart map and then measuring the residual variation associated with that map.
\end{remark}

\begin{lemma}[From uniform log-window residuals to radial residuals]
\label{lem:logLS-to-radial-residual}
Let $\mathcal R\subset\{1,\ldots,d\}$ be a rank window containing $1$.  Suppose a log-spectrum fit satisfies
\[
\max_{i\in\mathcal R}\left|\log\sigma_i(W)-c+\alpha\log i\right|\le \delta.
\]
If $\|W\|_F^2=d$ and the fitted intercept is normalized so that $c=g_d(\alpha)$, then
\[
 |\rho_d(W^\top W)-g_d(\alpha)|\le \delta.
\]
For ordinary least-squares fits without a uniform residual bound or without $1\in\mathcal R$, the radial residual $r_k$ should be measured directly rather than inferred from the regression loss.
\end{lemma}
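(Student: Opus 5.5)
The plan is to specialize the uniform window bound to the index $i=1$ and then identify the two resulting quantities with the objects in the statement. Since $1\in\mathcal R$ and $\log 1=0$, the hypothesis at $i=1$ reads
\[
\bigl|\log\sigma_1(W)-c\bigr|\le\delta .
\]
With the normalization $c=g_d(\alpha)$, this becomes $|\log\sigma_1(W)-g_d(\alpha)|\le\delta$.

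Next I will identify $\log\sigma_1(W)$ with the radial functional $\rho_d(W^\top W)$. Set $P:=W^\top W$. Then $\tr P=\|W\|_F^2=d$, so $P$ lies on the trace-$d$ slice of Definition~\ref{def:trace-slice}. Definition~\ref{def:radial-functional} therefore gives
\[
\rho_d(P)=\tfrac12\log\lambda_1(P)=\tfrac12\log\sigma_1(W)^2=\log\sigma_1(W)=\log\|W\|_2 .
\]
This is the same identity used in the proof of Lemma~\ref{lem:slack-radial-bound}. Substituting it into the specialized inequality gives $|\rho_d(W^\top W)-g_d(\alpha)|\le\delta$, which is the claim. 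The closing sentence of the lemma about ordinary least-squares fits is advisory rather than a mathematical assertion, so it needs no proof. I will only remark briefly on why it is needed: without $1\in\mathcal R$, or without a uniform residual bound, the value at $i=1$ is not controlled by the fit.

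There is one subtlety to handle explicitly: $\sigma_1(W)>0$ must hold for the logarithm to make sense. This follows from $\|W\|_F^2=d>0$. Full rank is not needed, since only the top singular value enters. There is no real obstacle beyond this; the only point requiring care is that the normalization $c=g_d(\alpha)$ is exactly the orbit intercept from \eqref{eq:g-def}. That choice is what makes the $i=1$ residual coincide with the signed radial residual $r$ of Definition~\ref{def:radial-chart-residual}.
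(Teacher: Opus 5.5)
Your proposal is correct and follows essentially the paper's proof: evaluate the uniform residual bound at $i=1$ (where $\log 1=0$), substitute $c=g_d(\alpha)$, and use that $\rho_d(W^\top W)=\log\sigma_1(W)$ on the trace-$d$ slice. Your remarks that $\sigma_1(W)>0$ follows from $\|W\|_F^2=d$ and that full rank is not needed are sound additions.
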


\begin{proof}
Since $1\in\mathcal R$, the fit inequality at $i=1$ gives $|\log\sigma_1(W)-c|\le\delta$.  On the trace-$d$ slice, $\rho_d(W^\top W)=\log\|W\|_2=\log\sigma_1(W)$.  With $c=g_d(\alpha)$, the claim follows.
\end{proof}

\begin{definition}[Full-chart residual variations]
\label{def:full-chart-residual-variation}
For a charted path $(P_k,\alpha_k)$, define the Bures square-root residual vector and the log-Cartan residual vector by
\begin{align*}
z_k^{\BW}
&:=
\bigl(\sqrt{\lambda_1(P_k)},\dots,\sqrt{\lambda_d(P_k)}\bigr)
-
\bigl(\sqrt{\lambda_1(\alpha_k)},\dots,\sqrt{\lambda_d(\alpha_k)}\bigr),\\
u_k^{\log}
&:=
\mathbf c(P_k)-\mathbf c(G_d(\alpha_k)),
\end{align*}
where $\lambda_i(\alpha_k)$ denotes the $i$-th diagonal entry of $G_d(\alpha_k)$.  Define
\begin{align}
\TV_{\BW}^{\mathrm{chart}}
&:=\sum_{k=0}^{L-2}\|z_{k+1}^{\BW}-z_k^{\BW}\|_2, \label{eq:TV-BW-chart}\\
\TV_{\log}^{\mathrm{chart}}
&:=\sum_{k=0}^{L-2}\|u_{k+1}^{\log}-u_k^{\log}\|_2. \label{eq:TV-log-chart}
\end{align}
Since $\|z_k^{\BW}\|_2=e_k^{\BW}$ for diagonal Cartan representatives, the endpoint-error bound $\|z_{k+1}^{\BW}-z_k^{\BW}\|_2\le e_{k+1}^{\BW}+e_k^{\BW}$ is always available, but the variation form is sharper when chart errors are coherent across depth.
\end{definition}

\begin{assumption}[Metric power-law chart residual]
\label{ass:metric-chart-residual}
A charted sequence $(P_k,\alpha_k)$ has Bures metric chart residuals $(\varepsilon_k^{\BW})$ if
\[
e_k^{\BW}=d_{\BW}\bigl(\Cart(P_k),G_d(\alpha_k)\bigr)
\le \varepsilon_k^{\BW}.
\]
Equivalently, for diagonal trace-normalized spectra this is a Hellinger-type bound on the corresponding spectral energy measures.  This metric assumption is weaker and more empirical than a uniform multiplicative power-law fit across all singular values.
\end{assumption}

\begin{assumption}[$\delta_{\mathrm{pl}}$-approximate power-law fit]
\label{ass:approx-powerlaw-frob}
Fix $0\le\delta_{\mathrm{pl}}<1$.  Assume each layer $W_k$ satisfies $\|W_k\|_F^2=d$ and there exist parameters $C_k>0$ and $\alpha_k>0$ such that
\[
(1-\delta_{\mathrm{pl}})C_ki^{-\alpha_k}
\le
\sigma_i(W_k)
\le
(1+\delta_{\mathrm{pl}})C_ki^{-\alpha_k},
\qquad i=1,\dots,d.
\]
\end{assumption}

\begin{lemma}[Approximate power laws yield radial and full Cartan charts]
\label{lem:approx-powerlaw-chart}
Under Assumption~\ref{ass:approx-powerlaw-frob}, set
\[
\eta_{\mathrm{pl}}
:=
\log\frac{1+\delta_{\mathrm{pl}}}{1-\delta_{\mathrm{pl}}}.
\]
Then
\begin{align}
 |r_k|=|\rho_d(P_k)-g_d(\alpha_k)|&\le \eta_{\mathrm{pl}}, \label{eq:approx-radial-chart}\\
 e_k^{\log}&\le 2\eta_{\mathrm{pl}}, \label{eq:approx-log-chart}\\
 e_k^{\BW}&\le \sqrt d\,(e^{\eta_{\mathrm{pl}}}-1), \label{eq:approx-BW-chart}\\
 \bar e_k^{\BW}&\le e^{\eta_{\mathrm{pl}}}-1. \label{eq:approx-normalized-BW-chart}
\end{align}
\end{lemma}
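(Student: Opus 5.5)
The plan is to compare the actual spectrum with the exact orbit point $G_d(\alpha_k)$ entrywise, through a single multiplicative sandwich. Write $\lambda_i:=\lambda_i(P_k)=\sigma_i(W_k)^2$ and $\lambda_i(\alpha_k)=\frac{d}{\Harm_{d,2\alpha_k}}i^{-2\alpha_k}$. The hypothesis gives $(1-\delta_{\mathrm{pl}})^2C_k^2 i^{-2\alpha_k}\le\lambda_i\le(1+\delta_{\mathrm{pl}})^2C_k^2i^{-2\alpha_k}$. Summing over $i$ and using $\sum_i\lambda_i=\|W_k\|_F^2=d$ yields
\[
(1-\delta_{\mathrm{pl}})^2C_k^2\Harm_{d,2\alpha_k}\le d\le(1+\delta_{\mathrm{pl}})^2C_k^2\Harm_{d,2\alpha_k},
\]
so that $C_k^2\in\bigl[(1+\delta_{\mathrm{pl}})^{-2},(1-\delta_{\mathrm{pl}})^{-2}\bigr]\cdot d/\Harm_{d,2\alpha_k}$. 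Dividing the two sandwiches then gives the key entrywise ratio bound
\[
\Bigl(\tfrac{1-\delta_{\mathrm{pl}}}{1+\delta_{\mathrm{pl}}}\Bigr)^2\le\frac{\lambda_i}{\lambda_i(\alpha_k)}\le\Bigl(\tfrac{1+\delta_{\mathrm{pl}}}{1-\delta_{\mathrm{pl}}}\Bigr)^2,
\]
that is, $|\log\lambda_i-\log\lambda_i(\alpha_k)|\le2\eta_{\mathrm{pl}}$ for all $i$. Since the multiplicative power-law bounds are stated in the same index order as the ordered singular values, no reordering issue arises.

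The individual estimates follow from this ratio bound.
\begin{itemize}
\item The bound \eqref{eq:approx-log-chart} is immediate: $e_k^{\log}$ is the sup of exactly these log differences.
\item For \eqref{eq:approx-radial-chart}, I use that on the trace-$d$ slice $\rho_d(P_k)=\frac12\log\lambda_1$ and, by \eqref{eq:rho-on-orbit}, $g_d(\alpha_k)=\frac12\log\lambda_1(\alpha_k)$. Hence $|r_k|\le\frac12\cdot2\eta_{\mathrm{pl}}=\eta_{\mathrm{pl}}$.
\item For the Bures bound \eqref{eq:approx-BW-chart}, both $\Cart(P_k)$ and $G_d(\alpha_k)$ are diagonal. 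By the commuting computation in the proof of Theorem~\ref{thm:fisher-bures}(F1), $e_k^{\BW}=\|z_k^{\BW}\|_2$, with $z_k^{\BW}=(\sqrt{\lambda_i}-\sqrt{\lambda_i(\alpha_k)})_i$. The ratio bound gives $\sqrt{\lambda_i}/\sqrt{\lambda_i(\alpha_k)}\in[e^{-\eta_{\mathrm{pl}}},e^{\eta_{\mathrm{pl}}}]$. Since $1-e^{-\eta}\le e^{\eta}-1$, this yields $|\sqrt{\lambda_i}-\sqrt{\lambda_i(\alpha_k)}|\le(e^{\eta_{\mathrm{pl}}}-1)\sqrt{\lambda_i(\alpha_k)}$. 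Squaring, summing, and using $\sum_i\lambda_i(\alpha_k)=\tr G_d(\alpha_k)=d$ gives $e_k^{\BW}\le\sqrt d\,(e^{\eta_{\mathrm{pl}}}-1)$.
\item The normalized bound \eqref{eq:approx-normalized-BW-chart} follows by dividing by $\sqrt d$.
\end{itemize}

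The main point needing care is the normalization step. The constant $C_k$ is not the orbit constant, and only the Frobenius constraint pins it down. This is precisely why the radial error doubles to $\eta_{\mathrm{pl}}$ rather than $\frac12\eta_{\mathrm{pl}}$, and the log error to $2\eta_{\mathrm{pl}}$. I will make sure both sandwich directions are combined so that the worst case is captured, giving the exponent $2$ in the ratio.
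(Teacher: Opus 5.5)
Your proposal is correct and follows essentially the same route as the paper. Both arguments use the Frobenius constraint to sandwich $C_k^2$ against $d/\Harm_{d,2\alpha_k}$, derive the entrywise bound $|\log\lambda_i(P_k)-\log\lambda_i(\alpha_k)|\le2\eta_{\mathrm{pl}}$ (which gives the radial bound at $i=1$ and the $e_k^{\log}$ bound), and then obtain the Bures bound from the diagonal square-root representation together with $\tr G_d(\alpha_k)=d$.
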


\begin{proof}
The radial estimate is the same normalization argument as in Proposition~\ref{prop:cartan-membership}.  From the fit inequalities and $\|W_k\|_F^2=d$,
\[
(1-\delta_{\mathrm{pl}})^2 C_k^2\Harm_{d,2\alpha_k}
\le d\le
(1+\delta_{\mathrm{pl}})^2 C_k^2\Harm_{d,2\alpha_k}.
\]
Combining this with the corresponding bound for the top singular value gives
\[
\sqrt{\frac d{\Harm_{d,2\alpha_k}}}
\frac{1-\delta_{\mathrm{pl}}}{1+\delta_{\mathrm{pl}}}
\le
\|W_k\|_2
\le
\sqrt{\frac d{\Harm_{d,2\alpha_k}}}
\frac{1+\delta_{\mathrm{pl}}}{1-\delta_{\mathrm{pl}}},
\]
which proves \eqref{eq:approx-radial-chart}.  For the full logarithmic Cartan vector, the same inequalities give for every $i$
\[
\left|
\log\sigma_i(W_k)^2
-
\log\left(\frac d{\Harm_{d,2\alpha_k}}i^{-2\alpha_k}\right)
\right|
\le 2\eta_{\mathrm{pl}},
\]
which is \eqref{eq:approx-log-chart}.  Finally, write the diagonal entries of $\Cart(P_k)$ as $\lambda_i(G_d(\alpha_k))e^{\Delta_i}$ with $|\Delta_i|\le2\eta_{\mathrm{pl}}$.  Since both matrices are diagonal, the Bures distance is the Euclidean distance between their entrywise square roots, hence
\[
 e_k^{\BW}
 \le
 (e^{\eta_{\mathrm{pl}}}-1)
 \left(\sum_{i=1}^d\lambda_i(G_d(\alpha_k))\right)^{1/2}
 =
 \sqrt d\,(e^{\eta_{\mathrm{pl}}}-1).
\]
Dividing by $\sqrt d$ gives \eqref{eq:approx-normalized-BW-chart}.
\end{proof}

\subsection{Top-radial coordinate rigidity}

\begin{theorem}[Slack-aware top-radial Cartan-coordinate rigidity]
\label{thm:cartan-rigidity}
Assume $d\ge2$ and let $(W_k)_{k=0}^{L-1}$ be a chain of full-rank matrices in $\R^{d\times d}$ with $L\ge2$.  Set $P_k:=W_k^\top W_k$ and assume $\|W_k\|_F^2=d$ for all $k$.  Let $\lambda_k=\Lambda(W_{k+1},W_k)$, let $\eta_k^{\mathrm{nb}}$ be the slack in Definition~\ref{def:nonbacktracking-slack}, and set $b_k=\log\lambda_k+\eta_k^{\mathrm{nb}}$.  Fix a chart map $\mathcal C_I$ as in Definition~\ref{def:chart-maps}, or any other specified chart rule, and set $\alpha_k=\mathcal C_I(P_k)\in I=[\alpha_{\min},\alpha_{\max}]\subset(0,\infty)$.  Define the signed residuals $r_k=\rho_d(P_k)-g_d(\alpha_k)$.  Let
\[
 m_d(I):=\min_{\alpha\in I}g_d'(\alpha)>0.
\]
Then the following scalar top-radial conclusions hold.

\begin{enumerate}[label=(C\arabic*),leftmargin=2.2em]
\item \textbf{Radial shortness.}
For every $0\le m<n\le L-1$,
\begin{equation}\label{eq:radial-shortness-slack-main}
 |\rho_d(P_n)-\rho_d(P_m)|
 \le
 2\sum_{j=m}^{n-1}b_j.
\end{equation}

\item \textbf{Coordinate rigidity with signed residual variation.}
For every $0\le m<n\le L-1$, there is a point $\xi_{m,n}$ between $\alpha_m$ and $\alpha_n$ such that
\begin{equation}\label{eq:alpha-mn-slack-signed}
 |\alpha_n-\alpha_m|
 \le
 \frac{2\sum_{j=m}^{n-1}b_j+|r_n-r_m|}{g_d'(\xi_{m,n})}
 \le
 \frac{2\sum_{j=m}^{n-1}b_j+|r_n-r_m|}{m_d(I)}.
\end{equation}
In particular,
\begin{equation}\label{eq:alpha-local-exact}
 |\alpha_{k+1}-\alpha_k|
 \le
 \frac{2b_k+|r_{k+1}-r_k|}{m_d(I)}.
\end{equation}
If only unsigned chart errors $|r_k|\le\delta_k$ are known, then
\begin{equation}\label{eq:alpha-local-delta}
 |\alpha_{k+1}-\alpha_k|
 \le
 \frac{2b_k+
 \delta_{k+1}+\delta_k}{m_d(I)}.
\end{equation}

\item \textbf{Total variation bound.}
The fitted exponent profile satisfies
\begin{equation}\label{eq:TV-slack-signed}
 \sum_{k=0}^{L-2}|\alpha_{k+1}-\alpha_k|
 \le
 \frac{2\sum_{k=0}^{L-2}b_k+
 \TV(r)}{m_d(I)}.
\end{equation}
If $|r_k|\le\bar\delta$ for all $k$, then the cruder but sometimes convenient bound
\begin{equation}\label{eq:TV-slack-crude-delta}
 \sum_{k=0}^{L-2}|\alpha_{k+1}-\alpha_k|
 \le
 \frac{2\sum_{k=0}^{L-2}b_k
 +2(L-1)\bar\delta}{m_d(I)}
\end{equation}
holds.

\item \textbf{Small-margin specialization.}
If $b_k\le B/L$ and $|r_{k+1}-r_k|\le R/L$ for every $k$, then
\begin{equation}\label{eq:small-margin-local}
 \max_{0\le k\le L-2}|\alpha_{k+1}-\alpha_k|
 \le
 \frac{2B+R}{L\,m_d(I)}.
\end{equation}
If $M\ge1$, $\lambda_k\le M^{2/L}$, $\eta_k^{\mathrm{nb}}=0$, and $r_{k+1}=r_k$ for every $k$, then this gives the zero-slack exact-chart estimate
\begin{equation}\label{eq:uniform-local-slack}
 \max_{0\le k\le L-2}|\alpha_{k+1}-\alpha_k|
 \le
 \frac{4\log M}{L\,m_d(I)}.
\end{equation}
More generally, if only $\lambda_k\le M^{2/L}$ is known, then $b_k\le 2\log M/L+\eta_k^{\mathrm{nb}}$ and
\begin{equation}\label{eq:uniform-TV-slack}
 \sum_{k=0}^{L-2}|\alpha_{k+1}-\alpha_k|
 \le
 \frac{4\log M+2\sum_{k=0}^{L-2}\eta_k^{\mathrm{nb}}+\TV(r)}{m_d(I)}.
\end{equation}
\end{enumerate}
\end{theorem}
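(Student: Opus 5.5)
The proof is a short chain of deductions from Lemma~\ref{lem:slack-radial-bound} and the mean value theorem applied to $g_d$. Lemma~\ref{lem:md-conditioning} and Proposition~\ref{prop:g-concavity} give the conditioning. Part (C1) is exactly \eqref{eq:radial-global-slack} of Lemma~\ref{lem:slack-radial-bound}. That lemma already needs only $\|W_k\|_F^2=d$ and full rank, so that $\|W_{k+1}W_k\|_2>0$ and $\Lambda$ is defined. So I would simply cite it.

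For (C2), write $g_d(\alpha_k)=\rho_d(P_k)-r_k$ by the definition of the signed residual. Then
\[
g_d(\alpha_n)-g_d(\alpha_m)=\bigl(\rho_d(P_n)-\rho_d(P_m)\bigr)-(r_n-r_m).
\]
Assume $\alpha_n\ne\alpha_m$; the equal case is trivial, with $\xi_{m,n}$ chosen arbitrarily. Since $g_d$ is $C^1$, the mean value theorem gives $\xi_{m,n}$ strictly between $\alpha_m$ and $\alpha_n$ with $g_d(\alpha_n)-g_d(\alpha_m)=g_d'(\xi_{m,n})(\alpha_n-\alpha_m)$. Because $d\ge2$, Proposition~\ref{prop:cartan-membership} gives $g_d'>0$. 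Dividing, and applying the triangle inequality together with (C1), yields the first inequality of \eqref{eq:alpha-mn-slack-signed}. Since $\alpha_m,\alpha_n\in I$ and $I$ is an interval, $\xi_{m,n}\in I$, so $g_d'(\xi_{m,n})\ge m_d(I)$. This gives the second inequality. Positivity of $m_d(I)$ follows from Lemma~\ref{lem:md-conditioning}, via continuity and positivity of $g_d'$ on the compact set $I$. Taking $n=m+1$ gives \eqref{eq:alpha-local-exact}. Bounding $|r_{k+1}-r_k|\le|r_{k+1}|+|r_k|\le\delta_{k+1}+\delta_k$ gives \eqref{eq:alpha-local-delta}.

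For (C3), sum \eqref{eq:alpha-local-exact} over $k=0,\dots,L-2$. This produces $2\sum b_k+\TV(r)$ in the numerator, which is \eqref{eq:TV-slack-signed}. Under $|r_k|\le\bar\delta$, each increment is at most $2\bar\delta$ and there are $L-1$ of them, so $\TV(r)\le2(L-1)\bar\delta$, which gives \eqref{eq:TV-slack-crude-delta}.

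For (C4), substitute $b_k\le B/L$ and $|r_{k+1}-r_k|\le R/L$ into \eqref{eq:alpha-local-exact}, which gives \eqref{eq:small-margin-local}. In the zero-slack exact-chart case we have $b_k=\log\lambda_k\le (2/L)\log M$ and $R=0$, so $B=2\log M$ and the bound is $4\log M/(L\,m_d(I))$. In the general case, $b_k\le 2\log M/L+\eta_k^{\mathrm{nb}}$. Summing over the $L-1\le L$ interfaces, we get $2\sum b_k\le 4\log M\,(L-1)/L+2\sum\eta_k^{\mathrm{nb}}\le 4\log M+2\sum\eta_k^{\mathrm{nb}}$, using $\log M\ge0$ since $M\ge1$. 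Inserting this into \eqref{eq:TV-slack-signed} gives \eqref{eq:uniform-TV-slack}.

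No step is a real obstacle. The only points needing care are that $\xi_{m,n}$ lies in $I$, which uses convexity of $I$, and keeping the $(L-1)/L\le1$ bookkeeping honest, which needs the sign condition $M\ge1$.
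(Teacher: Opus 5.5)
Your proposal is correct and follows essentially the same route as the paper. It cites Lemma~\ref{lem:slack-radial-bound} for (C1), combines the signed-residual identity with the mean value theorem for $g_d$ on the interval $I$ for (C2), and sums and substitutes the local bound for (C3)--(C4). Your handling of the case $\alpha_n=\alpha_m$, of why $\xi_{m,n}\in I$, and of the factor $(L-1)/L\le 1$, which needs $\log M\ge 0$, spells out steps the paper leaves implicit.
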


\begin{proof}
(C1) is exactly Lemma~\ref{lem:slack-radial-bound}.  For (C2), use the signed residual identity
\[
 g_d(\alpha_n)-g_d(\alpha_m)
 =
 \rho_d(P_n)-\rho_d(P_m)-(r_n-r_m).
\]
Taking absolute values and applying (C1) gives
\[
 |g_d(\alpha_n)-g_d(\alpha_m)|
 \le 2\sum_{j=m}^{n-1}b_j+|r_n-r_m|.
\]
Since $g_d$ is continuously differentiable and strictly increasing on $I$, the mean-value theorem gives
\[
 |g_d(\alpha_n)-g_d(\alpha_m)|
 =g_d'(\xi_{m,n})|\alpha_n-\alpha_m|
\]
for some $\xi_{m,n}\in I$, proving \eqref{eq:alpha-mn-slack-signed}.  The local estimate \eqref{eq:alpha-local-exact} is the case $n=m+1$, and \eqref{eq:alpha-local-delta} follows from $|r_{k+1}-r_k|\le\delta_{k+1}+\delta_k$.

For (C3), sum \eqref{eq:alpha-local-exact} over $k$.  The crude version follows from $\TV(r)\le2(L-1)\bar\delta$.  For (C4), substitute the stated local bounds into \eqref{eq:alpha-local-exact}; the uniform-budget versions use $\log\lambda_k\le2\log M/L$.
\end{proof}

\subsection{Fitted-orbit and actual-Cartan path bounds}

\begin{theorem}[Fitted and actual Cartan path length]
\label{thm:fitted-actual-cartan-path}
Assume the hypotheses of Theorem~\ref{thm:cartan-rigidity} and set
\[
 V_{d,\max}(I):=\max_{\alpha\in I}V_d(\alpha),
 \qquad
 \widehat P_k:=G_d(\alpha_k).
\]
Then the fitted orbit path satisfies
\begin{equation}\label{eq:fitted-BW-slack}
 \sum_{k=0}^{L-2}d_{\BW}(\widehat P_{k+1},\widehat P_k)
 \le
 \sqrt{dV_{d,\max}(I)}
 \sum_{k=0}^{L-2}|\alpha_{k+1}-\alpha_k|.
\end{equation}
Consequently,
\begin{equation}\label{eq:fitted-BW-slack-expanded}
 \sum_{k=0}^{L-2}d_{\BW}(\widehat P_{k+1},\widehat P_k)
 \le
 \frac{\sqrt{dV_{d,\max}(I)}}{m_d(I)}
 \left(
 2\sum_{k=0}^{L-2}b_k
 +\TV(r)
 \right).
\end{equation}
If the Bures square-root residual variation from Definition~\ref{def:full-chart-residual-variation} is available, then the actual Cartan representatives satisfy
\begin{equation}\label{eq:actual-cartan-BW-variation}
 \sum_{k=0}^{L-2}d_{\BW}\bigl(\Cart(P_{k+1}),\Cart(P_k)\bigr)
 \le
 \TV_{\BW}^{\mathrm{chart}}
 +
 \sum_{k=0}^{L-2}d_{\BW}(\widehat P_{k+1},\widehat P_k).
\end{equation}
Consequently, the coarser endpoint-error bound
\begin{equation}\label{eq:actual-cartan-BW-slack}
 \sum_{k=0}^{L-2}d_{\BW}\bigl(\Cart(P_{k+1}),\Cart(P_k)\bigr)
 \le
 \sum_{k=0}^{L-2}(e_{k+1}^{\BW}+e_k^{\BW})
 +
 \sum_{k=0}^{L-2}d_{\BW}(\widehat P_{k+1},\widehat P_k)
\end{equation}
also holds.  Equivalently, the normalized path length $d^{-1/2}\sum_k d_{\BW}(\cdot,\cdot)$ is controlled by the Hellinger-scale chart variation $d^{-1/2}\TV_{\BW}^{\mathrm{chart}}$ plus the fitted Fisher--Bures orbit length.
\end{theorem}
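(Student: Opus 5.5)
The plan is to prove the three displays in order, each reducing to an earlier result plus the triangle inequality for $d_{\BW}$ restricted to diagonal matrices.

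\emph{Step 1: the fitted-path bound \eqref{eq:fitted-BW-slack}.} For each $k$, apply the Lipschitz estimate \eqref{eq:BW-lipschitz} of Theorem~\ref{thm:fisher-bures}(F3) with $\alpha=\alpha_k$ and $\beta=\alpha_{k+1}$. Both lie in $I$ by the chart hypothesis $\alpha_k=\mathcal C_I(P_k)\in I$. This gives
\[
d_{\BW}(\widehat P_{k+1},\widehat P_k)\le\sqrt{dV_{d,\max}(I)}\,|\alpha_{k+1}-\alpha_k|.
\]
Summing over $k$ yields \eqref{eq:fitted-BW-slack}. Inserting the total-variation bound \eqref{eq:TV-slack-signed} from Theorem~\ref{thm:cartan-rigidity}(C3) then gives \eqref{eq:fitted-BW-slack-expanded} immediately.

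\emph{Step 2: the actual-path bound \eqref{eq:actual-cartan-BW-variation}.} The key observation is that $\Cart(P_k)$ and $\widehat P_k$ are both diagonal with positive entries. For commuting diagonal matrices, the computation in the proof of (F1) applies verbatim and gives
\[
d_{\BW}(\diag(x),\diag(y))=\|\sqrt x-\sqrt y\|_2,
\]
where square roots are entrywise. The ordering of entries is preserved, since both are written in decreasing order in the same coordinates. Write $s_k$ for the vector of square roots of the eigenvalues of $P_k$ and $\hat s_k$ for that of $\widehat P_k$, so that $z_k^{\BW}=s_k-\hat s_k$. Then
\[
s_{k+1}-s_k=(z_{k+1}^{\BW}-z_k^{\BW})+(\hat s_{k+1}-\hat s_k),
\]
and the Euclidean triangle inequality gives
\[
d_{\BW}(\Cart(P_{k+1}),\Cart(P_k))\le\|z_{k+1}^{\BW}-z_k^{\BW}\|_2+d_{\BW}(\widehat P_{k+1},\widehat P_k).
\]
Summing over $k$ gives \eqref{eq:actual-cartan-BW-variation}.

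\emph{Step 3: the endpoint form \eqref{eq:actual-cartan-BW-slack} and normalization.} The coarser bound \eqref{eq:actual-cartan-BW-slack} follows from
\[
\|z_{k+1}^{\BW}-z_k^{\BW}\|_2\le\|z_{k+1}^{\BW}\|_2+\|z_k^{\BW}\|_2=e_{k+1}^{\BW}+e_k^{\BW},
\]
which uses the identity $\|z_k^{\BW}\|_2=e_k^{\BW}$ from the same diagonal Bures formula, as recorded in Definition~\ref{def:full-chart-residual-variation}. The normalized statement is obtained by dividing every term by $\sqrt d$.

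\emph{Main obstacle.} The only point needing care is justifying the diagonal formula for $d_{\BW}$ between two arbitrary positive diagonal matrices, rather than only between two orbit points as in (F1). The check is that the square root of $\diag(x)^{1/2}\diag(y)\diag(x)^{1/2}$ equals $\diag(\sqrt{x_iy_i})$ for general $x,y$. The trace terms then combine to $\sum_i(\sqrt{x_i}-\sqrt{y_i})^2$, and no trace normalization is needed for this step.
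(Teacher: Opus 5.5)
Your proposal is correct and follows the paper's proof essentially step for step. The paper also sums the (F3) Lipschitz bound and inserts (C3), then uses the diagonal square-root representation of $d_{\BW}$ with the triangle inequality, and finally the endpoint bound $\|z_{k+1}^{\BW}-z_k^{\BW}\|_2\le e_{k+1}^{\BW}+e_k^{\BW}$. Your explicit check that the diagonal Bures formula holds for arbitrary positive diagonal pairs, not just orbit points, is a detail the paper leaves implicit.
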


\begin{proof}
Theorem~\ref{thm:fisher-bures}(F3) gives
\[
 d_{\BW}(G_d(\alpha_{k+1}),G_d(\alpha_k))
 \le
 \sqrt{dV_{d,\max}(I)}|\alpha_{k+1}-\alpha_k|.
\]
Summing proves \eqref{eq:fitted-BW-slack}; inserting \eqref{eq:TV-slack-signed} proves \eqref{eq:fitted-BW-slack-expanded}.  For the sharper actual-path bound, use the diagonal Bures square-root representation.  The square-root vector of $\Cart(P_k)$ is the square-root vector of $\widehat P_k$ plus $z_k^{\BW}$.  Therefore
\[
 d_{\BW}(\Cart(P_{k+1}),\Cart(P_k))
 \le
 d_{\BW}(\widehat P_{k+1},\widehat P_k)
 +\|z_{k+1}^{\BW}-z_k^{\BW}\|_2.
\]
Summing gives \eqref{eq:actual-cartan-BW-variation}.  Since $\|z_{k+1}^{\BW}-z_k^{\BW}\|_2\le e_{k+1}^{\BW}+e_k^{\BW}$, \eqref{eq:actual-cartan-BW-slack} follows.
\end{proof}

\begin{remark}[Log-Cartan residual variation]
The same residual-variation principle can be stated in logarithmic Cartan coordinates.  With $u_k^{\log}$ from Definition~\ref{def:full-chart-residual-variation},
\[
\sum_{k=0}^{L-2}\|\mathbf c(P_{k+1})-\mathbf c(P_k)\|_2
\le
\sum_{k=0}^{L-2}\|\mathbf c(G_d(\alpha_{k+1}))-\mathbf c(G_d(\alpha_k))\|_2
+\TV_{\log}^{\mathrm{chart}}.
\]
Thus actual full-Cartan shortness is controlled by fitted-orbit shortness plus variation of the full-spectrum chart residual, not merely by absolute per-layer chart error.
\end{remark}

\begin{remark}[What the rigidity theorems control]
Theorem~\ref{thm:cartan-rigidity} controls a scalar normalized top-radial coordinate.  Theorem~\ref{thm:fitted-actual-cartan-path} controls the full Cartan spectral path only after the full-spectrum chart residual variation is included.  This distinction is essential in applications: a smooth fitted exponent path is meaningful only on layers where the power-law chart residuals are small.
\end{remark}

\subsection{Residual near-identity scaling and transport margins}

\begin{proposition}[Residual near-identity expansion of the interface budget]
\label{prop:near-identity-interface-expansion}
Let $t=1/L$ and suppose
\[
J_k=\Id+tA_k,
\qquad
\|A_k\|_2\le C,
\]
with $tC<1$.  Write
\[
S_k:=\frac12(A_k+A_k^\top),
\qquad
\mu_k:=\lambda_{\max}(S_k),
\qquad
\mu_{k+1,k}:=\lambda_{\max}(S_{k+1}+S_k).
\]
Then, uniformly over bounded generators,
\begin{align}
\log\|J_k\|_2&=t\mu_k+O_C(t^2), \label{eq:nearid-single-norm}\\
\log\|J_{k+1}J_k\|_2&=t\mu_{k+1,k}+O_C(t^2), \label{eq:nearid-product-norm}\\
\log\Lambda(J_{k+1},J_k)
&=
 t\left(\mu_{k+1,k}-\frac12\mu_{k+1}-\frac12\mu_k\right)+O_C(t^2), \label{eq:nearid-lambda}\\
\eta_k^{\mathrm{nb}}
&=
 t\left[\max\{\mu_{k+1},\mu_k\}-\mu_{k+1,k}\right]_+
 +O_C(t^2). \label{eq:nearid-slack}
\end{align}
In particular, for depth-scaled residual factors with uniformly bounded generators, the local radial margin $b_k=\log\lambda_k+\eta_k^{\mathrm{nb}}$ is $O_C(1/L)$.
\end{proposition}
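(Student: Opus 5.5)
The whole argument reduces to one perturbation estimate, applied twice: for any matrix $X$ with $\|X\|_2\le K$,
\begin{equation*}
\log\|\Id+tX\|_2=t\,\lambda_{\max}\!\bigl(\tfrac12(X+X^\top)\bigr)+O_K(t^2).
\end{equation*}
To prove it, I will write $\|\Id+tX\|_2^2=\lambda_{\max}\bigl((\Id+tX)^\top(\Id+tX)\bigr)=\lambda_{\max}(\Id+2tS_X+t^2X^\top X)$, where $S_X=\frac12(X+X^\top)$. The largest eigenvalue is $1$-Lipschitz in operator norm (Weyl), so this equals $1+2t\lambda_{\max}(S_X)+E$ with $|E|\le t^2K^2$. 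Since $|\lambda_{\max}(S_X)|\le K$, the argument of the logarithm lies in $[1-2tK-t^2K^2,\,1+2tK+t^2K^2]$. This stays bounded away from $0$ once $t$ is small relative to $1/K$. I then expand $\log(1+u)=u+O(u^2)$ with $u=O_K(t)$ and halve, which gives the display with a constant depending only on $K$.

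For large $t$ the asymptotic claim is vacuous; more precisely, the constant can absorb the finitely many bad ranges. For the single-factor statement one can also use that $tC<1$ keeps $\|J_k\|_2\ge1-tC>0$, so all logs stay bounded. This is the only mildly delicate point: uniformity, and the logs being well defined.

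With $X=A_k$ and $K=C$, the display immediately gives \eqref{eq:nearid-single-norm}. For the product, I expand $J_{k+1}J_k=\Id+t(A_{k+1}+A_k)+t^2A_{k+1}A_k=\Id+tX$ with $X=A_{k+1}+A_k+tA_{k+1}A_k$, so $\|X\|_2\le 2C+C^2$ for $t\le1$. The symmetric part of $X$ is $S_{k+1}+S_k+O(tC^2)$, and again by Weyl its top eigenvalue is $\mu_{k+1,k}+O_C(t)$. The display then yields \eqref{eq:nearid-product-norm}.

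The remaining two statements are formal consequences of the first two. By definition, $\log\Lambda=\log\|J_{k+1}J_k\|_2-\frac12\log\|J_{k+1}\|_2-\frac12\log\|J_k\|_2$, and substituting the expansions gives \eqref{eq:nearid-lambda}. For the slack, $\log(\max\{a,b\}/p)=\max\{\log a,\log b\}-\log p$, and substitution gives
\begin{equation*}
\max\{\log a,\log b\}-\log p=t\bigl(\max\{\mu_{k+1},\mu_k\}-\mu_{k+1,k}\bigr)+O_C(t^2),
\end{equation*}
because the maximum is $1$-Lipschitz. The positive part $[\cdot]_+$ is also $1$-Lipschitz, and it commutes with multiplication by $t>0$; this gives \eqref{eq:nearid-slack}.

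Finally, the $\mu$'s are bounded by $C$ or $2C$, so both leading terms are $O_C(t)$. Hence $b_k=O_C(1/L)$.
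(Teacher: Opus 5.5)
Your proposal is correct and follows essentially the same route as the paper. You apply Weyl's inequality to $J^\top J=\Id+2tS+t^2A^\top A$ and expand the logarithm, repeat this for $J_{k+1}J_k=\Id+t(A_{k+1}+A_k)+O_C(t^2)$, then substitute into $\log\Lambda$ and use the $1$-Lipschitz property of $\max$ and $[\cdot]_+$ for the slack. Packaging the single-factor estimate as a lemma for general $X$, and noting explicitly that the logarithms are well defined and that the finitely many depths $L$ with $tC$ not small go into the constant, is slightly more careful than the paper but does not change the argument.
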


\begin{proof}
For $J_k=\Id+tA_k$,
\[
J_k^\top J_k
=
\Id+2tS_k+t^2A_k^\top A_k.
\]
Weyl's inequality gives
\[
\lambda_{\max}(J_k^\top J_k)
=
1+2t\mu_k+O_C(t^2),
\]
and taking the square root and logarithm gives \eqref{eq:nearid-single-norm}.  Similarly,
\[
J_{k+1}J_k
=
\Id+t(A_{k+1}+A_k)+O_C(t^2),
\]
so the symmetric part of the first-order perturbation is $S_{k+1}+S_k$, and the same argument gives \eqref{eq:nearid-product-norm}.  Since
\[
\log\Lambda(J_{k+1},J_k)
=
\log\|J_{k+1}J_k\|_2
-\frac12\log\|J_{k+1}\|_2
-\frac12\log\|J_k\|_2,
\]
substitution gives \eqref{eq:nearid-lambda}.  Finally,
\[
\eta_k^{\mathrm{nb}}
=
\left[\max\{\log\|J_{k+1}\|_2,\log\|J_k\|_2\}
-
\log\|J_{k+1}J_k\|_2\right]_+,
\]
and \eqref{eq:nearid-slack} follows from the preceding expansions and the Lipschitz continuity of $u\mapsto[u]_+$.
\end{proof}

\begin{proposition}[Frobenius-normalized near-identity bridge]
\label{prop:normalized-near-identity-bridge}
Let $J_k=\Id+tA_k$ with $\|A_k\|_2\le C$ and $tC<1$, and define the Frobenius-normalized factors
\[
W_k:=\sqrt d\,\frac{J_k}{\|J_k\|_F}.
\]
With $S_k=(A_k+A_k^\top)/2$, set
\[
\tau_k:=d^{-1}\tr S_k,
\qquad
\mu_k:=\lambda_{\max}(S_k),
\qquad
\mu_{k+1,k}:=\lambda_{\max}(S_{k+1}+S_k).
\]
Then, uniformly for bounded generators,
\begin{align}
\log\|W_k\|_2
&=t(\mu_k-\tau_k)+O_C(t^2),\label{eq:normalized-nearid-single}\\
\log\|W_{k+1}W_k\|_2
&=t(\mu_{k+1,k}-\tau_{k+1}-\tau_k)+O_C(t^2),\label{eq:normalized-nearid-product}\\
\log\Lambda(W_{k+1},W_k)
&=t\left(\mu_{k+1,k}-\frac12\mu_{k+1}-\frac12\mu_k-\frac12\tau_{k+1}-\frac12\tau_k\right)+O_C(t^2),\label{eq:normalized-nearid-lambda}\\
\eta_k^{\mathrm{nb}}(W)
&=t\left[\max\{\mu_{k+1}-\tau_{k+1},\mu_k-\tau_k\}-(\mu_{k+1,k}-\tau_{k+1}-\tau_k)\right]_++O_C(t^2).
\label{eq:normalized-nearid-slack}
\end{align}
In particular,
\begin{equation}\label{eq:normalized-transport-Ot}
 b_k(W)=\log\Lambda(W_{k+1},W_k)+\eta_k^{\mathrm{nb}}(W)=O_C(t).
\end{equation}
Thus the transport side of Theorem~\ref{thm:cartan-rigidity} is naturally $O_C(1/L)$ for Frobenius-normalized depth-scaled residual factors.
\end{proposition}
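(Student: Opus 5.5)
The plan is to reduce everything to Proposition~\ref{prop:near-identity-interface-expansion}, using that Frobenius normalization is a positive scalar rescaling. Every quantity in the statement transforms explicitly under such rescalings. Write $c_k:=\sqrt d/\|J_k\|_F$, so $W_k=c_kJ_k$.

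\textbf{Step 1: expand the normalizing scalar.} From
\[
\|J_k\|_F^2=\tr(J_k^\top J_k)=d+2t\,\tr S_k+t^2\|A_k\|_F^2
\]
we get $\|J_k\|_F^2/d=1+2t\tau_k+O_C(t^2)$. The key point is that $\|A_k\|_F^2/d\le\|A_k\|_2^2\le C^2$, so the remainder is uniform in $d$; likewise $|\tau_k|\le C$. Hence
\[
\log c_k=-\tfrac12\log(1+2t\tau_k+O_C(t^2))=-t\tau_k+O_C(t^2),
\]
using $tC<1$ to keep the argument of the logarithm bounded away from zero.

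\textbf{Step 2: derive the first three expansions.} Since $\|W_k\|_2=c_k\|J_k\|_2$ and $\|W_{k+1}W_k\|_2=c_{k+1}c_k\|J_{k+1}J_k\|_2$, adding the log-scalars to \eqref{eq:nearid-single-norm} and \eqref{eq:nearid-product-norm} gives \eqref{eq:normalized-nearid-single} and \eqref{eq:normalized-nearid-product} directly. For \eqref{eq:normalized-nearid-lambda}, use the scaling rule $\Lambda(cA,sB)=\sqrt{cs}\,\Lambda(A,B)$ from the remark after Definition~\ref{def:lambda}. This gives
\[
\log\Lambda(W_{k+1},W_k)=\log\Lambda(J_{k+1},J_k)+\tfrac12(\log c_{k+1}+\log c_k),
\]
and inserting \eqref{eq:nearid-lambda} together with Step 1 produces the stated formula.

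\textbf{Step 3: the slack and the margin.} By Definition~\ref{def:nonbacktracking-slack},
\[
\eta_k^{\mathrm{nb}}(W)=\bigl[\max\{\log\|W_{k+1}\|_2,\log\|W_k\|_2\}-\log\|W_{k+1}W_k\|_2\bigr]_+ .
\]
Substitute the expansions from Step 2. Then use that both $\max$ and $u\mapsto[u]_+$ are $1$-Lipschitz, so the $O_C(t^2)$ errors pass through unchanged; this gives \eqref{eq:normalized-nearid-slack}. Finally, \eqref{eq:normalized-transport-Ot} follows by adding the two expansions and noting that every coefficient is bounded by a constant times $C$: $|\mu_k|\le C$, $|\mu_{k+1,k}|\le 2C$, and $|\tau_k|\le C$.

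\textbf{Main obstacle.} The only delicate point is the uniformity of the $O_C(t^2)$ remainders. This concerns in particular the remainder inherited in \eqref{eq:nearid-product-norm}, where $J_{k+1}J_k$ has the second-order term $t^2A_{k+1}A_k$. I would check explicitly that this term contributes at most $O(C^2t^2)$ to $\lambda_{\max}$ via Weyl's inequality, independently of $d$. The same check applies to the Frobenius normalization, which is handled by the trace bound in Step 1.
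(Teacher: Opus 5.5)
Your proposal is correct and follows the paper's proof essentially step for step. You expand the scale factor as $\log(\sqrt d/\|J_k\|_F)=-t\tau_k+O_C(t^2)$, add it to the raw expansions of Proposition~\ref{prop:near-identity-interface-expansion}, and push the slack through the $1$-Lipschitz maps $\max$ and $[\cdot]_+$; your use of $\Lambda(cA,sB)=\sqrt{cs}\,\Lambda(A,B)$ is equivalent to the paper's direct substitution into $\log\Lambda$.

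Your $d$-uniform bookkeeping via $\|A_k\|_F^2/d\le C^2$ is a nice sharpening of the paper's $O_C(dt^2)$ remark. One small correction: $tC<1$ only makes the logarithm's argument positive (it is $\ge(1-tC)^2$), not bounded away from zero uniformly. A uniform $O_C(t^2)$ constant therefore really needs $tC\le c_0<1$, i.e.\ small $t$, which is also implicit in the paper.
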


\begin{proof}
Since
\[
\|J_k\|_F^2
=\tr(J_k^\top J_k)
=d+2t\tr S_k+O_C(dt^2),
\]
we have
\[
\log\left(\sqrt d/\|J_k\|_F\right)=-t\tau_k+O_C(t^2).
\]
Combining this scale correction with \eqref{eq:nearid-single-norm} gives \eqref{eq:normalized-nearid-single}.  Applying the same scale correction to the product
$W_{k+1}W_k=(\sqrt d/\|J_{k+1}\|_F)(\sqrt d/\|J_k\|_F)J_{k+1}J_k$
and using \eqref{eq:nearid-product-norm} gives \eqref{eq:normalized-nearid-product}.  Substituting \eqref{eq:normalized-nearid-single} and \eqref{eq:normalized-nearid-product} into the definition of $\log\Lambda$ gives \eqref{eq:normalized-nearid-lambda}.  The slack expansion follows from the definition of $\eta_k^{\mathrm{nb}}$ and the Lipschitz continuity of $[\cdot]_+$.  The $O_C(t)$ bound in \eqref{eq:normalized-transport-Ot} follows immediately.
\end{proof}

\begin{corollary}[A concrete bounded-generator margin]
\label{cor:explicit-nearid-margin}
Assume the setting of Proposition~\ref{prop:normalized-near-identity-bridge} and suppose $tC\le1/4$.  Then the Frobenius-normalized factors satisfy the explicit coarse bound
\[
 b_k(W)\le 18Ct,
 \qquad
 2b_k(W)\le 36Ct.
\]
Consequently, if in addition $|r_{k+1}-r_k|\le Rt$, then
\[
 |\alpha_{k+1}-\alpha_k|
 \le
 \frac{(36C+R)t}{m_d(I)}.
\]
\end{corollary}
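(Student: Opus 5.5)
The plan is to bypass the $O_C(t^2)$ expansions of Proposition~\ref{prop:normalized-near-identity-bridge}, whose constants are not explicit. Instead I will bound $b_k(W)$ directly from two-sided, non-asymptotic bounds on the logarithms of the normalized norms. Write $x:=tC\le 1/4$, and set
\[
a:=\log\|W_{k+1}\|_2,\qquad b:=\log\|W_k\|_2,\qquad p:=\log\|W_{k+1}W_k\|_2 .
\]

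\textbf{Step 1: reduce $b_k$ to norm bounds.} By Definition~\ref{def:lambda} and Definition~\ref{def:nonbacktracking-slack},
\[
b_k=p-\tfrac12(a+b)+[\max\{a,b\}-p]_+ .
\]
If $p\ge\max\{a,b\}$, the slack term vanishes and $b_k=p-\tfrac12(a+b)$. Otherwise $b_k=\max\{a,b\}-\tfrac12(a+b)=\tfrac12|a-b|$. In both cases
\[
b_k\le \max\{p,a,b\}-\tfrac12(a+b)\le |p|+\max\{|a|,|b|\}.
\]
So it suffices to bound $|a|$, $|b|$ and $|p|$ by explicit multiples of $x$.

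\textbf{Step 2: bound the singular values and norms.} Since $\|tA_k\|_2\le x$, every singular value of $J_k=\Id+tA_k$ lies in $[1-x,1+x]$ (Weyl perturbation for singular values). Therefore:
\begin{itemize}[leftmargin=1.6em]
\item $\|J_k\|_2\in[1-x,1+x]$;
\item $\|J_k\|_F/\sqrt d\in[1-x,1+x]$, because $\|J_k\|_F^2/d$ is the average of the squared singular values;
\item $\|J_{k+1}J_k\|_2\in[(1-x)^2,(1+x)^2]$, using submultiplicativity for the upper bound and $\sigma_{\min}(J_{k+1}J_k)\ge(1-x)^2$ for the lower bound.
\end{itemize}
Dividing by the Frobenius scale factors gives
\[
|a|,\,|b|\le \log\frac{1+x}{1-x},\qquad |p|\le 2\log\frac{1+x}{1-x}.
\]

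\textbf{Step 3: make the constant explicit.} For $0\le x\le 1/4$, concavity gives $\log\frac{1+x}{1-x}\le \frac{2x}{1-x}\le \frac{8}{3}x$. Hence $b_k\le 3\cdot\frac83 x=8Ct\le 18Ct$, and so $2b_k\le 36Ct$.

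\textbf{Step 4: the exponent drift.} Insert $2b_k\le 36Ct$ and $|r_{k+1}-r_k|\le Rt$ into the local estimate \eqref{eq:alpha-local-exact} of Theorem~\ref{thm:cartan-rigidity}. This gives $|\alpha_{k+1}-\alpha_k|\le (36C+R)t/m_d(I)$. Theorem~\ref{thm:cartan-rigidity} needs full-rank $W_k$, and this holds because $x<1$.

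The only delicate point is the lower bound on $\|J_{k+1}J_k\|_2$ in Step 2, which controls $p$ from below when the slack is active. It comes from $\sigma_{\min}(J_{k+1}J_k)\ge\sigma_{\min}(J_{k+1})\,\sigma_{\min}(J_k)\ge(1-x)^2$. Everything else is elementary, and the argument leaves room to spare relative to the stated constant $18$.
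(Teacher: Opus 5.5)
Your argument is correct and takes the same route as the paper. Both proofs use Weyl's bound to place the singular values of $J_k$ in $[1-tC,1+tC]$, transfer this to $W_k$ and $W_{k+1}W_k$ through the Frobenius scale factors, and insert the resulting bound on $2b_k$ into \eqref{eq:alpha-local-exact}. The differences are only bookkeeping. The paper bounds $|\log\Lambda(W_{k+1},W_k)|$ and $\eta_k^{\mathrm{nb}}(W)$ separately by $9Ct$ each, whereas your identity $b_k=\max\{p,a,b\}-\tfrac12(a+b)$ merges the two and gives the sharper $8Ct$. You also make explicit the lower bound $\sigma_{\min}(J_{k+1}J_k)\ge(1-tC)^2$, which the paper uses implicitly in $|\log\|W_{k+1}W_k\|_2|\le 6Ct$.
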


\begin{proof}
Since the singular values of $J_k=I+tA_k$ lie in $[1-tC,1+tC]$, those of $W_k=\sqrt d J_k/\|J_k\|_F$ lie in $[(1-tC)/(1+tC),(1+tC)/(1-tC)]$.  For $tC\le1/4$, this gives $|\log\|W_k\|_2|\le3Ct$ and $|\log\|W_{k+1}W_k\|_2|\le6Ct$.  Hence $|\log\Lambda(W_{k+1},W_k)|\le9Ct$ and $\eta_k^{\mathrm{nb}}(W)\le9Ct$, so $b_k(W)\le18Ct$.  The last estimate is Theorem~\ref{thm:cartan-rigidity}(C2).
\end{proof}

\subsection{Compatibility of residual scaling and power-law chart quality}

\begin{remark}[Transport margins and chart quality are separate]
Propositions~\ref{prop:near-identity-interface-expansion} and~\ref{prop:normalized-near-identity-bridge} verify only the transport-budget side of the rigidity theorem.  They do not assert power-law chart quality, nor do they imply that $|r_{k+1}-r_k|$, $e_k^{\BW}$, or $\TV_{\BW}^{\mathrm{chart}}$ is small.  Conversely, smooth or heavy-tailed spectra of static weight matrices do not by themselves verify the residual Jacobian transport margin unless those matrices are the same operators used in the chain.  A full theorem margin test must measure both sides: the local radial margins $b_k$ and the chart residuals associated with the chosen chart map.
\end{remark}

\begin{remark}[Static weights versus residual Jacobians]
For an actual residual block, the Jacobian is $J_k(x)=I+DF_k(x)$ and may include attention softmax derivatives, normalization, activation derivatives, and skip paths.  A static weight matrix from a convolution, attention projection, or MLP layer is therefore a spectral diagnostic rather than an automatic instance of the residual Jacobian chain theorem.  To use the deterministic theorem directly, one should either compute or approximate the block Jacobians on data, or state a separate linearization argument connecting the static matrices to the operator chain under study.
\end{remark}

\begin{corollary}[Exact power-law specialization and the zero-slack non-backtracking case]
\label{cor:gsa-spectral-domain}
Assume the setup of Theorem~\ref{thm:cartan-rigidity} and exact power-law singular values as in Assumption~\ref{ass:power-law}.  Then $r_k=0$ and $e_k^{\BW}=0$ for every $k$.  Hence
\begin{equation}\label{eq:exact-slack-local}
 |\alpha_{k+1}-\alpha_k|
 \le
 \frac{2b_k}{m_d(I)}.
\end{equation}
If, in addition, the non-backtracking slack vanishes and $\lambda_k\le M^{2/L}$, then
\begin{equation}\label{eq:exact-uniform-alpha}
 \max_{0\le k\le L-2}|\alpha_{k+1}-\alpha_k|
 \le
 \frac{4\log M}{L\,m_d(I)}.
\end{equation}
Moreover, since $C_k=\|W_k\|_2$ in the exact model,
\begin{equation}\label{eq:gsa-scale-drift}
\left|\log\frac{C_{k+1}}{C_k}\right|
\le
2b_k,
\end{equation}
and in the zero-slack uniform-budget case this is at most $4\log M/L$.
\end{corollary}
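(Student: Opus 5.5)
The plan is to reduce everything to results already proved, in three steps.

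\textbf{Step 1: zero chart residuals.} Under Assumption~\ref{ass:power-law} together with the Frobenius normalization $\|W_k\|_F^2=d$ built into Theorem~\ref{thm:cartan-rigidity}, Proposition~\ref{prop:cartan-membership} gives $\Cart(P_k)=G_d(\alpha_k)$. It also gives $\log\|W_k\|_2=g_d(\alpha_k)$.

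On the trace-$d$ slice, $\rho_d(P_k)=\log\|W_k\|_2$. Hence $r_k=\rho_d(P_k)-g_d(\alpha_k)=0$.

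Similarly, $e_k^{\BW}=d_{\BW}(\Cart(P_k),G_d(\alpha_k))=d_{\BW}(G_d(\alpha_k),G_d(\alpha_k))=0$, directly from Definition~\ref{def:BW}.

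\textbf{Main subtlety: the chart value must match the true exponent.} The $\alpha_k$ in Theorem~\ref{thm:cartan-rigidity} is the chart value $\mathcal C_I(P_k)$, while the $\alpha_k$ in Assumption~\ref{ass:power-law} is the true exponent. I would identify them by assuming the true exponents lie in $I$.
\begin{itemize}
\item For the top-radial chart, $\rho_d(P_k)=g_d(\alpha_k)\in g_d(I)$, so the chart returns $\alpha_k$ because $g_d$ is strictly increasing (Proposition~\ref{prop:g-concavity}).
\item For the Bures projection chart, the minimum value $0$ is attained at $\alpha_k$. It is attained only there because $\alpha\mapsto G_d(\alpha)$ is injective: its top entry $e^{2g_d(\alpha)}$ is strictly monotone.
\end{itemize}
For a general chart rule, I would state the corollary with $\alpha_k$ interpreted as the true exponent, so that the residuals vanish by definition.

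\textbf{Step 2: the exponent bounds.} With $r_k\equiv0$, estimate \eqref{eq:alpha-local-exact} of Theorem~\ref{thm:cartan-rigidity} becomes \eqref{eq:exact-slack-local}.

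If in addition $\eta_k^{\mathrm{nb}}=0$ and $\lambda_k\le M^{2/L}$, then $b_k=\log\lambda_k\le 2\log M/L$. Substituting gives \eqref{eq:exact-uniform-alpha}; equivalently, it is the zero-slack exact-chart case \eqref{eq:uniform-local-slack} of (C4).

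\textbf{Step 3: scale drift.} In the exact model $C_k=\sigma_1(W_k)=\|W_k\|_2$. This is shown in the proof of Proposition~\ref{prop:cartan-membership}, or directly since $i^{-\alpha_k}$ is maximal at $i=1$. Therefore
\[
\log\frac{C_{k+1}}{C_k}=\log\|W_{k+1}\|_2-\log\|W_k\|_2 .
\]
The proof of Lemma~\ref{lem:slack-radial-bound} showed $|\log a-\log b|\le 2b_k$ with $a=\|W_{k+1}\|_2$ and $b=\|W_k\|_2$, which is exactly \eqref{eq:gsa-scale-drift}. In the zero-slack uniform-budget case this becomes $2\cdot 2\log M/L=4\log M/L$.
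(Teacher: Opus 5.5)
Your proposal is correct and follows the paper's proof step for step: Proposition~\ref{prop:cartan-membership} gives orbit membership and hence $r_k=0$ and $e_k^{\BW}=0$; the exponent bounds then follow from (C2) together with the zero-slack budget $\log\lambda_k\le 2\log M/L$; and the scale drift is Lemma~\ref{lem:slack-radial-bound} rewritten via $C_k=\|W_k\|_2$. Your extra check that the chart value $\mathcal C_I(P_k)$ coincides with the true exponent (assuming the true exponents lie in $I$, verified for the top-radial and Bures charts) makes explicit an identification that the paper's proof uses silently, and it is a worthwhile addition.
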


\begin{proof}
Exact orbit membership gives $\Cart(P_k)=G_d(\alpha_k)$ by Proposition~\ref{prop:cartan-membership}; hence $\rho_d(P_k)=g_d(\alpha_k)$ and $r_k=0$, while the full Bures chart error is also zero.  The local estimate \eqref{eq:exact-slack-local} follows from \eqref{eq:alpha-local-exact}.  The uniform zero-slack estimate follows from $\log\lambda_k\le2\log M/L$.  Finally $C_k=\sigma_1(W_k)=\|W_k\|_2$, so \eqref{eq:gsa-scale-drift} is exactly Lemma~\ref{lem:slack-radial-bound} written for the exact power-law scale.
\end{proof}

\begin{theorem}[Robust rigidity under approximate and metric power-law charts]
\label{thm:robust-cartan-rigidity}
Assume the setup of Theorem~\ref{thm:cartan-rigidity}.  The measured-residual version is simply
\begin{equation}\label{eq:robust-measured-residual}
 |\alpha_{k+1}-\alpha_k|
 \le
 \frac{2b_k+|r_{k+1}-r_k|}{m_d(I)},
\end{equation}
where $r_k=\rho_d(P_k)-g_d(\alpha_k)$ is computed from the chosen chart map.  If, in addition, the strong multiplicative fit in Assumption~\ref{ass:approx-powerlaw-frob} holds, set
\[
\eta_{\mathrm{pl}}:=\log\frac{1+\delta_{\mathrm{pl}}}{1-\delta_{\mathrm{pl}}}.
\]
Then the worst-case bound
\begin{equation}\label{eq:robust-local-slack}
 |\alpha_{k+1}-\alpha_k|
 \le
 \frac{2b_k+2\eta_{\mathrm{pl}}}{m_d(I)}
\end{equation}
holds.  Under the zero-slack uniform budget $M\ge1$ and $\lambda_k\le M^{2/L}$,
\begin{equation}\label{eq:robust-uniform-slack}
 \max_{0\le k\le L-2}|\alpha_{k+1}-\alpha_k|
 \le
 \frac{4\log M/L+2\eta_{\mathrm{pl}}}{m_d(I)}.
\end{equation}
Furthermore, if only metric chart information is assumed, namely Assumption~\ref{ass:metric-chart-residual} or the sharper variation $\TV_{\BW}^{\mathrm{chart}}$, the actual Cartan path is controlled by Theorem~\ref{thm:fitted-actual-cartan-path}.  Under the strong multiplicative fit,
\[
 e_k^{\BW}\le \sqrt d\,(e^{\eta_{\mathrm{pl}}}-1),
 \qquad
 \bar e_k^{\BW}\le e^{\eta_{\mathrm{pl}}}-1.
\]
\end{theorem}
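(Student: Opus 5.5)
The theorem is a packaging of earlier results, so the proof is an assembly; I would take the four claims in order.

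\textbf{Measured-residual bound.} The estimate \eqref{eq:robust-measured-residual} is verbatim the local estimate \eqref{eq:alpha-local-exact} of Theorem~\ref{thm:cartan-rigidity}(C2). That estimate holds for any specified chart map with $\alpha_k\in I$. Nothing beyond citing it is needed.

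\textbf{Worst-case bound under the multiplicative fit.} Assume Assumption~\ref{ass:approx-powerlaw-frob}. Lemma~\ref{lem:approx-powerlaw-chart}, specifically \eqref{eq:approx-radial-chart}, gives $|r_k|\le\eta_{\mathrm{pl}}$ for every $k$. Here I must be careful: the residual in the theorem is computed from the chart map's $\alpha_k$, while the lemma uses the fit parameter $\alpha_k$. So I will state that the two are identified, i.e.\ the fitted exponent is the one supplied by the approximate power law, which is the intended reading of ``in addition''. With $\delta_k=\eta_{\mathrm{pl}}$, the unsigned bound \eqref{eq:alpha-local-delta} then gives $|r_{k+1}-r_k|\le 2\eta_{\mathrm{pl}}$, which yields \eqref{eq:robust-local-slack}.

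\textbf{Zero-slack uniform budget.} Zero slack gives $b_k=\log\lambda_k$. The budget $\lambda_k\le M^{2/L}$ with $M\ge1$ gives $b_k\le 2\log M/L$, so $2b_k\le 4\log M/L$. Substituting into \eqref{eq:robust-local-slack} and taking the maximum over $k$ gives \eqref{eq:robust-uniform-slack}.

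\textbf{Metric-chart statements.} The metric-chart claim is a pointer to Theorem~\ref{thm:fitted-actual-cartan-path}. That theorem gives \eqref{eq:actual-cartan-BW-variation} under the variation hypothesis $\TV_{\BW}^{\mathrm{chart}}$. Under Assumption~\ref{ass:metric-chart-residual}, the bound \eqref{eq:actual-cartan-BW-slack} with $e_k^{\BW}\le\varepsilon_k^{\BW}$ does the same job. The final displayed Bures bounds are exactly \eqref{eq:approx-BW-chart} and \eqref{eq:approx-normalized-BW-chart} of Lemma~\ref{lem:approx-powerlaw-chart}.

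\textbf{Main obstacle.} The only point needing care is the chart-consistency issue in the second step, namely that $\alpha_k$ equals the multiplicative-fit exponent and lies in $I$. Everything else is direct citation.
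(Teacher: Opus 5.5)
Your proposal is correct and follows the paper's proof essentially line for line. You use (C2) for the measured-residual bound and Lemma~\ref{lem:approx-powerlaw-chart} to get $|r_{k+1}-r_k|\le 2\eta_{\mathrm{pl}}$. You then use $b_k=\log\lambda_k\le 2\log M/L$ in the zero-slack case, and Theorem~\ref{thm:fitted-actual-cartan-path} together with the lemma's Bures bounds for the metric statements.

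Your caveat that the chart-map exponent must coincide with the multiplicative-fit exponent, and lie in $I$, is a genuine hypothesis. The paper's proof uses it only implicitly, so you are right to state it.
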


\begin{proof}
The measured-residual estimate is Theorem~\ref{thm:cartan-rigidity}(C2).  Under Assumption~\ref{ass:approx-powerlaw-frob}, Lemma~\ref{lem:approx-powerlaw-chart} gives $|r_k|\le\eta_{\mathrm{pl}}$ and hence $|r_{k+1}-r_k|\le2\eta_{\mathrm{pl}}$, which proves \eqref{eq:robust-local-slack}.  The uniform statement follows from $b_k=\log\lambda_k\le2\log M/L$ in the zero-slack case.  The Bures chart-error statement is Theorem~\ref{thm:fitted-actual-cartan-path} together with Lemma~\ref{lem:approx-powerlaw-chart} when the strong multiplicative fit is used.
\end{proof}

\begin{theorem}[Budget converse and spectral-shock lower bound]
\label{thm:budget-converse}
Assume the hypotheses of Theorem~\ref{thm:cartan-rigidity}.  Then every interface satisfies
\begin{equation}\label{eq:converse-slack-signed}
 b_k
 \ge
 \frac12\left(
 |g_d(\alpha_{k+1})-g_d(\alpha_k)|
 -|r_{k+1}-r_k|
 \right)_+.
\end{equation}
If $|r_k|\le\delta_k$, then
\begin{equation}\label{eq:converse-slack-delta}
 b_k
 \ge
 \frac12\left(
 |g_d(\alpha_{k+1})-g_d(\alpha_k)|
 -\delta_{k+1}-\delta_k
 \right)_+.
\end{equation}
In particular, in the exact zero-slack chart,
\begin{equation}\label{eq:converse-exact-zero-slack}
 \log\lambda_k
 \ge
 \frac12|g_d(\alpha_{k+1})-g_d(\alpha_k)|
 \ge
 \frac12 m_d(I)|\alpha_{k+1}-\alpha_k|.
\end{equation}
\end{theorem}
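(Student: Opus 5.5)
The plan is to read the converse directly off the local radial inequality of Lemma~\ref{lem:slack-radial-bound} combined with the signed residual identity, exactly as in the proof of Theorem~\ref{thm:cartan-rigidity}(C2), but without dividing by $g_d'$. The first step is to write, for each interface,
\[
 g_d(\alpha_{k+1})-g_d(\alpha_k)=\rho_d(P_{k+1})-\rho_d(P_k)-(r_{k+1}-r_k),
\]
which is just the definition $r_k=\rho_d(P_k)-g_d(\alpha_k)$. Taking absolute values and using the reverse triangle inequality gives
\[
 |\rho_d(P_{k+1})-\rho_d(P_k)|\ge |g_d(\alpha_{k+1})-g_d(\alpha_k)|-|r_{k+1}-r_k|.
\]
Combining this with \eqref{eq:radial-local-slack}, i.e.\ $2b_k\ge|\rho_d(P_{k+1})-\rho_d(P_k)|$, yields $2b_k\ge |g_d(\alpha_{k+1})-g_d(\alpha_k)|-|r_{k+1}-r_k|$. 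Since Lemma~\ref{lem:slack-radial-bound} also gives $b_k\ge0$, we may replace the right-hand side by its positive part, which is \eqref{eq:converse-slack-signed}.

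The unsigned version \eqref{eq:converse-slack-delta} follows because $|r_{k+1}-r_k|\le\delta_{k+1}+\delta_k$, and the map $x\mapsto(c-x)_+$ is nonincreasing, so enlarging the subtracted term only weakens the lower bound.

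For the exact zero-slack case \eqref{eq:converse-exact-zero-slack}, I will interpret ``exact chart'' as $r_{k+1}=r_k$ (for instance $r_k\equiv0$ under Assumption~\ref{ass:power-law} by Corollary~\ref{cor:gsa-spectral-domain}) and ``zero slack'' as $\eta_k^{\mathrm{nb}}=0$. Then $b_k=\log\lambda_k$, and the positive part is redundant, giving the first inequality. The second inequality is the mean-value theorem as in (C2): $|g_d(\alpha_{k+1})-g_d(\alpha_k)|=g_d'(\xi)|\alpha_{k+1}-\alpha_k|$ for some $\xi$ between the two exponents, hence in $I$ since $I$ is an interval. Then $g_d'(\xi)\ge m_d(I)$, which is positive by Lemma~\ref{lem:md-conditioning}.

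There is no genuine obstacle; the only point needing care is the bookkeeping of the positive part. I will make sure the nonnegativity $b_k\ge0$ from Lemma~\ref{lem:slack-radial-bound} is invoked, so that the bound with $(\cdot)_+$ is valid even when the residual jump exceeds the coordinate jump.
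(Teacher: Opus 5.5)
Your proposal is correct and follows essentially the same route as the paper: it combines the signed residual identity with the local bound $2b_k\ge|\rho_d(P_{k+1})-\rho_d(P_k)|$ from Lemma~\ref{lem:slack-radial-bound} and uses $|r_{k+1}-r_k|\le\delta_{k+1}+\delta_k$. For the exact zero-slack case you, like the paper, apply the mean-value theorem with $g_d'\ge m_d(I)$ on $I$, and your explicit appeal to $b_k\ge0$ to justify taking the positive part fills in a detail the paper leaves implicit.
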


\begin{proof}
The signed residual identity gives
\[
 |g_d(\alpha_{k+1})-g_d(\alpha_k)|
 \le
 |\rho_d(P_{k+1})-\rho_d(P_k)|+|r_{k+1}-r_k|.
\]
Lemma~\ref{lem:slack-radial-bound} bounds the radial term by $2b_k$, which proves \eqref{eq:converse-slack-signed}.  The unsigned-residual version follows from $|r_{k+1}-r_k|\le\delta_{k+1}+\delta_k$.  The exact zero-slack case sets $r_k=0$ and $\eta_k^{\mathrm{nb}}=0$; the final inequality follows from the mean-value theorem and the definition of $m_d(I)$.
\end{proof}

\begin{remark}[Empirical interpretation of the converse]
The forward theorem says that small interface margins and small residual variation force a short fitted spectral-coordinate path.  The converse says that a measured spectral shock cannot occur without at least one of these quantities becoming large.  Thus a local empirical margin test should report $\widehat\Lambda_k$, $\widehat\eta_k^{\mathrm{nb}}$, the combined margin $\widehat b_k$, and the chart residual increment $|\widehat r_{k+1}-\widehat r_k|$ together with $|\widehat\alpha_{k+1}-\widehat\alpha_k|$.
\end{remark}

\begin{theorem}[Fisher--KL and Bures action bound for a budgeted chain]
\label{thm:fisher-kl-bures-action}
Assume the hypotheses of Theorem~\ref{thm:cartan-rigidity} on the interval $I$, and set
\[
B_k:=2b_k+|r_{k+1}-r_k|,
\qquad
I_{d,\max}(I):=\max_{\alpha\in I}I_d(\alpha).
\]
Then
\begin{equation}\label{eq:KL-action-bound}
\sum_{k=0}^{L-2}\KL_d(\alpha_k\|\alpha_{k+1})
\le
\frac{I_{d,\max}(I)}{2m_d(I)^2}
\sum_{k=0}^{L-2}B_k^2.
\end{equation}
Moreover,
\begin{equation}\label{eq:BW-action-bound}
\sum_{k=0}^{L-2}d_{\BW}(G_d(\alpha_{k+1}),G_d(\alpha_k))^2
\le
\frac{dV_{d,\max}(I)}{m_d(I)^2}
\sum_{k=0}^{L-2}B_k^2.
\end{equation}
In the exact zero-slack uniform-budget case, $B_k\le4\log M/L$, so
\begin{align}
\sum_{k=0}^{L-2}\KL_d(\alpha_k\|\alpha_{k+1})
&\le
\frac{8(L-1)I_{d,\max}(I)}{m_d(I)^2}
\left(\frac{\log M}{L}\right)^2, \\
\sum_{k=0}^{L-2}d_{\BW}(G_d(\alpha_{k+1}),G_d(\alpha_k))^2
&\le
\frac{16(L-1)dV_{d,\max}(I)}{m_d(I)^2}
\left(\frac{\log M}{L}\right)^2.
\end{align}
\end{theorem}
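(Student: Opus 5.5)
The proof combines the local coordinate bound of Theorem~\ref{thm:cartan-rigidity}(C2) with the two quadratic/metric controls already established on the orbit. First, \eqref{eq:alpha-local-exact} gives, for every interface,
\[
|\alpha_{k+1}-\alpha_k|\le \frac{2b_k+|r_{k+1}-r_k|}{m_d(I)}=\frac{B_k}{m_d(I)}.
\]
Since $b_k\ge0$ by Lemma~\ref{lem:slack-radial-bound}, we have $B_k\ge0$. Squaring is therefore legitimate and yields $(\alpha_{k+1}-\alpha_k)^2\le B_k^2/m_d(I)^2$.

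For the KL action, apply \eqref{eq:KL-quadratic} of Proposition~\ref{prop:entropy-KL-Fisher} with $\alpha=\alpha_k$ and $\beta=\alpha_{k+1}$. This gives
\[
\KL_d(\alpha_k\|\alpha_{k+1})=\tfrac12 I_d(\xi_k)(\alpha_{k+1}-\alpha_k)^2
\]
for some $\xi_k$ between $\alpha_k$ and $\alpha_{k+1}$. Both endpoints lie in the interval $I$, so $\xi_k\in I$ and $I_d(\xi_k)\le I_{d,\max}(I)$. The maximum exists by continuity of $V_d$ on the compact set $I$. Combining with the squared coordinate bound and summing over $k$ gives \eqref{eq:KL-action-bound}.

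For the Bures action, use the Lipschitz estimate \eqref{eq:BW-lipschitz} of Theorem~\ref{thm:fisher-bures}(F3) on the compact interval $I$:
\[
d_{\BW}(G_d(\alpha_{k+1}),G_d(\alpha_k))^2\le dV_{d,\max}(I)(\alpha_{k+1}-\alpha_k)^2.
\]
Inserting the same squared coordinate bound and summing gives \eqref{eq:BW-action-bound}. Equivalently, one may note that $I_{d,\max}=4V_{d,\max}$, so this is $d/4$ times the Fisher constant up to the factor $1/2$ from the KL expansion.

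For the uniform case, take $r_{k+1}=r_k$, $\eta_k^{\mathrm{nb}}=0$ and $\lambda_k\le M^{2/L}$. Then $b_k=\log\lambda_k\le 2\log M/L$, hence $B_k\le 4\log M/L$ and $\sum_k B_k^2\le 16(L-1)(\log M/L)^2$.
\begin{itemize}
\item Substituting into \eqref{eq:KL-action-bound}, the prefactor $I_{d,\max}/(2m_d^2)$ times $16$ gives the constant $8$.
\item Substituting into \eqref{eq:BW-action-bound} gives the constant $16$.
\end{itemize}

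The only point needing care is that the intermediate point $\xi_k$ stays in $I$. This requires every $\alpha_k\in I$, which holds because the chart map takes values in $I$ by hypothesis. Otherwise the argument is bookkeeping.
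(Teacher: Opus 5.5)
Your proposal is correct and follows essentially the same route as the paper. Both arguments square the local bound \eqref{eq:alpha-local-exact}, use the Lagrange-remainder identity \eqref{eq:KL-quadratic} with $\xi_k\in I$ for the KL action, square the Lipschitz estimate of Theorem~\ref{thm:fisher-bures}(F3) for the Bures action, and substitute $B_k\le 4\log M/L$ over the $L-1$ interfaces to get the uniform constants $8$ and $16$; your explicit checks that $B_k\ge 0$ and that $\xi_k$ stays in $I$ only spell out points the paper leaves implicit.
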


\begin{proof}
The local coordinate estimate \eqref{eq:alpha-local-exact} gives $|\alpha_{k+1}-\alpha_k|\le B_k/m_d(I)$.  Proposition~\ref{prop:entropy-KL-Fisher} gives
\[
\KL_d(\alpha_k\|\alpha_{k+1})
=\frac12I_d(\xi_k)(\alpha_{k+1}-\alpha_k)^2
\]
for some $\xi_k\in I$, proving \eqref{eq:KL-action-bound}.  The Bures action estimate follows from Theorem~\ref{thm:fisher-bures}(F3) after squaring the local bound.  The exact zero-slack uniform-budget constants follow from $B_k=2\log\lambda_k\le4\log M/L$.
\end{proof}

\section{Spectral tail geometry and compressibility}
\label{sec:compressibility}

The compressibility part of the theory is best formulated as a spectral-tail quantity rather than only as a rank statistic but as a tail problem for a probability measure on the rank set.
Along the Cartan power-law orbit, this measure is exactly the Gibbs family already introduced above.
The effective rank is therefore an energy-truncation quantile of a spectral tail measure.

\begin{definition}[Spectral energy measure and truncation rank]
\label{def:effective-rank}
Let $W\in\R^{d\times d}$ have singular values $\sigma_1(W)\ge\cdots\ge\sigma_d(W)\ge 0$.
Define the spectral energy measure
\[
\mu_W:=\sum_{i=1}^d \frac{\sigma_i(W)^2}{\|W\|_F^2}\,\delta_i,
\]
a probability measure on $\{1,\dots,d\}$.
For $0<\eps<1$, define the truncation rank
\[
R_\eps(W):=
\min\Bigl\{r\in\{1,\dots,d\}:\mu_W(\{1,\dots,r\})\ge 1-\eps\Bigr\}.
\]
Equivalently,
\[
R_\eps(W)=
\min\left\{r:\sum_{i=1}^r\sigma_i(W)^2\ge (1-\eps)\sum_{i=1}^d \sigma_i(W)^2\right\}.
\]
\end{definition}

\begin{remark}[Geometric interpretation]
The quantity $R_\eps(W)$ is the smallest spectral truncation at which the projected Gram point $W^\top W$ retains at least $(1-\eps)$ of its total energy.
It is therefore a quantile of a spectral measure rather than a combinatorial surrogate for rank.
\end{remark}

\begin{definition}[Power-law tail measure on the Gibbs--Cartan orbit]
\label{def:tail-measure}
Under the exact power-law model $\sigma_i(W)=Ci^{-\alpha}$, define
\[
\nu_\alpha:=\sum_{i=1}^d \frac{i^{-2\alpha}}{\Harm_{d,2\alpha}}\,\delta_i.
\]
Then $\mu_W=\nu_\alpha$.
For $r\in\{0,1,\dots,d\}$, define the tail function
\[
\tau_\alpha(r):=\nu_\alpha(\{r+1,\dots,d\})
=
\frac{\Harm_{d,2\alpha}-\Harm_{r,2\alpha}}{\Harm_{d,2\alpha}},
\]
with the convention $\Harm_{0,s}:=0$; thus $\tau_\alpha(0)=1$ and $\tau_\alpha(d)=0$.
\end{definition}

\begin{lemma}[Integral bounds for power-law tails]
\label{lem:integral-test}
Let $s>0$ and $1\le r<d$. Then
\begin{equation}\label{eq:integral-test}
\int_{r+1}^{d+1} x^{-s}\,dx
\le
\sum_{i=r+1}^{d} i^{-s}
\le
\int_{r}^{d} x^{-s}\,dx.
\end{equation}
If $s>1$, then
\begin{equation}\label{eq:tail-upper}
\sum_{i=r+1}^{d} i^{-s}
\le
\int_{r}^{\infty} x^{-s}\,dx
=
\frac{r^{1-s}}{s-1}.
\end{equation}
\end{lemma}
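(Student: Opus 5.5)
The plan is to use the standard integral comparison for a monotone function.

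\emph{Two-sided bound.} Since $s>0$, the function $f(x)=x^{-s}$ is strictly decreasing on $(0,\infty)$. For each integer $i$ with $r+1\le i\le d$, monotonicity on the unit intervals $[i,i+1]$ and $[i-1,i]$ gives
\[
\int_i^{i+1}x^{-s}\,dx\le i^{-s}\le\int_{i-1}^{i}x^{-s}\,dx,
\]
because on $[i,i+1]$ one has $x^{-s}\le i^{-s}$, and on $[i-1,i]$ one has $x^{-s}\ge i^{-s}$. Note that $i-1\ge r\ge1$, so every interval stays away from the singularity at $0$. Summing over $i=r+1,\dots,d$, the left integrals telescope to $\int_{r+1}^{d+1}x^{-s}\,dx$ and the right integrals to $\int_r^d x^{-s}\,dx$. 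This gives \eqref{eq:integral-test}.

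\emph{Infinite tail.} For $s>1$, the integrand is positive, so $\int_r^d x^{-s}\,dx\le\int_r^\infty x^{-s}\,dx$. The improper integral converges and equals
\[
\left[\frac{x^{1-s}}{1-s}\right]_r^\infty=\frac{r^{1-s}}{s-1},
\]
since $x^{1-s}\to0$ as $x\to\infty$. Chaining this with the upper half of \eqref{eq:integral-test} yields \eqref{eq:tail-upper}.

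There is no real obstacle here. The only point needing care is the index bookkeeping: the upper comparison uses the intervals $[i-1,i]$ starting at $r$, while the lower comparison uses $[i,i+1]$ ending at $d+1$. The hypothesis $r\ge1$ is exactly what keeps the leftmost interval inside $(0,\infty)$.
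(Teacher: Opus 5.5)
Your proof is correct and follows the paper's argument essentially step for step: the unit-interval comparison $\int_i^{i+1}x^{-s}\,dx\le i^{-s}\le\int_{i-1}^{i}x^{-s}\,dx$ for the decreasing function $x^{-s}$, summed over $i=r+1,\dots,d$, and then enlarging $\int_r^d$ to the convergent $\int_r^\infty$ when $s>1$. One small wording point: the summed integrals combine by additivity over adjacent intervals, which is not really telescoping.
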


\begin{proof}
Let $f(x):=x^{-s}$.  For $s>0$, $f$ is positive and decreasing on $[1,\infty)$.  For every integer $i\ge 1$, monotonicity gives
\[
\int_i^{i+1} f(x)\,dx\le f(i)\le \int_{i-1}^{i}f(x)\,dx.
\]
Apply the left inequality with $i=r+1,\ldots,d$ and sum to obtain
\[
\int_{r+1}^{d+1}x^{-s}\,dx
=\sum_{i=r+1}^{d}\int_i^{i+1}x^{-s}\,dx
\le\sum_{i=r+1}^{d}i^{-s}.
\]
Apply the right inequality with the same indices and sum to obtain
\[
\sum_{i=r+1}^{d}i^{-s}
\le\sum_{i=r+1}^{d}\int_{i-1}^{i}x^{-s}\,dx
=\int_r^d x^{-s}\,dx.
\]
This proves \eqref{eq:integral-test}.  If $s>1$, the improper integral converges, and the already established upper bound gives
\[
\sum_{i=r+1}^{d}i^{-s}
\le\int_r^d x^{-s}\,dx
\le\int_r^\infty x^{-s}\,dx
=\left[\frac{x^{1-s}}{1-s}\right]_{x=r}^{\infty}
=\frac{r^{1-s}}{s-1}.
\]
This is \eqref{eq:tail-upper}.
\end{proof}

\begin{theorem}[Energy truncation on the Gibbs--Cartan tail]
\label{thm:effective-rank}
Let $W\in\R^{d\times d}$ satisfy the exact power-law model
\[
\sigma_i(W)=C\,i^{-\alpha},\qquad \alpha>\frac12.
\]
Then $\mu_W=\nu_\alpha$, and the truncation rank from Definition~\ref{def:effective-rank} is
\[
R_\eps(W)=\min\{r:\tau_\alpha(r)\le \eps\}.
\]
Moreover, for every $r\in\{1,\dots,d-1\}$,
\begin{equation}\label{eq:tail-condition}
\Harm_{d,2\alpha}-\Harm_{r,2\alpha}
\le
\frac{r^{1-2\alpha}}{2\alpha-1},
\end{equation}
hence
\begin{equation}\label{eq:Reps-upper}
R_\eps(W)
\le
\min\left\{
 d,\;
\left\lceil
\left(\frac{1}{(2\alpha-1)\,\eps\,\Harm_{d,2\alpha}}\right)^{\frac{1}{2\alpha-1}}
\right\rceil
\right\}.
\end{equation}
Conversely, if $r\in\{1,\dots,d-1\}$ satisfies
\begin{equation}\label{eq:Reps-lower-cond}
\frac{(r+1)^{1-2\alpha}-(d+1)^{1-2\alpha}}{2\alpha-1}
\ge
\eps\,\Harm_{d,2\alpha},
\end{equation}
then
\[
R_\eps(W)>r.
\]
\end{theorem}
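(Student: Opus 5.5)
The proof is a direct computation with the tail function $\tau_\alpha$ together with the integral bounds of Lemma~\ref{lem:integral-test} applied at $s=2\alpha>1$.

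\textbf{Identifying the energy measure and the quantile.} Under the exact power law, $\sigma_i(W)^2=C^2 i^{-2\alpha}$ and $\|W\|_F^2=C^2\Harm_{d,2\alpha}$. Hence the weights of $\mu_W$ are $i^{-2\alpha}/\Harm_{d,2\alpha}$, so $\mu_W=\nu_\alpha$ by Definition~\ref{def:tail-measure}. Since $\mu_W(\{1,\dots,r\})=1-\tau_\alpha(r)$, the condition $\mu_W(\{1,\dots,r\})\ge1-\eps$ is equivalent to $\tau_\alpha(r)\le\eps$. This gives $R_\eps(W)=\min\{r:\tau_\alpha(r)\le\eps\}$. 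I will also note that $\tau_\alpha$ is nonincreasing in $r$, so the feasible set is an up-set $\{r\ge R_\eps\}$. This monotonicity is what turns the bounds below into statements about the minimum.

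\textbf{Upper bound.} The inequality \eqref{eq:tail-condition} is exactly \eqref{eq:tail-upper} with $s=2\alpha>1$, because $\Harm_{d,2\alpha}-\Harm_{r,2\alpha}=\sum_{i=r+1}^d i^{-2\alpha}$. Dividing by $\Harm_{d,2\alpha}$ gives
\[
\tau_\alpha(r)\le \frac{r^{1-2\alpha}}{(2\alpha-1)\Harm_{d,2\alpha}}.
\]
This is at most $\eps$ as soon as $r^{2\alpha-1}\ge 1/((2\alpha-1)\eps\Harm_{d,2\alpha})$, that is, as soon as $r\ge r_*:=\bigl((2\alpha-1)\eps\Harm_{d,2\alpha}\bigr)^{-1/(2\alpha-1)}$.

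I then split into two cases. If $\lceil r_*\rceil\le d-1$, the bound applies at $r=\lceil r_*\rceil$, so $\tau_\alpha(\lceil r_*\rceil)\le\eps$ and hence $R_\eps\le\lceil r_*\rceil$. The case $\lceil r_*\rceil=0$ is also covered, since then $R_\eps\ge1$ trivially satisfies any bound at least $1$; to be safe I will check that the formula's ceiling is at least $1$ when $r_*>0$, which holds because $r_*>0$ forces $\lceil r_*\rceil\ge1$. If instead $\lceil r_*\rceil\ge d$, then $R_\eps\le d$ because $\tau_\alpha(d)=0\le\eps$. In both cases $R_\eps\le\min\{d,\lceil r_*\rceil\}$, which is \eqref{eq:Reps-upper}. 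The only care needed is that Lemma~\ref{lem:integral-test} requires $1\le r<d$, which is why the value $r=d$ is handled separately through $\tau_\alpha(d)=0$.

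\textbf{Converse.} The left inequality of \eqref{eq:integral-test} with $s=2\alpha$ gives
\[
\sum_{i=r+1}^d i^{-2\alpha}\ge\int_{r+1}^{d+1}x^{-2\alpha}\,dx=\frac{(r+1)^{1-2\alpha}-(d+1)^{1-2\alpha}}{2\alpha-1}.
\]
Under \eqref{eq:Reps-lower-cond} the right side is at least $\eps\Harm_{d,2\alpha}$, so $\tau_\alpha(r)\ge\eps$. To conclude $R_\eps>r$ I need $\tau_\alpha(r')>\eps$ for every $r'\le r$, with strict inequality. This is the main obstacle, because the hypothesis only gives the non-strict $\tau_\alpha(r)\ge\eps$. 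I will get strictness from the integral comparison itself: the step $\int_i^{i+1}x^{-s}\,dx<i^{-s}$ is strict because $x^{-s}$ is strictly decreasing, and the sum is over a nonempty range since $r<d$. This gives $\tau_\alpha(r)>\eps$. Monotonicity of $\tau_\alpha$ then gives $\tau_\alpha(r')\ge\tau_\alpha(r)>\eps$ for all $r'\le r$, so no $r'\le r$ is feasible and $R_\eps(W)>r$.
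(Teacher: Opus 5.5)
Your proof is correct and follows the paper's route. You identify $\mu_W=\nu_\alpha$, then apply the two sides of Lemma~\ref{lem:integral-test} with $s=2\alpha>1$ to bound $\tau_\alpha(r)$ from above and below.

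Your extra care is warranted. The paper's converse only records $\tau_\alpha(r)\ge\eps$, which does not by itself give $R_\eps(W)>r$. Your strict comparison $\int_i^{i+1}x^{-2\alpha}\,dx<i^{-2\alpha}$ over the nonempty range $r<d$, combined with monotonicity of $\tau_\alpha$ in $r$, is exactly what closes that step. Likewise, your separate handling of $\lceil r_*\rceil\ge d$ via $\tau_\alpha(d)=0$ makes explicit what the paper leaves implicit.
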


\begin{proof}
Since $\mu_W=\nu_\alpha$, one has
\[
\tau_\alpha(r)
=
\frac{\sum_{i=r+1}^d C^2 i^{-2\alpha}}{\sum_{i=1}^d C^2 i^{-2\alpha}}
=
\frac{\Harm_{d,2\alpha}-\Harm_{r,2\alpha}}{\Harm_{d,2\alpha}}.
\]
Thus $R_\eps(W)$ is exactly the smallest $r$ with $\tau_\alpha(r)\le \eps$.
Applying Lemma~\ref{lem:integral-test} with $s=2\alpha>1$ gives
\[
\Harm_{d,2\alpha}-\Harm_{r,2\alpha}
=
\sum_{i=r+1}^d i^{-2\alpha}
\le
\int_r^\infty x^{-2\alpha}\,dx
=
\frac{r^{1-2\alpha}}{2\alpha-1},
\]
which proves \eqref{eq:tail-condition}.
If the right-hand side is at most $\eps\Harm_{d,2\alpha}$, then $\tau_\alpha(r)\le \eps$, hence $R_\eps(W)\le r$, giving \eqref{eq:Reps-upper}.
The lower bound follows from the lower integral estimate
\[
\sum_{i=r+1}^d i^{-2\alpha}
\ge
\int_{r+1}^{d+1}x^{-2\alpha}\,dx
=
\frac{(r+1)^{1-2\alpha}-(d+1)^{1-2\alpha}}{2\alpha-1},
\]
which implies $\tau_\alpha(r)\ge \eps$ under \eqref{eq:Reps-lower-cond}.
\end{proof}

\begin{remark}[Large-width scaling]
For fixed $\alpha>\frac12$ and large $d$, $\Harm_{d,2\alpha}\to\zeta(2\alpha)$.
The upper bound in Theorem~\ref{thm:effective-rank} therefore gives the explicit leading scale
\[
R_\eps(W)
\le
\left\lceil
\left(\frac{1+o(1)}{(2\alpha-1)\eps\,\zeta(2\alpha)}\right)^{\frac{1}{2\alpha-1}}
\right\rceil
\qquad(d\to\infty).
\]
The theorem above is its exact finite-width form on the Gibbs--Cartan orbit.  For finite $d$, the quantile $R_\eps(\alpha)$ is defined for every $\alpha>0$; however, when $\alpha\le1/2$ the dimension-independent zeta-limit compression scaling is not available, and empirical summaries should report how often fitted exponents lie above this threshold.
\end{remark}

\begin{proposition}[Monotonicity of truncation rank along the Cartan orbit]
\label{prop:rank-monotone-alpha}
Fix $0<\eps<1$ and finite width $d$.
For the power-law tail measures $\nu_\alpha$ from Definition~\ref{def:tail-measure}, the cumulative mass
\[
F_r(\alpha):=\nu_\alpha(\{1,\dots,r\})
=
\sum_{i=1}^r \frac{i^{-2\alpha}}{\Harm_{d,2\alpha}}
\]
is nondecreasing in $\alpha$ for every $r=1,\dots,d$.
Equivalently, the tail mass $\tau_\alpha(r)=1-F_r(\alpha)$ is nonincreasing in $\alpha$.
Consequently, the truncation rank $R_\eps(\alpha):=\min\{r:\tau_\alpha(r)\le \eps\}$ is nonincreasing as a function of $\alpha$.
\end{proposition}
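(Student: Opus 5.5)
The plan is to differentiate $F_r(\alpha)$ directly and reuse the Gibbs-family identities of Proposition~\ref{prop:g-concavity}. From the proof of that proposition we already have
\[
\frac{d}{d\alpha}p_i^{(\alpha)}=-2(\log i-U_d(\alpha))\,p_i^{(\alpha)},
\]
where $U_d(\alpha)=\E_{p^{(\alpha)}}[\log i]$. Summing over $i\le r$ gives
\[
F_r'(\alpha)=2\sum_{i=1}^r\bigl(U_d(\alpha)-\log i\bigr)p_i^{(\alpha)}
=2F_r(\alpha)\Bigl(U_d(\alpha)-\E_{p^{(\alpha)}}[\log i\mid i\le r]\Bigr).
\]
So it suffices to show that the conditional mean of $\log i$ on the initial segment $\{1,\dots,r\}$ is at most the unconditional mean $U_d(\alpha)$.

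For this I would use the elementary fact that a lower set carries smaller values of an increasing function. Write $U=F_r\,m_{\le r}+(1-F_r)\,m_{>r}$, where $m_{\le r}$ and $m_{>r}$ are the conditional means of $\log i$ on $\{i\le r\}$ and $\{i>r\}$. Every value on the first set is at most $\log r$, and every value on the second is at least $\log(r+1)$. Hence $m_{\le r}\le m_{>r}$, and therefore $m_{\le r}\le U$.

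The case $r=d$ is trivial, since $F_d\equiv1$. In every case $F_r'(\alpha)\ge0$, so $F_r$ is nondecreasing on $(0,\infty)$, and $\tau_\alpha(r)=1-F_r(\alpha)$ is nonincreasing. An alternative route avoiding calculus would be to write the ratio $F_r/(1-F_r)$ as a ratio of sums and compare termwise using $i^{-2\alpha}/j^{-2\alpha}=(j/i)^{2\alpha}$, which is increasing in $\alpha$ for $i<j$. I would keep the derivative argument because it reuses the earlier identities.

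For the rank statement, fix $\alpha<\beta$ and let $r=R_\eps(\alpha)$, so $\tau_\alpha(r)\le\eps$. Monotonicity gives $\tau_\beta(r)\le\tau_\alpha(r)\le\eps$, so $r$ is admissible at $\beta$. Thus $R_\eps(\beta)\le r$, and $R_\eps$ is well defined because $\tau_\alpha(d)=0\le\eps$.

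The only step needing care is the sign comparison of conditional means. It is routine once the split into the lower block $\{1,\dots,r\}$ and the upper block is made, and it relies only on monotonicity of $\log i$.
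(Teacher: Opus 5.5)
Your proposal is correct and follows essentially the same route as the paper: you differentiate $F_r$ via the Gibbs identity $\frac{d}{d\alpha}p_i^{(\alpha)}=-2(\log i-U_d(\alpha))p_i^{(\alpha)}$, show that the conditional mean of $\log i$ on $\{1,\dots,r\}$ is at most $U_d(\alpha)$, and then get monotonicity of $R_\eps$ because $r=R_\eps(\alpha)$ remains admissible at larger $\alpha$. Your convex-combination decomposition $U=F_r\,m_{\le r}+(1-F_r)\,m_{>r}$ simply spells out the comparison of conditional means that the paper states in one line.
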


\begin{proof}
Fix $r$ and write $A=\{1,\dots,r\}$.
Using the Gibbs form $p_i^{(\alpha)}\propto e^{-2\alpha\log i}$,
\[
\frac{d}{d\alpha}p_i^{(\alpha)}
=
-2\bigl(\log i-U_d(\alpha)\bigr)p_i^{(\alpha)}.
\]
Therefore
\[
F_r'(\alpha)
=
-2\sum_{i\in A}(\log i-U_d(\alpha))p_i^{(\alpha)}
=
2F_r(\alpha)\left(U_d(\alpha)-\E_{p^{(\alpha)}}[\log i\mid i\in A]\right),
\]
with the convention that the derivative is zero when $F_r(\alpha)=0$, which never occurs here.
The conditional mean over $A$ is at most the unconditional mean $U_d(\alpha)$, because every value of $\log i$ on $A^c$ is at least every value on $A$.
Hence $F_r'(\alpha)\ge0$.
Thus $\tau_\alpha(r)=1-F_r(\alpha)$ is nonincreasing in $\alpha$.
If $\alpha_2\ge\alpha_1$ and $r=R_\eps(\alpha_1)$, then $\tau_{\alpha_2}(r)\le\tau_{\alpha_1}(r)\le\eps$, so $R_\eps(\alpha_2)\le r=R_\eps(\alpha_1)$.
\end{proof}

\begin{lemma}[Uniform Lipschitz bound for spectral tail masses]
\label{lem:tail-lipschitz-alpha}
For every $r\in\{0,1,\dots,d\}$ and every $\alpha,\beta>0$,
\begin{equation}\label{eq:tail-lipschitz-alpha}
|\tau_\alpha(r)-\tau_\beta(r)|\le 2(\log d)|\alpha-\beta|.
\end{equation}
\end{lemma}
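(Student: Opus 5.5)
The plan is to differentiate the tail mass in $\alpha$ and bound the derivative uniformly, then integrate along the segment between $\alpha$ and $\beta$. The endpoint cases $r=0$ and $r=d$ are trivial, since $\tau_\alpha(0)=1$ and $\tau_\alpha(d)=0$ for every $\alpha$. So fix $1\le r\le d-1$ and write $B=\{r+1,\dots,d\}$, so that $\tau_\alpha(r)=\sum_{i\in B}p_i^{(\alpha)}$. The map $\alpha\mapsto\tau_\alpha(r)$ is smooth on $(0,\infty)$, being a finite ratio of exponential sums with a positive denominator. The mean value theorem will then give the claim once $\sup_\alpha|\partial_\alpha\tau_\alpha(r)|\le 2\log d$ is established.

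For the derivative, I will reuse the Gibbs identity from the proof of Proposition~\ref{prop:g-concavity}:
\[
\frac{d}{d\alpha}p_i^{(\alpha)}=-2(\log i-U_d(\alpha))p_i^{(\alpha)}.
\]
Summing over $i\in B$ gives
\[
\partial_\alpha\tau_\alpha(r)=-2\sum_{i\in B}(\log i-U_d(\alpha))p_i^{(\alpha)}.
\]
Since $\log i\in[0,\log d]$ and $U_d(\alpha)$ is an average of such values, also $U_d(\alpha)\in[0,\log d]$. Hence $|\log i-U_d(\alpha)|\le\log d$, and therefore
\[
|\partial_\alpha\tau_\alpha(r)|\le 2(\log d)\sum_{i\in B}p_i^{(\alpha)}\le 2\log d.
\]
This is exactly the constant in \eqref{eq:tail-lipschitz-alpha}.

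A sharper route is also available: the derivative equals $-2\operatorname{Cov}_{p^{(\alpha)}}(\log i,\mathbbm 1_B)$, whose absolute value is bounded by the range-product bound $(\log d)\cdot 1\cdot\frac12$. The crude bound already suffices, so I would not pursue this.

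Integrating, $|\tau_\alpha(r)-\tau_\beta(r)|\le\int_{\min(\alpha,\beta)}^{\max(\alpha,\beta)}|\partial_t\tau_t(r)|\,dt\le 2(\log d)|\alpha-\beta|$. I do not expect a real obstacle here. The only point needing care is the uniformity of the bound in $\alpha$ over all of $(0,\infty)$, which the argument above provides without any compactness. The degenerate case $d=1$ is consistent, since both sides vanish.
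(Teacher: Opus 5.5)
Your proof is correct and follows essentially the same route as the paper: both differentiate the tail mass using the Gibbs identity $\frac{d}{d\alpha}p_i^{(\alpha)}=-2(\log i-U_d(\alpha))p_i^{(\alpha)}$, bound the derivative by $2\log d$ because $\log i$ and $U_d(\alpha)$ both lie in $[0,\log d]$, and conclude by the mean value theorem. The only cosmetic difference is that the paper differentiates the head mass $F_r(\alpha)=1-\tau_\alpha(r)$ and phrases the bound through the conditional mean $\E_{p^{(\alpha)}}[\log i\mid i\le r]$, whereas you bound the tail sum directly term by term.
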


\begin{proof}
The endpoints require no estimate: if $r=0$, then $\tau_\alpha(0)=1$ for all $\alpha$, and if $r=d$, then $\tau_\alpha(d)=0$ for all $\alpha$.  In both cases the left-hand side of \eqref{eq:tail-lipschitz-alpha} is zero.

Assume now $1\le r\le d-1$ and set
\[
F_r(\alpha):=1-\tau_\alpha(r)=\sum_{i=1}^{r}p_i^{(\alpha)}.
\]
The derivative calculation in Proposition~\ref{prop:rank-monotone-alpha} gives
\[
F_r'(\alpha)=2F_r(\alpha)
\left(U_d(\alpha)-\E_{p^{(\alpha)}}[\log i\mid i\le r]\right).
\]
The factor $F_r(\alpha)$ lies in $[0,1]$.  The random variable $\log i$ always lies in $[0,\log d]$, so both the unconditional mean $U_d(\alpha)$ and the conditional mean $\E_{p^{(\alpha)}}[\log i\mid i\le r]$ lie in that same interval.  Therefore
\[
\left|U_d(\alpha)-\E_{p^{(\alpha)}}[\log i\mid i\le r]\right|\le \log d.
\]
Combining the two bounds gives
\[
|F_r'(\alpha)|\le 2\log d.
\]
Since $\tau_\alpha(r)=1-F_r(\alpha)$, we also have
\[
\left|\frac{d}{d\alpha}\tau_\alpha(r)\right|=|F_r'(\alpha)|\le2\log d.
\]
For arbitrary $\alpha,\beta>0$, the mean value theorem applied to the continuously differentiable function $\gamma\mapsto\tau_\gamma(r)$ gives
\[
|\tau_\alpha(r)-\tau_\beta(r)|\le \sup_{\gamma\text{ between }\alpha\text{ and }\beta}\left|\frac{d}{d\gamma}\tau_\gamma(r)\right|\,|\alpha-\beta|
\le2(\log d)|\alpha-\beta|,
\]
which proves \eqref{eq:tail-lipschitz-alpha}.
\end{proof}

\begin{definition}[Rank-separation margin]
\label{def:rank-separation-margin}
Fix $0<\eps<1$ and $\alpha>0$.
Let $r_\eps(\alpha):=R_\eps(\alpha)$ denote the truncation rank of the power-law energy measure $\nu_\alpha$.
Define the rank-separation margin
\begin{equation}\label{eq:rank-separation-margin}
\mathfrak m_\eps(\alpha)
:=
\min\bigl\{\eps-\tau_\alpha(r_\eps(\alpha)),\ \tau_\alpha(r_\eps(\alpha)-1)-\eps\bigr\}.
\end{equation}
The margin is positive precisely when the threshold $\eps$ does not coincide with the tail mass at either side of the selected rank.
\end{definition}

\begin{proposition}[Stability of the effective-rank window]
\label{prop:rank-window-stability}
Fix $0<\eps<1$ and $\alpha,\beta>0$.
Let $r:=R_\eps(\alpha)$.
If
\begin{equation}\label{eq:rank-window-stability-condition}
2(\log d)|\alpha-\beta|<\mathfrak m_\eps(\alpha),
\end{equation}
then
\[
R_\eps(\beta)=R_\eps(\alpha)=r.
\]
\end{proposition}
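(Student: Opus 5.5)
The plan is to characterize $R_\eps(\beta)=r$ by two tail inequalities at $\beta$ and transfer each one from $\alpha$ using Lemma~\ref{lem:tail-lipschitz-alpha}.

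\textbf{Characterization.} By Proposition~\ref{prop:rank-monotone-alpha}'s setup, $r\mapsto\tau_\beta(r)$ is nonincreasing in $r$, since it is a tail mass of a probability measure. Hence $R_\eps(\beta)=r$ holds exactly when $\tau_\beta(r)\le\eps$ and, if $r\ge2$, $\tau_\beta(r-1)>\eps$. Monotonicity in $r$ makes the second inequality exclude every smaller rank.

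\textbf{Edge case $r=1$.} Here I would use the convention $\tau_\beta(0)=1>\eps$, so the lower condition is automatic. This is consistent with the margin in Definition~\ref{def:rank-separation-margin}, whose second entry is then $1-\eps>0$.

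\textbf{Transfer.} Write $\Delta:=2(\log d)|\alpha-\beta|$, so by hypothesis $\Delta<\mathfrak m_\eps(\alpha)$. Lemma~\ref{lem:tail-lipschitz-alpha} holds for every $r\in\{0,\dots,d\}$ and gives
\[
\tau_\beta(r)\le\tau_\alpha(r)+\Delta<\tau_\alpha(r)+\mathfrak m_\eps(\alpha)\le\tau_\alpha(r)+\bigl(\eps-\tau_\alpha(r)\bigr)=\eps,
\]
and
\[
\tau_\beta(r-1)\ge\tau_\alpha(r-1)-\Delta>\tau_\alpha(r-1)-\mathfrak m_\eps(\alpha)\ge\eps.
\]
Both inequalities use only the two components of the minimum in \eqref{eq:rank-separation-margin}. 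Together they show that $r$ satisfies the defining threshold at $\beta$ while $r-1$ fails it. Combined with monotonicity in $r$, this gives $R_\eps(\beta)=r=R_\eps(\alpha)$.

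\textbf{Main obstacle.} The argument is essentially immediate, so the only delicate point is bookkeeping. The strict inequality in the hypothesis is what yields the strict inequality $\tau_\beta(r-1)>\eps$, which is the condition the minimality in $R_\eps$ requires. I also need to confirm that Lemma~\ref{lem:tail-lipschitz-alpha} covers index $r-1=0$, which it does. When $\mathfrak m_\eps(\alpha)\le0$ the hypothesis is vacuous, so no separate treatment is needed.
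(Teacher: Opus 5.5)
Your proof is correct and takes essentially the same approach as the paper. Both arguments use the two components of the margin in Definition~\ref{def:rank-separation-margin} and the uniform Lipschitz bound of Lemma~\ref{lem:tail-lipschitz-alpha} to obtain $\tau_\beta(r)<\eps<\tau_\beta(r-1)$, then conclude by minimality; your explicit use of monotonicity of $q\mapsto\tau_\beta(q)$ and of the convention $\tau_\beta(0)=1$ for $r=1$ spells out a step the paper leaves implicit.
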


\begin{proof}
Let $r=R_\eps(\alpha)$.  By definition of $R_\eps$, the rank $r$ is the first rank whose tail is at most $\eps$:
\[
\tau_\alpha(r)\le\eps,
\qquad
\tau_\alpha(r-1)>\eps
\]
with the second inequality interpreted for $r>1$.  The positive margin $\mathfrak m_\eps(\alpha)$ strengthens these to
\[
\tau_\alpha(r)\le \eps-\mathfrak m_\eps(\alpha),
\qquad
\tau_\alpha(r-1)\ge \eps+\mathfrak m_\eps(\alpha).
\]
The second display is exactly Definition~\ref{def:rank-separation-margin}.

By Lemma~\ref{lem:tail-lipschitz-alpha}, for every rank index $q$,
\[
|\tau_\beta(q)-\tau_\alpha(q)|\le 2(\log d)|\alpha-\beta|.
\]
Using \eqref{eq:rank-window-stability-condition}, we obtain
\[
\tau_\beta(r)
\le \tau_\alpha(r)+2(\log d)|\alpha-\beta|
< (\eps-\mathfrak m_\eps(\alpha))+\mathfrak m_\eps(\alpha)=\eps,
\]
and similarly
\[
\tau_\beta(r-1)
\ge \tau_\alpha(r-1)-2(\log d)|\alpha-\beta|
> (\eps+\mathfrak m_\eps(\alpha))-\mathfrak m_\eps(\alpha)=\eps.
\]
The first inequality says that rank $r$ captures at least $1-\eps$ of the spectral energy for parameter $\beta$.  The second says that rank $r-1$ fails to do so.  Since $R_\eps(\beta)$ is the minimal rank satisfying the tail constraint, both conditions together imply $R_\eps(\beta)=r$.
\end{proof}

\begin{corollary}[Cartan shortness selects a stable dominant-mode bundle]
\label{cor:cartan-to-rank-window}
Assume the hypotheses of Theorem~\ref{thm:cartan-rigidity} on an interval $I$.  For an interface $k$, define the theorem-predicted coordinate displacement bound
\[
B_k^{\alpha}:=
\frac{2b_k+|r_{k+1}-r_k|}{m_d(I)}.
\]
Let $r_k^{\eps}:=R_\eps(\alpha_k)$.  If
\begin{equation}\label{eq:cartan-rank-stability-condition}
2(\log d)B_k^{\alpha}<\mathfrak m_\eps(\alpha_k),
\end{equation}
then the same effective-rank window is selected on both sides of the interface:
\[
R_\eps(\alpha_{k+1})=R_\eps(\alpha_k)=r_k^{\eps}.
\]
In the exact zero-slack uniform-budget case, it is sufficient that
\[
\frac{8(\log d)(\log M)}{L\,m_d(I)}<\mathfrak m_\eps(\alpha_k).
\]
\end{corollary}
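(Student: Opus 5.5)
The plan is to chain the local coordinate bound of Theorem~\ref{thm:cartan-rigidity}(C2) with the window-stability result Proposition~\ref{prop:rank-window-stability}. First, by \eqref{eq:alpha-local-exact}, every interface satisfies
\[
|\alpha_{k+1}-\alpha_k|\le \frac{2b_k+|r_{k+1}-r_k|}{m_d(I)}=B_k^{\alpha}.
\]
Multiplying by $2\log d$ gives $2(\log d)|\alpha_{k+1}-\alpha_k|\le 2(\log d)B_k^{\alpha}$. Under hypothesis \eqref{eq:cartan-rank-stability-condition} this is strictly smaller than $\mathfrak m_\eps(\alpha_k)$. That is exactly condition \eqref{eq:rank-window-stability-condition} with $\alpha=\alpha_k$ and $\beta=\alpha_{k+1}$.

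Next, apply Proposition~\ref{prop:rank-window-stability} with these choices. It yields $R_\eps(\alpha_{k+1})=R_\eps(\alpha_k)=r_k^\eps$. Here $R_\eps(\alpha)$ is understood as the truncation rank of the orbit measure $\nu_\alpha$, as in Proposition~\ref{prop:rank-monotone-alpha}. No power-law assumption on the actual $W_k$ is needed, since the statement concerns only the fitted coordinates. One point needs checking: Proposition~\ref{prop:rank-window-stability} uses the lower tail inequality at $r-1$, which is vacuous when $r=1$. The argument there handles this implicitly, because for $r=1$ only the upper condition is needed for minimality.

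For the zero-slack uniform-budget specialization, I would use $\eta_k^{\mathrm{nb}}=0$ and $\lambda_k\le M^{2/L}$ to get $b_k=\log\lambda_k\le 2\log M/L$. The exact-chart assumption gives $r_{k+1}=r_k$. Hence $B_k^\alpha\le 4\log M/(L\,m_d(I))$ and $2(\log d)B_k^\alpha\le 8(\log d)(\log M)/(L\,m_d(I))$. The stated sufficient condition therefore implies \eqref{eq:cartan-rank-stability-condition}, and the first part applies.

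The main obstacle is purely bookkeeping: the strict inequality must survive the chain. It does, because only the final comparison with $\mathfrak m_\eps(\alpha_k)$ needs to be strict, and the earlier inequalities may be non-strict.
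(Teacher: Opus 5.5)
Your proposal is correct and follows the paper's proof essentially step for step. You bound $|\alpha_{k+1}-\alpha_k|$ by $B_k^{\alpha}$ via \eqref{eq:alpha-local-exact}, feed this into Proposition~\ref{prop:rank-window-stability} with $\alpha=\alpha_k$, $\beta=\alpha_{k+1}$, and get the zero-slack condition from $b_k=\log\lambda_k\le 2\log M/L$ and $r_{k+1}=r_k$. Your remark about $r=1$ is harmless: the convention $\tau_\alpha(0)=1>\eps$ makes the lower condition hold automatically.
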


\begin{proof}
Theorem~\ref{thm:cartan-rigidity}(C2) gives
\[
|\alpha_{k+1}-\alpha_k|\le B_k^{\alpha}.
\]
If \eqref{eq:cartan-rank-stability-condition} holds, then
\[
2(\log d)|\alpha_{k+1}-\alpha_k|
\le 2(\log d)B_k^{\alpha}
<\mathfrak m_\eps(\alpha_k).
\]
This is exactly the hypothesis of Proposition~\ref{prop:rank-window-stability} with $\alpha=\alpha_k$ and $\beta=\alpha_{k+1}$.  Therefore $R_\eps(\alpha_{k+1})=R_\eps(\alpha_k)$.  In the exact zero-slack uniform-budget case, $r_{k+1}-r_k=0$, $\eta_k^{\mathrm{nb}}=0$, and $\log\lambda_k\le2\log M/L$, so $B_k^{\alpha}\le4\log M/(L m_d(I))$, giving the displayed sufficient condition.
\end{proof}

\begin{definition}[Actual spectral-tail chart residual]
\label{def:actual-tail-residual}
For a layer $W_k$ charted by $\alpha_k$, define
\begin{equation}\label{eq:actual-tail-residual}
\Delta_k^{\tau}:=\sup_{0\le r\le d}
\left|
\mu_{W_k}(\{r+1,\dots,d\})-\tau_{\alpha_k}(r)
\right|.
\end{equation}
This measures the uniform discrepancy between the actual spectral-energy tail and the fitted power-law tail.
\end{definition}

\begin{lemma}[Bures/Hellinger chart error controls tail residuals]
\label{lem:bures-controls-tail-residual}
Let $P=W^\top W\in\SPD_d^{(1)}$ and let $\mu_W$ be its spectral energy measure.  If $P$ is charted by $\alpha$, then
\[
\Delta^\tau
:=\sup_{0\le r\le d}|\mu_W(\{r+1,\dots,d\})-\tau_\alpha(r)|
\le
\|\mu_W-\nu_\alpha\|_{\mathrm{TV}}
\le
\bar e^{\BW},
\]
where $\bar e^{\BW}=d^{-1/2}d_{\BW}(\Cart(P),G_d(\alpha))=\|\sqrt{\mu_W}-\sqrt{\nu_\alpha}\|_2$.
\end{lemma}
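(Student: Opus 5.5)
The plan is to prove the two inequalities separately, after first checking the identification $\bar e^{\BW}=\|\sqrt{\mu_W}-\sqrt{\nu_\alpha}\|_2$. Throughout I will use the convention $\|\mu-\nu\|_{\mathrm{TV}}=\sup_{A\subset\{1,\dots,d\}}|\mu(A)-\nu(A)|=\frac12\|\mu-\nu\|_1$ for probability measures on the rank set.

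\textbf{Step 1: the identification.} Since $P\in\SPD_d^{(1)}$, we have $\tr P=\|W\|_F^2=d$. Hence the diagonal entries of $\Cart(P)$ are $\lambda_i(P)=\sigma_i(W)^2=d\,\mu_W(\{i\})$. Likewise the entries of $G_d(\alpha)$ are $\lambda_i(\alpha)=d\,\nu_\alpha(\{i\})$, by Definition~\ref{def:spectral-energy-distribution} and Definition~\ref{def:tail-measure}. Both matrices are diagonal, so they commute. The computation in the proof of Theorem~\ref{thm:fisher-bures}(F1) then applies verbatim: the Bures distance between two commuting diagonal positive matrices is the Euclidean distance between their entrywise square-root vectors. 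This gives $d_{\BW}(\Cart(P),G_d(\alpha))=\sqrt d\,\|\sqrt{\mu_W}-\sqrt{\nu_\alpha}\|_2$, and dividing by $\sqrt d$ yields the stated formula for $\bar e^{\BW}$.

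\textbf{Step 2: tail residual $\le$ total variation.} For each $0\le r\le d$, the tail $\{r+1,\dots,d\}$ is a subset of the rank set (empty when $r=d$). By Definition~\ref{def:tail-measure}, $\tau_\alpha(r)=\nu_\alpha(\{r+1,\dots,d\})$. Hence each term in the supremum defining $\Delta^\tau$ has the form $|\mu_W(A)-\nu_\alpha(A)|$ for a set $A$, and is therefore at most $\|\mu_W-\nu_\alpha\|_{\mathrm{TV}}$. Taking the supremum over $r$ gives the first inequality.

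\textbf{Step 3: total variation $\le$ Hellinger.} This is the only step with content, and it follows the standard Le Cam argument. Write $p_i=\mu_W(\{i\})$ and $q_i=\nu_\alpha(\{i\})$. Factor
\[
|p_i-q_i|=|\sqrt{p_i}-\sqrt{q_i}|\,(\sqrt{p_i}+\sqrt{q_i}).
\]
Summing over $i$ and applying Cauchy--Schwarz gives
\[
\|p-q\|_1\le\|\sqrt p-\sqrt q\|_2\,\|\sqrt p+\sqrt q\|_2 .
\]
Since $\sum_ip_i=\sum_iq_i=1$, we have $\|\sqrt p+\sqrt q\|_2^2=2+2\sum_i\sqrt{p_iq_i}$. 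The Bhattacharyya coefficient $\sum_i\sqrt{p_iq_i}$ is at most $1$ by Cauchy--Schwarz, so $\|\sqrt p+\sqrt q\|_2\le 2$. Therefore $\|p-q\|_1\le 2\|\sqrt p-\sqrt q\|_2$. Halving, we obtain $\|\mu_W-\nu_\alpha\|_{\mathrm{TV}}\le\|\sqrt{\mu_W}-\sqrt{\nu_\alpha}\|_2=\bar e^{\BW}$.

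\textbf{Main obstacle.} The expected difficulty is mostly bookkeeping. One must fix the TV normalization so that the factor $\frac12$ exactly cancels the factor $2$ from $\|\sqrt p+\sqrt q\|_2\le 2$. One must also confirm that the trace-$d$ normalization makes $\Cart(P)/d$ coincide with $\mu_W$, so that no additional dimensional factor enters.
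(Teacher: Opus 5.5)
Your proposal is correct and matches the paper's proof essentially line for line. The first inequality holds because each tail $\{r+1,\dots,d\}$ is one of the sets in the supremum defining total variation, and the second follows from the factorization $|p_i-q_i|=|\sqrt{p_i}-\sqrt{q_i}|(\sqrt{p_i}+\sqrt{q_i})$, Cauchy--Schwarz, and $\|\sqrt p+\sqrt q\|_2\le 2$, with $\bar e^{\BW}=\|\sqrt{\mu_W}-\sqrt{\nu_\alpha}\|_2$ taken from the diagonal trace-$d$ Bures formula; your Step~1 and your Bhattacharyya justification of $\|\sqrt p+\sqrt q\|_2\le 2$ only spell out details the paper leaves implicit.
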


\begin{proof}
The first inequality is the definition of total variation as a supremum over measurable sets.  For the second, write $h=\|\sqrt{\mu_W}-\sqrt{\nu_\alpha}\|_2$.  Then
\[
\|\mu_W-\nu_\alpha\|_1
\le
\sum_i |\sqrt{\mu_i}-\sqrt{\nu_i}|(\sqrt{\mu_i}+\sqrt{\nu_i})
\le
h\,\|\sqrt{\mu_W}+\sqrt{\nu_\alpha}\|_2
\le 2h.
\]
Thus $\|\mu_W-\nu_\alpha\|_{\mathrm{TV}}=\frac12\|\mu_W-\nu_\alpha\|_1\le h$.  The equality $h=\bar e^{\BW}$ follows from the diagonal trace-$d$ Bures formula.
\end{proof}

\begin{proposition}[Actual effective-rank stability under tail residuals]
\label{prop:actual-rank-window-stability}
Assume the hypotheses of Theorem~\ref{thm:cartan-rigidity} and define $B_k^{\alpha}$ as in Corollary~\ref{cor:cartan-to-rank-window}.  Let $r_k^{\eps}:=R_\eps(\alpha_k)$.  If
\begin{equation}\label{eq:actual-rank-stability-condition}
2(\log d)B_k^{\alpha}+\Delta_k^{\tau}+\Delta_{k+1}^{\tau}
<\mathfrak m_\eps(\alpha_k),
\end{equation}
then the actual layerwise effective-rank windows agree with the fitted window across the interface:
\begin{equation}\label{eq:actual-rank-stability-conclusion}
R_\eps(W_k)=R_\eps(W_{k+1})=R_\eps(\alpha_k)=r_k^{\eps}.
\end{equation}
\end{proposition}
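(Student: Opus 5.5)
The plan is to reduce the statement to the fitted-window stability of Proposition~\ref{prop:rank-window-stability}, applied not to the fitted tails alone but to the actual tails $\mu_{W_k}(\{r+1,\dots,d\})$ and $\mu_{W_{k+1}}(\{r+1,\dots,d\})$. Set $r:=r_k^\eps=R_\eps(\alpha_k)$ and $\mathfrak m:=\mathfrak m_\eps(\alpha_k)$. Condition \eqref{eq:actual-rank-stability-condition} forces $\mathfrak m>0$. Hence, exactly as in the proof of Proposition~\ref{prop:rank-window-stability},
\[
\tau_{\alpha_k}(r)\le\eps-\mathfrak m,
\qquad
\tau_{\alpha_k}(r-1)\ge\eps+\mathfrak m,
\]
where the second inequality is needed only when $r>1$. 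For $r=1$ the lower condition is vacuous, since $R_\eps\ge1$ always.

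\textbf{Step 1 (layer $k$).} By Definition~\ref{def:actual-tail-residual}, for every $q$ we have $|\mu_{W_k}(\{q+1,\dots,d\})-\tau_{\alpha_k}(q)|\le\Delta_k^\tau$. Since $\Delta_k^\tau<\mathfrak m$ by \eqref{eq:actual-rank-stability-condition}, this gives $\mu_{W_k}(\{r+1,\dots,d\})<\eps$ and $\mu_{W_k}(\{r,\dots,d\})>\eps$. Thus rank $r$ retains at least $1-\eps$ of the energy while rank $r-1$ does not, so $R_\eps(W_k)=r$ by Definition~\ref{def:effective-rank}.

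\textbf{Step 2 (layer $k+1$).} Theorem~\ref{thm:cartan-rigidity}(C2) gives $|\alpha_{k+1}-\alpha_k|\le B_k^\alpha$. Lemma~\ref{lem:tail-lipschitz-alpha} then gives $|\tau_{\alpha_{k+1}}(q)-\tau_{\alpha_k}(q)|\le2(\log d)B_k^\alpha$ for all $q$. Combining with the residual at $k+1$ through the triangle inequality yields
\[
|\mu_{W_{k+1}}(\{q+1,\dots,d\})-\tau_{\alpha_k}(q)|\le 2(\log d)B_k^\alpha+\Delta_{k+1}^\tau<\mathfrak m .
\]
The same two-sided argument as in Step 1 then gives $R_\eps(W_{k+1})=r$.

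\textbf{Step 3 (fitted window at $k+1$).} The equality $R_\eps(\alpha_k)=r$ holds by definition. The condition also implies $2(\log d)B_k^\alpha<\mathfrak m$, so Corollary~\ref{cor:cartan-to-rank-window} gives $R_\eps(\alpha_{k+1})=r$ as well. This is not needed for \eqref{eq:actual-rank-stability-conclusion}, but it is consistent with it.

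The only delicate point is the bookkeeping of the strict versus non-strict inequalities at the threshold, together with the edge case $r=1$. The condition in the hypothesis is strict, so both tails stay strictly on the correct side of $\eps$, and the minimality in the definition of $R_\eps$ closes the argument. The hypothesis carries the budget $2(\log d)B_k^\alpha+\Delta_k^\tau+\Delta_{k+1}^\tau$, which is looser than what each layer separately needs, so no sharper estimate is required.
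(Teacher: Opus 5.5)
Your proposal is correct and follows essentially the same argument as the paper. Layer $k$ is settled directly from $\Delta_k^{\tau}<\mathfrak m_\eps(\alpha_k)$, and layer $k+1$ by combining (C2), the tail-Lipschitz lemma, and $\Delta_{k+1}^{\tau}$ via the triangle inequality against the margin of $\tau_{\alpha_k}$; your Step 3 ($R_\eps(\alpha_{k+1})=r$) is a harmless extra that the conclusion does not need.
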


\begin{proof}
Let $r=r_k^{\eps}$.  By Definition~\ref{def:rank-separation-margin},
\[
\tau_{\alpha_k}(r)\le \eps-\mathfrak m_\eps(\alpha_k),
\qquad
\tau_{\alpha_k}(r-1)\ge \eps+\mathfrak m_\eps(\alpha_k).
\]
Since \eqref{eq:actual-rank-stability-condition} implies $\Delta_k^{\tau}<\mathfrak m_\eps(\alpha_k)$, the actual tail of $W_k$ is at most $\eps$ at rank $r$ and greater than $\eps$ at rank $r-1$, so $R_\eps(W_k)=r$.  Theorem~\ref{thm:cartan-rigidity}(C2) gives $|\alpha_{k+1}-\alpha_k|\le B_k^{\alpha}$.  Lemma~\ref{lem:tail-lipschitz-alpha} then gives
\[
|\tau_{\alpha_{k+1}}(q)-\tau_{\alpha_k}(q)|
\le 2(\log d)B_k^{\alpha}
\]
for every $q$.  Combining this with \eqref{eq:actual-rank-stability-condition} and the tail residual $\Delta_{k+1}^{\tau}$ shows that the actual tail of $W_{k+1}$ is also at most $\eps$ at rank $r$ and greater than $\eps$ at rank $r-1$.  Hence $R_\eps(W_{k+1})=r$.
\end{proof}

\begin{proposition}[Single-layer sandwich from tail residuals]
\label{prop:actual-rank-sandwich}
If a layer $W$ charted by $\alpha$ has tail residual $\Delta^\tau$ and $0<\Delta^\tau<\eps<1-\Delta^\tau$, then
\begin{equation}\label{eq:actual-rank-sandwich}
R_{\eps+\Delta^\tau}(\alpha)
\le
R_\eps(W)
\le
R_{\eps-\Delta^\tau}(\alpha).
\end{equation}
\end{proposition}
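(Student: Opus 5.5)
The plan is to unwind both truncation ranks into threshold conditions on tail functions and compare them through the uniform bound $\Delta^\tau$. Write $T_W(r):=\mu_W(\{r+1,\dots,d\})$, so that $R_\eps(W)=\min\{r:T_W(r)\le\eps\}$ by Definition~\ref{def:effective-rank}, and $R_\eps(\alpha)=\min\{r:\tau_\alpha(r)\le\eps\}$ as in Proposition~\ref{prop:rank-monotone-alpha}. By Definition~\ref{def:actual-tail-residual}, $|T_W(r)-\tau_\alpha(r)|\le\Delta^\tau$ for every $0\le r\le d$. The hypothesis $0<\Delta^\tau<\eps<1-\Delta^\tau$ guarantees that both shifted thresholds $\eps\pm\Delta^\tau$ lie in $(0,1)$, so $R_{\eps\pm\Delta^\tau}(\alpha)$ are well defined. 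Since $\tau_\alpha(d)=0$, each of the three minima is taken over a nonempty set of ranks.

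For the upper bound, set $r^+:=R_{\eps-\Delta^\tau}(\alpha)$. Then $\tau_\alpha(r^+)\le\eps-\Delta^\tau$, and therefore
\[
T_W(r^+)\le\tau_\alpha(r^+)+\Delta^\tau\le\eps .
\]
So $r^+$ is admissible in the minimum defining $R_\eps(W)$, which gives $R_\eps(W)\le r^+$.

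For the lower bound, set $r:=R_\eps(W)$, so that $T_W(r)\le\eps$. Then
\[
\tau_\alpha(r)\le T_W(r)+\Delta^\tau\le\eps+\Delta^\tau .
\]
So $r$ is admissible in the minimum defining $R_{\eps+\Delta^\tau}(\alpha)$, and hence $R_{\eps+\Delta^\tau}(\alpha)\le r$.

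The argument uses only the minimality of each rank, so no monotonicity of the tails in $r$ and no structure of the power-law family is needed. The one place to be careful is the threshold bookkeeping: each minimum must be over a nonempty set, and the shifted thresholds must stay in $(0,1)$. Both are handled by the assumption $\Delta^\tau<\eps<1-\Delta^\tau$ together with $\tau_\alpha(d)=T_W(d)=0$. I do not expect any genuine obstacle beyond this.
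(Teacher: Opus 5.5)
Your proof is correct and follows the paper's argument. The upper bound is identical. Your lower bound, which shows that $R_\eps(W)$ is admissible at threshold $\eps+\Delta^\tau$, is the direct form of the paper's contrapositive step: there, every $q<R_{\eps+\Delta^\tau}(\alpha)$ is shown to have actual tail exceeding $\eps$.
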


\begin{proof}
If $r=R_{\eps-\Delta^\tau}(\alpha)$, then $\tau_\alpha(r)\le\eps-\Delta^\tau$, so the actual tail of $W$ at $r$ is at most $\eps$; hence $R_\eps(W)\le r$.  If $r=R_{\eps+\Delta^\tau}(\alpha)$, then every $q<r$ has $\tau_\alpha(q)>\eps+\Delta^\tau$, so the actual tail of $W$ at $q$ is greater than $\eps$; hence $R_\eps(W)\ge r$.
\end{proof}


\subsection{Effective-rank transitions}

\begin{definition}[Effective-rank plateaux]
\label{def:rank-plateaux}
Fix $0<\eps<1$.  For $r=1,\dots,d$, define
\begin{equation}\label{eq:rank-plateau}
\mathcal I_r^{\eps}:=\bigl\{\alpha>0:\ \tau_\alpha(r)\le\eps<\tau_\alpha(r-1)\bigr\},
\end{equation}
where $\tau_\alpha(0)=1$ and $\tau_\alpha(d)=0$.
\end{definition}

\begin{theorem}[Localization of effective-rank transitions]
\label{thm:rank-transition-localization}
Fix $0<\eps<1$.
\begin{enumerate}[label=(R\arabic*),leftmargin=2.1em]
\item For every $r=1,\dots,d$ and every $\alpha>0$,
\[
R_\eps(\alpha)=r
\quad\Longleftrightarrow\quad
\alpha\in\mathcal I_r^{\eps}.
\]
Each $\mathcal I_r^{\eps}$ is an interval, possibly empty.
\item Let $(\alpha_k)_{k=0}^{L-1}$ be any finite sequence, and let
\[
\mathcal T_\eps:=\{k:R_\eps(\alpha_{k+1})\ne R_\eps(\alpha_k)\}.
\]
If $k\in\mathcal T_\eps$, then
\begin{equation}\label{eq:transition-local-lower}
2(\log d)|\alpha_{k+1}-\alpha_k|
\ge
\mathfrak m_\eps(\alpha_k).
\end{equation}
Consequently, if there exists $\mu>0$ such that $\mathfrak m_\eps(\alpha_k)\ge\mu$ for every $k\in\mathcal T_\eps$, then
\begin{equation}\label{eq:transition-count-TV}
|\mathcal T_\eps|
\le
\frac{2\log d}{\mu}
\sum_{k=0}^{L-2}|\alpha_{k+1}-\alpha_k|.
\end{equation}
\item Under the hypotheses of Theorem~\ref{thm:cartan-rigidity} on the interval $I$, if the same lower margin bound $\mu$ holds on $\mathcal T_\eps$, then
\begin{equation}\label{eq:transition-count-budget}
|\mathcal T_\eps|
\le
\frac{2\log d}{\mu\,m_d(I)}
\left(
2\sum_{k=0}^{L-2}b_k
+\TV(r)
\right).
\end{equation}
If only $|r_k|\le\bar\delta$ is known, one may replace $\TV(r)$ by $2(L-1)\bar\delta$.
\end{enumerate}
\end{theorem}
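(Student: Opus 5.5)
The plan is to treat the three parts in order. (R1) follows from the definition of $R_\eps$ and the monotonicity in Proposition~\ref{prop:rank-monotone-alpha}. (R2) is the contrapositive of Proposition~\ref{prop:rank-window-stability}, followed by a counting argument. (R3) substitutes the total-variation estimate of Theorem~\ref{thm:cartan-rigidity}(C3).

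For (R1), fix $\alpha$ and note that $r\mapsto\tau_\alpha(r)$ is nonincreasing in $r$, with $\tau_\alpha(0)=1>\eps$ and $\tau_\alpha(d)=0\le\eps$. Hence the minimal $r$ with $\tau_\alpha(r)\le\eps$ is exactly the unique $r$ with $\tau_\alpha(r)\le\eps<\tau_\alpha(r-1)$. This gives $R_\eps(\alpha)=r\iff\alpha\in\mathcal I_r^\eps$. For the interval property I will use Proposition~\ref{prop:rank-monotone-alpha}, which says $\alpha\mapsto\tau_\alpha(q)$ is nonincreasing for every fixed $q$.
\begin{itemize}
\item The set $\{\alpha:\tau_\alpha(r)\le\eps\}$ is therefore upward closed in $\alpha$.
\item The set $\{\alpha:\tau_\alpha(r-1)>\eps\}$ is downward closed in $\alpha$; for $r=1$ it is all of $(0,\infty)$, since $\tau_\alpha(0)=1$.
\end{itemize}
The intersection of an up-set and a down-set of $(0,\infty)$ is an interval, possibly empty. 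Alternatively, the interval property follows directly from the fact that $R_\eps(\alpha)$ is nonincreasing in $\alpha$, so each of its level sets is convex.

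For (R2), I first observe that $\mathfrak m_\eps(\alpha_k)\ge0$ always, because $\tau_{\alpha_k}(r)\le\eps<\tau_{\alpha_k}(r-1)$ at $r=R_\eps(\alpha_k)$. I then argue by contrapositive.
\begin{itemize}
\item If $2(\log d)|\alpha_{k+1}-\alpha_k|<\mathfrak m_\eps(\alpha_k)$, then Proposition~\ref{prop:rank-window-stability}, applied with $\alpha=\alpha_k$ and $\beta=\alpha_{k+1}$, gives $R_\eps(\alpha_{k+1})=R_\eps(\alpha_k)$. So $k\notin\mathcal T_\eps$, which proves \eqref{eq:transition-local-lower}.
\item In the case $r=1$, the lower side of the margin uses the convention $\tau(0)=1$. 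Lemma~\ref{lem:tail-lipschitz-alpha} covers this endpoint trivially, so the stability argument goes through unchanged.
\end{itemize}
For the counting bound \eqref{eq:transition-count-TV}, I sum the local lower bound over $k\in\mathcal T_\eps$:
\[
\mu\,|\mathcal T_\eps|\le\sum_{k\in\mathcal T_\eps}\mathfrak m_\eps(\alpha_k)\le2\log d\sum_{k\in\mathcal T_\eps}|\alpha_{k+1}-\alpha_k|\le 2\log d\sum_{k=0}^{L-2}|\alpha_{k+1}-\alpha_k|,
\]
and then divide by $\mu$.

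For (R3), I insert the total-variation bound \eqref{eq:TV-slack-signed} from Theorem~\ref{thm:cartan-rigidity}(C3) into \eqref{eq:transition-count-TV}. If only $|r_k|\le\bar\delta$ is known, I use $\TV(r)\le2(L-1)\bar\delta$ instead, as in \eqref{eq:TV-slack-crude-delta}.

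I expect no serious obstacle. The only delicate points are bookkeeping ones: the endpoint conventions $\tau_\alpha(0)=1$ and $\tau_\alpha(d)=0$ in (R1) and in the $r=1$ case of the margin, and the fact that margins which are merely nonnegative (possibly zero) make the local inequality trivial rather than invalid.
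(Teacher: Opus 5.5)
Your proposal is correct and follows the paper's proof essentially step for step. (R1) comes from the defining inequalities of the minimal rank together with the $\alpha$-monotonicity of tails in Proposition~\ref{prop:rank-monotone-alpha}; (R2) is the contrapositive of Proposition~\ref{prop:rank-window-stability} followed by summing over transitions; (R3) inserts the (C3) total-variation bound. Your explicit use of monotonicity of $r\mapsto\tau_\alpha(r)$ for the reverse implication in (R1), and your remarks on the $r=1$ endpoint and on zero margins, are bookkeeping points the paper leaves implicit.
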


\begin{proof}
For (R1), recall that $R_\eps(\alpha)$ is the smallest rank $r$ satisfying $\tau_\alpha(r)\le\eps$.  Therefore $R_\eps(\alpha)=r$ exactly when $\tau_\alpha(r)\le\eps$ and $\tau_\alpha(r-1)>\eps$, which is exactly \eqref{eq:rank-plateau}.  Proposition~\ref{prop:rank-monotone-alpha} shows that each tail $\tau_\alpha(s)$ is nonincreasing in $\alpha$.  Hence $\{\alpha:\tau_\alpha(r)\le\eps\}$ is an upper interval and $\{\alpha:\tau_\alpha(r-1)>\eps\}$ is a lower interval.  Their intersection is an interval, possibly empty.

For (R2), fix $k\in\mathcal T_\eps$.  If \eqref{eq:transition-local-lower} failed, then
\[
2(\log d)|\alpha_{k+1}-\alpha_k|<\mathfrak m_\eps(\alpha_k).
\]
Proposition~\ref{prop:rank-window-stability}, applied with $\alpha=\alpha_k$ and $\beta=\alpha_{k+1}$, would imply $R_\eps(\alpha_{k+1})=R_\eps(\alpha_k)$, contradicting the definition of $\mathcal T_\eps$.  Thus \eqref{eq:transition-local-lower} holds.  If $\mathfrak m_\eps(\alpha_k)\ge\mu$ on $\mathcal T_\eps$, then each transition satisfies
\[
|\alpha_{k+1}-\alpha_k|\ge \frac{\mu}{2\log d}.
\]
Summing this over $k\in\mathcal T_\eps$ gives
\[
|\mathcal T_\eps|\frac{\mu}{2\log d}
\le
\sum_{k\in\mathcal T_\eps}|\alpha_{k+1}-\alpha_k|
\le
\sum_{k=0}^{L-2}|\alpha_{k+1}-\alpha_k|,
\]
which proves \eqref{eq:transition-count-TV}.

For (R3), substitute the total-variation estimate from Theorem~\ref{thm:cartan-rigidity}(C3) into \eqref{eq:transition-count-TV}.
\end{proof}

\begin{remark}
This theorem separates two effects.  The effective rank can change only when the exponent coordinate crosses a spectral-tail threshold, and the number of such crossings is bounded by the same Cartan total-variation budget that controls the exponent path.
\end{remark}

\section{Empirical measurements for Cartan-coordinate rigidity}
\label{sec:empirical}

The spectral theorems predict short trajectories for the fitted Cartan power-law coordinate when local interface margins and orbit-chart residual variation are controlled.  In trained models, the finite-dimensional quantity measured directly in this version is the fitted power-law exponent of static layerwise weight operators.  This section reports that coordinate measurement across residual CNNs, LLMs, and vision/diffusion backbones.  These plots are not by themselves a verification of Theorem~\ref{thm:cartan-rigidity}, because the theorem applies to a specified operator chain and its adjacent products.  A direct Jacobian verification would compute or approximate block Jacobians $J_k(x)=I+DF_k(x)$ on data and then report the same margin quantities.  The complete finite-dimensional margin test also requires the interface margins, slacks, and chart residuals listed in Table~\ref{tab:cartan-measurement-protocol}.

\subsection{Measurement protocol}
\label{sec:exp1-protocol}

For each selected layerwise matrix $W_k$, we compute singular values and fit a power-law profile
\[
\sigma_i(W_k)\approx C_k i^{-\alpha_k}
\]
on a prescribed rank window.  Exact SVD is used when dimensions permit; for larger matrices the same singular-value quantities can be estimated with standard truncated or randomized SVD methods \cite{halko2011finding}.  The fitted exponent $\widehat\alpha_k$ is the empirical coordinate corresponding to the Cartan power-law orbit $G_d(\alpha)$.  We record the depthwise sequence
\[
(\widehat\alpha_0,\widehat\alpha_1,\dots,\widehat\alpha_{L-1})
\]
and compare it with the theorem-predicted short-trajectory behavior.  The most direct empirical summary is the total variation
\[
\mathrm{TV}(\widehat\alpha):=\sum_{k=0}^{L-2}|\widehat\alpha_{k+1}-\widehat\alpha_k|.
\]
Theorem~\ref{thm:cartan-rigidity} gives the deterministic bound for the corresponding exact or charted coordinates once the local margins, non-backtracking slacks, and chart residual increments are measured.  The plots below measure the visible coordinate trajectory; a complete finite-dimensional margin test additionally reports $\widehat\Lambda_k$, $\widehat\eta_k^{\mathrm{nb}}$, the combined margin $\widehat b_k$, the signed radial residuals $\widehat r_k$, normalized Bures/Hellinger errors $\widehat{\bar e}_k^{\BW}$, and the local margin ratio.

\begin{table}[H]
\centering
\small
\renewcommand{\arraystretch}{1.22}
\setlength{\tabcolsep}{3.0pt}
\begin{tabularx}{\textwidth}{L{0.24\textwidth}L{0.34\textwidth}Y}
\toprule
\textbf{Measured quantity} & \textbf{How it is computed} & \textbf{Meaning in the theory}\\
\midrule
Fitted Cartan coordinate & Fit $\sigma_i(W_k)\approx C_ki^{-\alpha_k}$ and record $\widehat\alpha_k$. & Empirical coordinate on the Cartan power-law orbit $G_d(\alpha)$.\\
Coordinate increment & $|\widehat\alpha_{k+1}-\widehat\alpha_k|$ between adjacent layers. & Measured version of local coordinate displacement in Theorem~\ref{thm:cartan-rigidity}.\\
Total variation & $\sum_k|\widehat\alpha_{k+1}-\widehat\alpha_k|$. & Measured version of the short fitted coordinate path predicted by the theorem.\\
Interface radial amplitude & $\widehat\Lambda_k=\|W_{k+1}W_k\|_2/\sqrt{\|W_{k+1}\|_2\|W_k\|_2}$ when adjacent products are dimension-compatible. & Geometric-mean radial amplitude in Definition~\ref{def:lambda}.\\
Non-backtracking slack & Compute $\widehat\eta_k^{\mathrm{nb}}$ from Definition~\ref{def:nonbacktracking-slack} using measured operator norms. & Measured defect from the zero-slack non-backtracking case.\\
Combined local margin & $\widehat b_k=\log\widehat\lambda_k+\widehat\eta_k^{\mathrm{nb}}$. & Scalar transport margin in Theorem~\ref{thm:cartan-rigidity}.\\
Radial residual increment & $|\widehat r_{k+1}-\widehat r_k|$ with $\widehat r_k=\rho_d(W_k^\top W_k)-g_d(\widehat\alpha_k)$. & Signed chart-residual variation in the main theorem.\\
Local margin ratio & $m_d(I)|\widehat\alpha_{k+1}-\widehat\alpha_k|/(2\widehat b_k+|\widehat r_{k+1}-\widehat r_k|)$. & Direct check of the deterministic local inequality when the denominator is nonzero.\\
Full chart error & $\widehat e_k^{\BW}=d_{\BW}(\Cart(W_k^\top W_k),G_d(\widehat\alpha_k))$ or normalized $\widehat{\bar e}_k^{\BW}=\widehat e_k^{\BW}/\sqrt d$. & Empirical gap between the actual Cartan spectrum and the fitted orbit path.\\
Effective-rank window & $R_\eps(W_k)$ computed from the spectral energy measure. & Dominant spectral window whose transitions are controlled by the Cartan total-variation bound.\\
\bottomrule
\end{tabularx}
\caption{Finite-dimensional spectral measurements associated with the slack-aware Cartan-coordinate theorem.  The figures display the fitted coordinate trajectory; complete verification also requires checking the interface amplitude, combined margin, non-backtracking slack, and chart-error margins.}
\label{tab:cartan-measurement-protocol}
\end{table}

\subsection{Empirical measurement I: short Cartan-coordinate trajectories}
\label{sec:exp1}

The fitted exponent $\widehat\alpha_k$ is a low-dimensional summary of how the singular-value energy of a layer is distributed across rank.  Smoothness of the sequence $\widehat\alpha_k$ indicates that representation complexity changes gradually across depth rather than undergoing abrupt spectral shocks between adjacent layers.  This is a static-weight diagnostic for the fitted Cartan-coordinate path in Theorem~\ref{thm:cartan-rigidity}.  It is not a direct theorem verification unless the plotted matrices are the same operators used in the chain and the adjacent products are dimension-compatible.  The theorem's actual margin variables are the combined interface margin, slack, and chart residuals; the coordinate plots should therefore be read as the first measurement in that margin pipeline.

\begin{figure}[H]
\centering
\begin{subfigure}{0.32\textwidth}
\centering
\includegraphics[width=\textwidth]{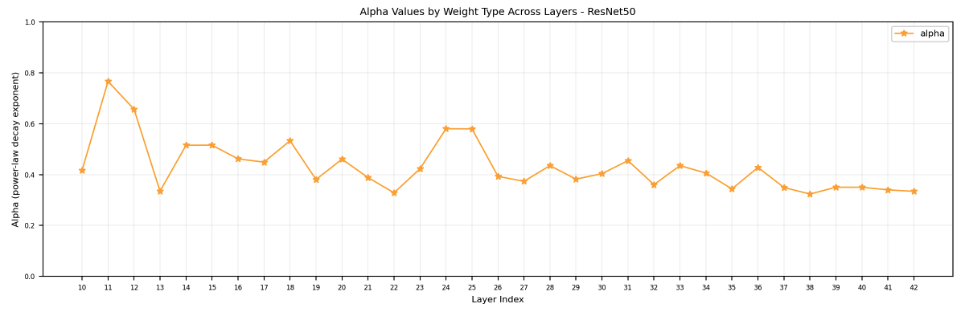}
\caption{ResNet50}
\end{subfigure}\hfill
\begin{subfigure}{0.32\textwidth}
\centering
\includegraphics[width=\textwidth]{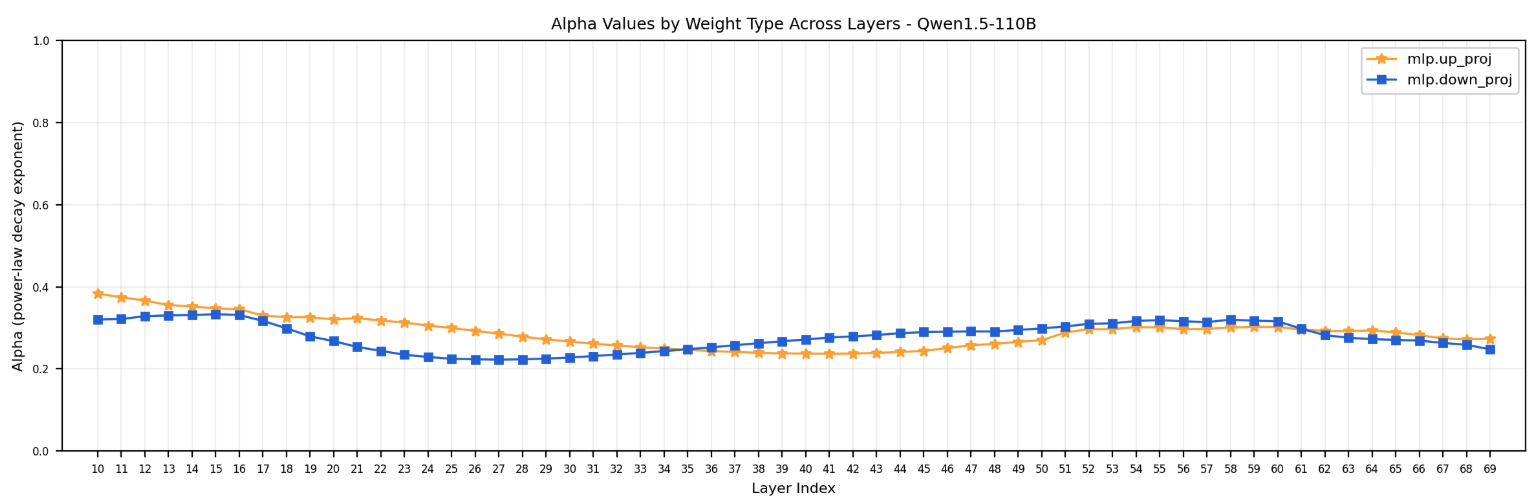}
\caption{Qwen1.5-110B}
\end{subfigure}\hfill
\begin{subfigure}{0.32\textwidth}
\centering
\includegraphics[width=\textwidth]{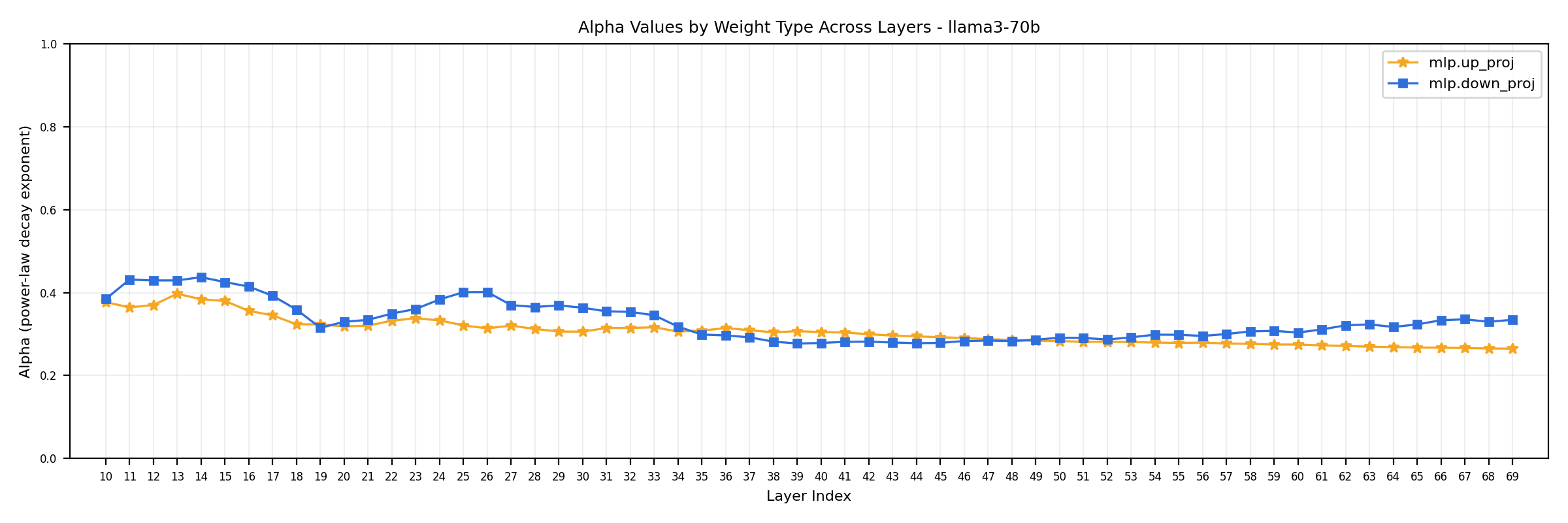}
\caption{Llama3-70B}
\end{subfigure}

\par\medskip

\begin{subfigure}{0.32\textwidth}
\centering
\includegraphics[width=\textwidth]{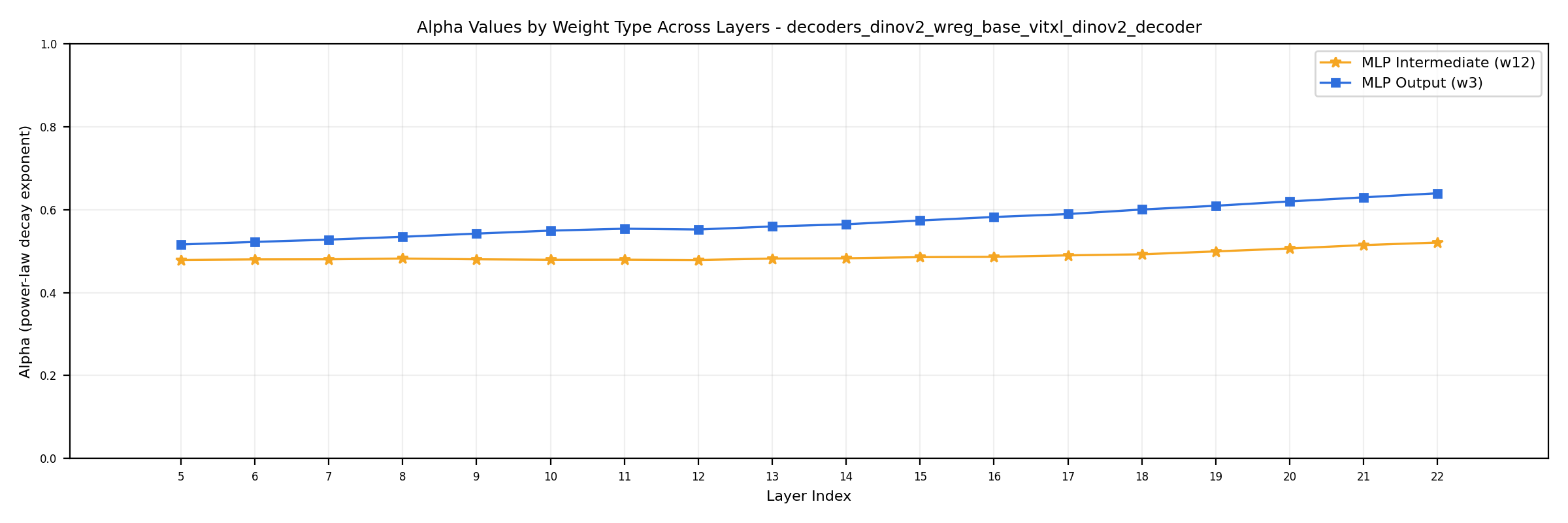}
\caption{DINOv2 decoder}
\end{subfigure}\hspace{0.02\textwidth}
\begin{subfigure}{0.32\textwidth}
\centering
\includegraphics[width=\textwidth]{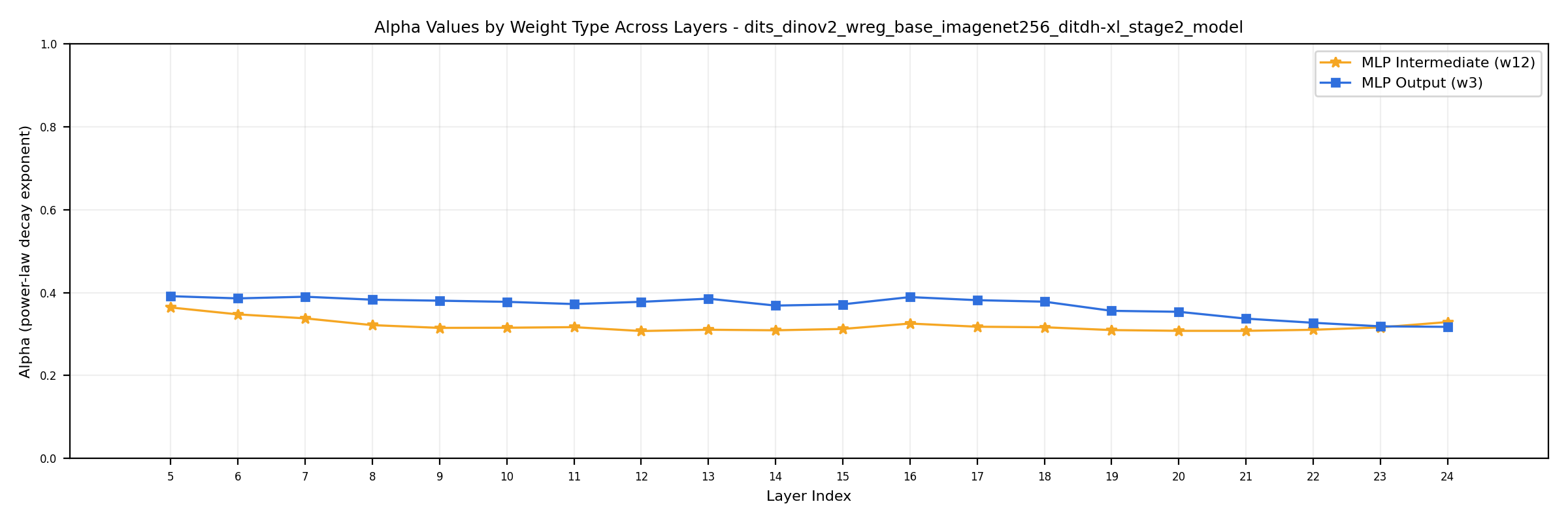}
\caption{DiT-DH-XL on ImageNet256}
\end{subfigure}

\caption{\textbf{Spectral-coordinate measurements across model families.}
Each panel plots the fitted power-law exponent $\widehat\alpha_k$ as a function of layer index.  Residual CNNs, large language models, and vision/diffusion backbones all display structured depthwise trajectories rather than layerwise random oscillation.  These plots measure the fitted coordinate appearing in Theorem~\ref{thm:cartan-rigidity}; a stronger finite-dimensional margin check additionally reports interface amplitudes, combined local margins, non-backtracking slacks, signed radial residuals, and full chart errors.}
\label{fig:alpha-all}
\end{figure}

\paragraph{Meaning of the measurement.}
The plots display the empirical fitted coordinate to which Theorem~\ref{thm:cartan-rigidity} applies.  Under the stated budget, slack, and chart-residual hypotheses, the theorem bounds the variation of this coordinate.  The observed smooth trajectories are consistent with that finite-dimensional behavior; full verification requires measuring $\widehat\Lambda_k$, $\widehat\eta_k^{\mathrm{nb}}$, the combined margin $\widehat b_k$, $|\widehat r_{k+1}-\widehat r_k|$, and a full-spectrum chart error such as $\widehat e_k^{\BW}$ or its normalized version $\widehat{\bar e}_k^{\BW}$.

\section{Conclusion}
\label{sec:conclusion}
This paper establishes slack-aware quotient-radial and spectral-measure rigidity estimates for GSA.  Residual Jacobian chains are modeled as finite cocycles in $\GL(d)$, their singular spectra are represented on the quotient state space $\SPD(d)$, and normalized power-law spectra form a one-dimensional Cartan orbit.  The main result is a deterministic margin theorem: local geometric-mean radial amplitudes, measurable non-backtracking slacks, and signed orbit-chart residual variation imply short displacement of the fitted exponent coordinate.  Full Cartan-spectral shortness is obtained only after adding explicit full-spectrum chart-residual variation.  The spectral energy measure gives a parallel finite-dimensional theory of effective-rank windows.

These results provide the spectral inputs for the companion article, which studies angular transport and static channel incidence.  Together, the two papers form a single GSA theory: this article controls the fitted spectral coordinate, the stability of dominant rank windows, and the chart errors needed to compare fitted and actual spectra; the companion article studies how the dominant singular directions inside those windows are transported and how their static channel structure can be extracted.

\subsection{Relation to strict balancedness in deep linear networks}
\label{sec:strict-balanced}

Strict balancedness in deep linear networks is the algebraic condition studied in the deep-linear optimization literature~\cite{arora2018optimization,arora2019implicitmf,menonyu2025entropy}:
\[
W_k^\top W_k=W_{k+1}W_{k+1}^\top .
\]
It fixes the singular spectrum exactly and, after a gauge choice, enforces rigid transport.  The spectral GSA results in this paper give a metric relaxation of this algebraic equality: exact spectral equality is replaced by quantitative bounds on fitted Cartan-coordinate drift under explicit interface budgets, slack terms, and chart residuals.  The angular and static-channel relaxations are developed separately in the companion article, titled by Geometric Rigidity of Residual Jacobian Chains II.

\begin{proposition}[Strict balancedness is a zero-thickness special case]
\label{prop:strict-balanced-special-case}
Assume strict balancedness and exact power-law spectra
\[
\sigma_i(W_k)=C_k i^{-\alpha_k}.
\]
Then $C_{k+1}=C_k$ and $\alpha_{k+1}=\alpha_k$ for every adjacent pair.
Thus strict balancedness is a zero-drift, zero-thickness corner of the spectral GSA regime studied here.
\end{proposition}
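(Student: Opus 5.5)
The plan is to compare the ordered spectra of the two sides of the balancedness identity directly; the interface budget and the rigidity theorem are not needed. For any square matrix $A$, the matrices $A^\top A$ and $AA^\top$ have the same eigenvalues, namely the squared singular values $\sigma_i(A)^2$. Applying this with $A=W_k$ on the left and $A=W_{k+1}$ on the right of $W_k^\top W_k=W_{k+1}W_{k+1}^\top$ shows that the two positive semidefinite forms have ordered eigenvalues $\sigma_i(W_k)^2$ and $\sigma_i(W_{k+1})^2$, respectively. Since they are the same matrix, their ordered spectra agree. In the language of Definition~\ref{def:quotient-cartan}, the Cartan representatives of $\pi(W_k)$ and of $W_{k+1}W_{k+1}^\top$ coincide; this is the only step where balancedness enters. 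Hence
\[
\sigma_i(W_k)=\sigma_i(W_{k+1}),\qquad i=1,\dots,d .
\]

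Next I substitute the exact power-law ansatz into this identity, which gives $C_k i^{-\alpha_k}=C_{k+1}i^{-\alpha_{k+1}}$ for every $i$. Evaluating at $i=1$ yields $C_k=C_{k+1}$. Dividing the $i=2$ equation by this common positive constant gives $2^{-\alpha_k}=2^{-\alpha_{k+1}}$, and injectivity of $t\mapsto 2^{-t}$ yields $\alpha_k=\alpha_{k+1}$. Induction over adjacent pairs then propagates both equalities along the whole chain.

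The only point needing care, and what I expect to be the main (mild) obstacle, is identifiability. The step $i=2$ requires $d\ge2$, which is the standing assumption for all nontrivial statements of Section~\ref{sec:spectral-orbit}. For $d=1$ the exponent is not determined by the spectrum, as already noted there, so I would state the $d\ge2$ requirement explicitly.

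Frobenius normalization is not used in the argument. If it is imposed, Proposition~\ref{prop:cartan-membership} additionally gives $\Cart(P_k)=G_d(\alpha_k)=G_d(\alpha_{k+1})=\Cart(P_{k+1})$. For the zero-thickness interpretation I would then note three consequences. First, $r_k=0$ by exact orbit membership, as in Corollary~\ref{cor:gsa-spectral-domain}. Second, $e_k^{\BW}=0$ for the same reason. Third, every displacement quantity in Theorems~\ref{thm:cartan-rigidity} and~\ref{thm:fitted-actual-cartan-path} vanishes identically, so this configuration is the zero-drift corner of the regime studied in the paper.
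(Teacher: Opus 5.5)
Your proposal is correct and is essentially the paper's proof: equality of $W_k^\top W_k$ and $W_{k+1}W_{k+1}^\top$ forces equal ordered singular values, then $i=1$ gives $C_{k+1}=C_k$ and $i=2$ gives $\alpha_{k+1}=\alpha_k$. Your explicit note that the $i=2$ step needs $d\ge2$ is a worthwhile addition that the paper leaves implicit.
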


\begin{proof}
Strict balancedness implies equality of the positive semidefinite matrices $W_k^\top W_k$ and $W_{k+1}W_{k+1}^\top$.  Their eigenvalues are the squared singular values of $W_k$ and $W_{k+1}$, respectively.  Hence the singular-value multisets coincide.  Under the ordered exact power-law representation, equality at $i=1$ gives $C_{k+1}=C_k$.  With this equality of constants, equality at $i=2$ gives $2^{-\alpha_{k+1}}=2^{-\alpha_k}$, hence $\alpha_{k+1}=\alpha_k$.
\end{proof}

\printbibliography
\newpage
\appendix

\section{Large-width asymptotics and the zeta constants}
\label{sec:width_zeta_discussion}

All main results in the paper are finite-width and stated in terms of generalized harmonic numbers.
This appendix records the standard large-width bridge to zeta-function notation.

\begin{proposition}[Tail bounds for generalized harmonic numbers]
\label{prop:harm-tail}
Let $s>1$ and $d\ge 1$. Then
\[
0\le \zeta(s)-\Harm_{d,s}\le \frac{d^{1-s}}{s-1}.
\]
\end{proposition}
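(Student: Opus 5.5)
The plan is to write the gap $\zeta(s)-\Harm_{d,s}$ as the convergent tail $\sum_{i=d+1}^\infty i^{-s}$ and bound it with the integral test already proved in Lemma~\ref{lem:integral-test}.

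\textbf{Lower bound.} Since $s>1$, the series $\zeta(s)=\sum_{i\ge1}i^{-s}$ converges absolutely. Hence
\[
\zeta(s)-\Harm_{d,s}=\sum_{i=d+1}^{\infty}i^{-s}.
\]
Every term is positive, so this quantity is nonnegative. This gives the left inequality.

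\textbf{Upper bound.} For any finite $N>d$, apply the upper inequality \eqref{eq:tail-upper} of Lemma~\ref{lem:integral-test} with $r=d$ and width $N$ in place of $d$. The lemma requires $1\le r<N$, which holds because $d\ge1$. It gives
\[
\sum_{i=d+1}^{N}i^{-s}\le\int_d^\infty x^{-s}\,dx=\frac{d^{1-s}}{s-1}.
\]
The right-hand side does not depend on $N$. The partial sums on the left increase to $\zeta(s)-\Harm_{d,s}$ as $N\to\infty$, so the bound passes to the limit. This gives the right inequality.

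\textbf{Where care is needed.} The only delicate point is the passage to the limit, and monotone convergence of positive partial sums handles it. An equivalent route avoids the lemma's finite width altogether: use the pointwise bound $i^{-s}\le\int_{i-1}^{i}x^{-s}\,dx$, valid because $x^{-s}$ is decreasing, and sum it over all $i\ge d+1$ directly.
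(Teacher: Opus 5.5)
Your proof is correct and is essentially the paper's argument. The paper bounds the tail $\sum_{n\ge d+1}n^{-s}$ by summing the pointwise comparison $n^{-s}\le\int_{n-1}^{n}x^{-s}\,dx$ over all $n\ge d+1$, which is your ``equivalent route''; your main route reuses Lemma~\ref{lem:integral-test} at finite width $N$ and passes to the limit by monotonicity, which is the same integral comparison packaged slightly differently.
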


\begin{proof}
For $s>1$, the function $f(x)=x^{-s}$ is positive and decreasing on $[1,\infty)$.  For each integer $n\ge d+1$, monotonicity gives
\[
\int_n^{n+1} f(x)\,dx\le f(n)\le \int_{n-1}^{n}f(x)\,dx.
\]
Summing the left inequality over $n=d+1,d+2,\dots$ gives
\[
\int_{d+1}^{\infty}x^{-s}\,dx\le \sum_{n=d+1}^{\infty}n^{-s},
\]
and summing the right inequality gives
\[
\sum_{n=d+1}^{\infty}n^{-s}\le \int_d^{\infty}x^{-s}\,dx.
\]
Since
\[
\int_d^{\infty}x^{-s}\,dx=\left[\frac{x^{1-s}}{1-s}\right]_{d}^{\infty}=\frac{d^{1-s}}{s-1},
\]
we obtain
\[
0\le \zeta(s)-\Harm_{d,s}=\sum_{n=d+1}^{\infty}n^{-s}\le \frac{d^{1-s}}{s-1}.
\]
\end{proof}

\begin{lemma}[Termwise differentiation of the Dirichlet series]
\label{lem:zeta-derivative}
Fix $s_0>1$.
Then $\sum_{n=1}^\infty n^{-s}$ and $\sum_{n=1}^\infty (\log n)n^{-s}$ converge uniformly on $[s_0,\infty)$, and
\[
\zeta'(s)=-\sum_{n=1}^\infty (\log n)\,n^{-s},
\qquad s>1.
\]
\end{lemma}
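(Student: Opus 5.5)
The plan is to apply the Weierstrass M-test to both series on $[s_0,\infty)$ and then invoke the standard theorem on termwise differentiation of a series of $C^1$ functions, which requires uniform convergence of the differentiated series plus convergence of the original series at one point.

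\textbf{Uniform convergence.} For $s\ge s_0$ and $n\ge1$ we have $0<n^{-s}\le n^{-s_0}$, and $\sum_n n^{-s_0}=\zeta(s_0)<\infty$. The M-test then gives uniform convergence of $\sum_n n^{-s}$ on $[s_0,\infty)$; Proposition~\ref{prop:harm-tail} even supplies the explicit tail bound $\zeta(s)-\Harm_{d,s}\le d^{1-s}/(s-1)\le d^{1-s_0}/(s_0-1)$.

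For the logarithmic series, fix $s_1:=(1+s_0)/2\in(1,s_0)$ and set $\epsilon:=s_0-s_1>0$. Since $\log n\le n^{\epsilon}/\epsilon$ for $n\ge1$ (by the elementary inequality $\log x\le x$ applied to $x=n^\epsilon$), we get
\[
0\le(\log n)n^{-s}\le \epsilon^{-1}n^{-s_1},\qquad s\ge s_0 .
\]
Because $s_1>1$, the majorant is summable. Alternatively, Lemma~\ref{lem:integral-test}-type comparison can be applied to the eventually decreasing function $x\mapsto(\log x)x^{-s_0}$. Either way, the M-test yields uniform convergence on $[s_0,\infty)$.

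\textbf{Termwise differentiation.} Each term $f_n(s)=n^{-s}=e^{-s\log n}$ is $C^1$ with derivative $f_n'(s)=-(\log n)n^{-s}$. On $[s_0,\infty)$ the series $\sum f_n$ converges everywhere and $\sum f_n'$ converges uniformly. The standard theorem, proved by integrating $\sum f_n'$ termwise on $[s_0,s]$ and applying the fundamental theorem of calculus, shows that $\zeta$ is $C^1$ there with $\zeta'(s)=-\sum_n(\log n)n^{-s}$.

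Since every $s>1$ satisfies $s\ge s_0$ for some $s_0>1$ (for instance $s_0=(1+s)/2$), and differentiability is local, the identity holds for all $s>1$. The one-sided derivative at the endpoint $s_0$ causes no trouble, because we may always take $s_0$ strictly smaller than the point of interest.

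The only step needing care is the majorant for $(\log n)n^{-s}$, since the logarithm prevents direct comparison with $n^{-s_0}$. Trading a small amount $\epsilon$ of the exponent to absorb the logarithm resolves it. Everything else is routine application of standard real-analysis theorems.
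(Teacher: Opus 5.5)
Your proof is correct and follows the paper's argument: the M-test with majorant $n^{-s_0}$, absorbing the logarithm by trading some exponent down to an $s_1\in(1,s_0)$, then the standard termwise-differentiation theorem, and finally letting $s_0$ range over all values greater than $1$. The only cosmetic difference is that you use the global bound $\log n\le \epsilon^{-1}n^{\epsilon}$, which has a constant, whereas the paper uses $\log n\le n^{s_0-s_1}$ only for sufficiently large $n$; both give a summable majorant $n^{-s_1}$ up to a constant.
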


\begin{proof}
Fix $s_0>1$.  For every $s\ge s_0$ and every $n\ge1$,
\[
n^{-s}\le n^{-s_0}.
\]
Since $\sum_{n=1}^{\infty}n^{-s_0}<\infty$, the Weierstrass $M$-test gives uniform convergence of $\sum_n n^{-s}$ on $[s_0,\infty)$.  The derivative of the $n$-th term is
\[
\frac{d}{ds}n^{-s}=-(\log n)n^{-s}.
\]
Choose $s_1$ with $1<s_1<s_0$.  For $s\ge s_0$ and all sufficiently large $n$, one has $\log n\le n^{s_0-s_1}$, and hence
\[
(\log n)n^{-s}\le (\log n)n^{-s_0}\le n^{-s_1}.
\]
The series $\sum_n n^{-s_1}$ converges, so another $M$-test gives uniform convergence of the derivative series on $[s_0,\infty)$.  The standard termwise differentiation theorem for series of $C^1$ functions then gives
\[
\zeta'(s)=\sum_{n=1}^{\infty}\frac{d}{ds}n^{-s}=-\sum_{n=1}^{\infty}(\log n)n^{-s},\qquad s>s_0.
\]
Because $s_0>1$ was arbitrary, the identity holds for every $s>1$.
\end{proof}

\begin{proposition}[Tail bounds for log-weighted sums]
\label{prop:log-tail}
Let $s>1$ and $d\ge 3$. Then
\[
0\le
\Bigl(-\zeta'(s)\Bigr)-\sum_{n=1}^{d} (\log n)n^{-s}
\le
d^{1-s}\left(\frac{\log d}{s-1}+\frac{1}{(s-1)^2}\right).
\]
\end{proposition}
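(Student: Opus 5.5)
By Lemma~\ref{lem:zeta-derivative}, $-\zeta'(s)=\sum_{n\ge1}(\log n)n^{-s}$ with all terms nonnegative. Subtracting the partial sum up to $d$ therefore leaves the tail
\[
T_d(s):=\sum_{n=d+1}^{\infty}(\log n)\,n^{-s}.
\]
This tail is a sum of nonnegative terms, which gives the lower bound $0\le T_d(s)$ at once. For the upper bound I will mimic the integral comparison used in Proposition~\ref{prop:harm-tail}, now with the weight $f(x)=(\log x)x^{-s}$ in place of $x^{-s}$.

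\textbf{Monotonicity of the weight.} Compute
\[
f'(x)=x^{-s-1}\bigl(1-s\log x\bigr).
\]
Hence $f$ is decreasing on $[e^{1/s},\infty)$. Since $s>1$, we have $e^{1/s}<e<3\le d$. So $f$ is positive and decreasing on $[d,\infty)$, and this is exactly where the hypothesis $d\ge3$ enters. Monotonicity gives, for each $n\ge d+1$,
\[
f(n)\le\int_{n-1}^{n}f(x)\,dx.
\]
Summing over $n\ge d+1$ yields
\[
T_d(s)\le\int_d^\infty(\log x)\,x^{-s}\,dx.
\]

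\textbf{Evaluating the integral.} I will integrate by parts with $u=\log x$ and $dv=x^{-s}dx$, so that $v=x^{1-s}/(1-s)$. The boundary term at infinity vanishes because $(\log x)x^{1-s}\to0$ when $s>1$. This gives
\[
\int_d^\infty(\log x)\,x^{-s}\,dx=\frac{d^{1-s}\log d}{s-1}+\frac{1}{s-1}\int_d^\infty x^{-s}\,dx=\frac{d^{1-s}\log d}{s-1}+\frac{d^{1-s}}{(s-1)^2},
\]
which is exactly the claimed bound.

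\textbf{Main obstacle.} The only delicate point is the monotonicity threshold. For small $n$, the weight $(\log n)n^{-s}$ need not be decreasing, which is why the statement requires $d\ge3$. One must check that $d\ge3>e^{1/s}$ puts the whole interval $[d,\infty)$ on the decreasing branch, so that the comparison $f(n)\le\int_{n-1}^n f$ holds for every $n\ge d+1$. The remaining steps, namely the improper-integral convergence and the vanishing boundary term, are routine consequences of $s>1$.
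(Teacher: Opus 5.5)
Your proof is correct and matches the paper's argument step for step. You identify the difference as the tail $\sum_{n>d}(\log n)n^{-s}$ via Lemma~\ref{lem:zeta-derivative}, use that $(\log x)x^{-s}$ is decreasing on $[3,\infty)$ (your threshold $e^{1/s}<3$ is the same observation as the paper's $1-s\log x<0$ for $x\ge 3$) to compare with $\int_d^\infty$, and evaluate the integral by parts.
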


\begin{proof}
Let $g(x)=(\log x)x^{-s}$.  Then
\[
g'(x)=x^{-s-1}(1-s\log x).
\]
For $x\ge3$ and $s>1$, $1-s\log x<0$, so $g$ is positive and decreasing on $[3,\infty)$.  Applying the same integral-test argument as in Proposition~\ref{prop:harm-tail}, for $d\ge3$ we obtain
\[
0\le \sum_{n=d+1}^{\infty}(\log n)n^{-s}\le \int_d^{\infty}(\log x)x^{-s}\,dx.
\]
An integration by parts gives
\[
\int_d^{\infty}(\log x)x^{-s}\,dx
=\left[\frac{x^{1-s}\log x}{1-s}\right]_{d}^{\infty}-\int_d^{\infty}\frac{x^{1-s}}{1-s}\frac{dx}{x}
=d^{1-s}\left(\frac{\log d}{s-1}+\frac{1}{(s-1)^2}\right).
\]
By Lemma~\ref{lem:zeta-derivative},
\[
-\zeta'(s)-\sum_{n=1}^{d}(\log n)n^{-s}=\sum_{n=d+1}^{\infty}(\log n)n^{-s}.
\]
Combining these identities proves the stated bound.
\end{proof}

\begin{corollary}[Zeta-form limit of the coordinate derivative]
\label{cor:zeta-limit}
Fix $\alpha>\frac12$ and set $s:=2\alpha$.
Then as $d\to\infty$,
\[
\Harm_{d,s}\to \zeta(s),
\qquad
\sum_{i=1}^d (\log i)\,i^{-s}\to -\zeta'(s),
\]
hence
\[
g_d'(\alpha)\to \frac{-\zeta'(2\alpha)}{\zeta(2\alpha)}.
\]
Moreover, the convergence rate is $O(d^{1-2\alpha}\log d)$.
\end{corollary}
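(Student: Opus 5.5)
The two limits follow from the appendix results, and the ratio limit follows from a quotient estimate. Set $s=2\alpha>1$. Proposition~\ref{prop:harm-tail} gives
\[
0\le\zeta(s)-\Harm_{d,s}\le \frac{d^{1-s}}{s-1},
\]
so $\Harm_{d,s}\to\zeta(s)$ at rate $O(d^{1-2\alpha})$. For $d\ge3$, Proposition~\ref{prop:log-tail} gives
\[
0\le -\zeta'(s)-N_d\le d^{1-s}\Bigl(\frac{\log d}{s-1}+\frac{1}{(s-1)^2}\Bigr),
\qquad N_d:=\sum_{i=1}^d(\log i)\,i^{-s}.
\]
Hence $N_d\to-\zeta'(s)$ at rate $O(d^{1-2\alpha}\log d)$. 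Lemma~\ref{lem:zeta-derivative} identifies the limit of the log-weighted series with $-\zeta'(s)$.

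For the ratio, recall from \eqref{eq:g-derivative} that $g_d'(\alpha)=N_d/\Harm_{d,s}$. Write $N=-\zeta'(s)$ and $Z=\zeta(s)$. Then
\[
\frac{N_d}{\Harm_{d,s}}-\frac{N}{Z}
=\frac{(N_d-N)Z+N(Z-\Harm_{d,s})}{Z\,\Harm_{d,s}}.
\]
The denominator is bounded below uniformly in $d$: since $\Harm_{d,s}\ge1$ (the $i=1$ term) and $Z\ge1$, we have $Z\,\Harm_{d,s}\ge1$. The numerator is bounded in absolute value by
\[
Z\,|N_d-N|+N\,|Z-\Harm_{d,s}|.
\]
Both terms are $O(d^{1-s}\log d)$, since $Z$ and $N$ are finite constants depending only on $\alpha$. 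Therefore
\[
\bigl|g_d'(\alpha)-(-\zeta'(2\alpha)/\zeta(2\alpha))\bigr|=O(d^{1-2\alpha}\log d),
\]
with an implicit constant depending only on $\alpha$. Because $2\alpha>1$, this bound tends to $0$, which gives the stated limit.

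There is no real obstacle. The only care needed is the restriction $d\ge3$ in Proposition~\ref{prop:log-tail}, which is harmless for an asymptotic statement, and ensuring the denominator does not degenerate, which the bound $\Harm_{d,s}\ge1$ handles.
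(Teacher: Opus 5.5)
Your proposal is correct and follows the same route as the paper. Proposition~\ref{prop:harm-tail} and Proposition~\ref{prop:log-tail} give the two tail rates, Lemma~\ref{lem:zeta-derivative} identifies the limit as $-\zeta'(s)$, and a quotient estimate transfers the $O(d^{1-2\alpha}\log d)$ rate to $g_d'(\alpha)$. Your explicit numerator decomposition and the uniform bound $\zeta(s)\,\Harm_{d,s}\ge 1$ make precise the paper's remark that the denominator is bounded away from zero.
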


\begin{proof}
Let $s=2\alpha>1$.  Proposition~\ref{prop:harm-tail} gives $\Harm_{d,s}\to\zeta(s)$.  Lemma~\ref{lem:zeta-derivative} identifies
\[
\sum_{i=1}^{\infty}(\log i)i^{-s}=-\zeta'(s),
\]
and Proposition~\ref{prop:log-tail} gives convergence of the corresponding partial sums.  Therefore
\[
g_d'(\alpha)=\frac{\sum_{i=1}^{d}(\log i)i^{-2\alpha}}{\Harm_{d,2\alpha}}
\longrightarrow
\frac{-\zeta'(2\alpha)}{\zeta(2\alpha)}.
\]
The denominator converges to the positive finite number $\zeta(s)$, while the numerator and denominator tails are bounded by $O(d^{1-s}\log d)$ and $O(d^{1-s})$, respectively, by Propositions~\ref{prop:harm-tail} and~\ref{prop:log-tail}.  Dividing by a denominator bounded away from zero for all sufficiently large $d$ gives the stated $O(d^{1-2\alpha}\log d)$ rate.
\end{proof}

\end{document}